\documentclass{article}

\usepackage{microtype}
\usepackage{tikz-cd}
\usepackage{graphicx}
\usepackage{subfigure}
\usepackage{booktabs} 
\usepackage[dvipsnames]{xcolor} 

\usepackage{hyperref}

\usepackage[accepted]{icml2026}

\usepackage{amsmath}
\usepackage{amssymb}
\usepackage{mathtools}
\usepackage{amsthm}
\usepackage{comment}

\usepackage[capitalize,noabbrev]{cleveref}

\theoremstyle{plain}
\newtheorem{theorem}{Theorem}[section]
\newtheorem{proposition}[theorem]{Proposition}
\newtheorem{lemma}[theorem]{Lemma}
\newtheorem{corollary}[theorem]{Corollary}
\theoremstyle{definition}
\newtheorem{definition}[theorem]{Definition}

\theoremstyle{remark}
\newtheorem{remark}[theorem]{Remark}
\newtheorem{example}{Example}[section]

\newcommand{\B}{\mathcal{B}}
\newcommand{\id}{\textrm{id}}

\newcommand{\Cat}{\textrm{Cat}}
\newcommand{\A}{\mathcal{A}}

\newcommand{\tgt}{\textrm{Tgt}}

\newcommand{\rate}{\frac{d}{dt}}

\usepackage[textsize=tiny]{todonotes}

\icmltitlerunning{Reduction of Probabilistic Chemical Reaction Networks}

\begin{document}

\twocolumn[
\icmltitle{Reduction of Probabilistic Chemical Reaction Networks}



\icmlsetsymbol{equal}{*}

\begin{icmlauthorlist}
\icmlauthor{Mauricio Montes}{equal,xxx,yyy}
\icmlauthor{Gregoire Sergeant-Perthuis}{equal,yyy}
\end{icmlauthorlist}

\icmlaffiliation{xxx}{COSAM, Auburn University, Auburn, AL, USA}
\icmlaffiliation{yyy}{CQSB, Sorbonne Universite, Paris, France}

\icmlcorrespondingauthor{Mauricio Montes}{mauricio.montes@auburn.edu}

\icmlkeywords{Machine Learning, ICML}

\vskip 0.3in
]



\printAffiliationsAndNotice{} 

\begin{abstract}

Programming adaptive behaviors at the cellular level is a long-standing goal that raises the question of how probabilistic computation can be implemented in biochemical systems. Chemical reaction networks (CRNs) provide such a substrate and have been shown to realize probabilistic models, including hidden Markov models and factor graphs, with dynamics reproducing Bayesian inference and belief propagation. However, encoding these algorithms typically requires prohibitively large reaction networks, and classical CRN reduction techniques do not directly apply. By recovering the factor graph structure encoded in Napp--Adams-compiled CRNs, we transport recent factor-graph reduction results to their chemical implementations, obtaining significantly smaller CRNs while preserving the
belief-propagation fixed points on surviving variables.

\end{abstract}

\section{Introduction}


 Models of agency in biological organisms, such as the Bayesian brain hypothesis~\cite{KNILL2004712} and active inference~\cite{Friston2010, 10.7551/mitpress/12441.001.0001}, posit that decision making relies on maintaining and updating internal beliefs about the environment. Underlying these frameworks is the assumption that biological agents operate under incomplete information, which they acquire only indirectly through noisy and partial observations. As a consequence, adaptive behavior requires continuous inference about latent environmental states.

\textbf{Modeling agency \emph{in silico}.}
Incomplete information is also central to optimal control and, more broadly, to reinforcement learning, where belief updates are formalized through probabilistic inference algorithms such as filtering and belief propagation. These same inference mechanisms also play a central role in active inference~\cite{PEZZULO2024108741}. Translating these algorithms into biological substrates therefore appears as a necessary step toward cellular decision making design. A natural choice for achieving this goal is using chemical reaction networks (CRNs), denoted $\Gamma$, which are a list of reactions whose mass-action kinetics define an ordinary differential equation (ODE) for concentrations of its species, providing a continuous-time substrate for computation. Recent work has shown that CRNs can implement a range of probabilistic inference tasks: Bayesian posterior computation in algal phototaxis~\cite{COLLIAUX2017385}, filtering~\cite{PhysRevResearch.4.013120, PhysRevLett.126.128102}, belief propagation~\cite{NIPS2013_e5e63da7}, inference~\cite{10.1007/978-3-319-66799-7_14}, learning hidden Markov models~\cite{10.1098/rsif.2022.0877}, and classification~\cite{Moghimianavval2024, vanderLinden2022}. 
Information processing via chemical regulation is imperative to the creation of composable biochemical systems that can generalize to different tasks \cite{cardelli2018programmingdiscretedistributionschemical,brijder2018computingchemicalreactionnetworks, Tschantz764969, DBLP:journals/corr/abs-2109-11422, 10.1007/978-3-030-00030-1_12}. However, implementing these tasks with CRNs scales poorly as the number of distinct chemical species (and associated reactions/ODEs) increase, that is, as decision-making problems become more complex \cite{NIPS2013_e5e63da7,73126f2a-e40e-3d3e-93a2-fef7cb70f6e4, 10.1007/978-3-030-00030-1_12, 10.1098/rsif.2022.0877, 10.1007/978-3-319-66799-7_14}.

\paragraph{Inference with CRNs.}
In this article, we focus on inference in factor graphs, denoted by $\mathcal F$, which subsume main ingredients of the methods and models discussed above \cite{10.1098/rsif.2021.0031, NIPS2013_e5e63da7, 10.1098/rsif.2021.0031,  DBLP:journals/corr/abs-2109-11422, 10.1162/artl_a_00405}. Engineering biological systems to scale up to large sizes is an essential task at the intersection of these fields. Just as users do not need detailed knowledge of the intermediate steps of the internal calculations being done by a computer, being able to simplify and interpret designed biochemical systems to the inputs and outputs of interest is necessary to reduce the burden of control in the design of programmable cells \cite{Tschantz764969}.
Limitations to these approaches arise because such implementations typically rely on the control of the stoichiometry of a synthetic CRN. To that end, we leverage recent factor graph reduction techniques based on deformation retractions of topological spaces associated to factor graphs \cite{sergeantperthuis2025minimacriticalpointsbethe} and transport these reductions to their chemical implementation. 

\paragraph{Contributions} We develop a theory for recognizing and reducing CRNs that implement belief propagation via the Napp–Adams construction \cite{NIPS2013_e5e63da7}. The key insight is that these CRNs contain catalytic dependencies that are not visible from stoichiometry alone: messages are represented by species bundles, factor tables appear as rate parameters, and BP fixed points are encoded by bundle-normalized steady states. We make four contributions: (1) structural conditions (W1–W6) under which a mass-action CRN encodes a factor graph, together with a constructive reconstruction procedure (Theorem ~\ref{thm:recon}); (2) a proof that positive steady states correspond bijectively to BP fixed points under an additional recycling condition R1 (Theorem ~\ref{thm:BP-implementation}); (3) a functorial transport of SP–B factor-graph retractions to CRN reductions that delete entire message bundles while preserving BP fixed points on surviving variables (Proposition ~\ref{prop:functoriality}, Lemma ~\ref{thm:functor-BP}) — to our knowledge the first semantics-preserving reduction for BP-compiled CRNs; and (4) empirical validation across several graph families (Section~\ref{sec:simulations}). The scope of (1–3) is intentional: W1–W6 characterize the Napp–Adams compiled subclass, not generic biochemical networks, and the pipeline is closed under reduction by Proposition~\ref{prop:functoriality}. Table~\ref{tab:correspondence} (Appendix ~\ref{app:overview}) summarizes the correspondence between graphical and chemical concepts that (W1–W6) formalize. We recommend consulting it while reading Sections 4–6.

The invariant preserved here differs from classical CRN reductions.
Stoichiometric and Laplacian-based reductions preserve structural or
steady-state properties visible from reaction stoichiometry. In
Napp--Adams compiled CRNs, however, the BP message passing algorithm is carried by catalytic
dependencies and bundle-normalized concentration ratios. These dependencies
are not captured by stoichiometry alone, which is why we reduce at the
factor-graph level and then transport the reduction to the CRN.


\section{Main Results}
We study a class of probabilistic CRNs: mass-action networks whose positive steady states encode belief propagation (BP) fixed points, which give approximate marginals for an associated factor graph, reconstructed from the CRN’s message-bundle structure.
Our results formalize a pipeline that (1) compiles a factor graph into a probabilistic CRN,
(2) recognizes and reconstructs the underlying factor graph from a given CRN,
and (3) transports factor-graph reductions to the chemical setting in a way that preserves BP solutions.

\begin{figure}[h]
\centering
\[
\begin{tikzcd}[column sep=60pt, row sep=40pt,
               every label/.append style={font=\small}]
\mathcal{F}
  \arrow[r, "r"]
  \arrow[d, mapsto, "\mathcal{N}"']
& \mathcal{F}'
  \arrow[d, mapsto, "\mathcal{N}"]
\\
\Gamma
  \arrow[r, dashed, "\mathcal{N}(r)"']
& \Gamma'
\end{tikzcd}
\]
\caption{An SP--B retraction $r$ at the factor-graph level
induces a CRN reduction $\mathcal{N}(r)$ that preserves BP
fixed points on surviving variables
(Proposition~\ref{prop:functoriality},
Lemma~\ref{thm:functor-BP}).}
\label{fig:commutative}
\end{figure}

\paragraph{Compilation (Factor graphs $\to$ CRNs).}
Napp and Adams \cite{NIPS2013_e5e63da7} show how to implement message-passing inference in a CRN by introducing, for each message vector, a bundle of species whose relative concentrations represent that vector. Reactions are chosen so that the resulting mass-action ODE performs a continuous-time analogue of damped asynchronous message updates, converging to the same fixed points. In our paper we treat this construction as a compilation map from a factor graph $\mathcal F$  to a  CRN $\Gamma$ (formal definitions appear in Section \ref{sec:categories} and Appendix \ref{sec:proofs}).

\paragraph{Recognition and reconstruction (CRNs $\to$ Factor graphs).}
We give conditions (W1)--(W6) under which a CRN encodes a factor graph together with its factor tables,
and we provide an explicit reconstruction procedure $\mathrm{FG}(\Gamma)$ (Theorem~\ref{thm:recon}).

\paragraph{Steady states and BP fixed points.}
With one additional condition (R1) ensuring the correct normalization/recycling behavior,
we prove that positive steady states of $\Gamma$ correspond to BP fixed points on $\mathrm{FG}(\Gamma)$
(Theorem~\ref{thm:BP-implementation}).

\paragraph{Reduction commutes with compilation.}
Sergeant--Perthuis and Boitel (SP--B) \cite{sergeantperthuis2025minimacriticalpointsbethe} introduce a notion of \emph{retraction} for factor graphs,
obtained by deleting certain variable/factor nodes and updating the neighboring factor tables in a way that preserves
the set of BP fixed points on the remaining variables. We package these SP--B retractions as morphisms in our factor-graph
category (Section~\ref{sec:categories}). We then show that the Napp--Adams compilation is functorial:
an SP--B retraction $\mathcal F \to \mathcal F'$ induces a corresponding CRN reduction $\Gamma \to \Gamma'$ that removes entire
\emph{message bundles} and updates only the rate parameters
in the affected local subnetworks (Proposition~\ref{prop:functoriality}). Consequently, compiling after reducing yields the same
reduced CRN (up to relabeling) as reducing after compiling, and BP fixed points are preserved under this transport
(Lemma~\ref{thm:functor-BP}).

\paragraph{Empirical gains.}
Across our benchmark suite, SP--B retractions translate into large \emph{chemical} savings after Napp--Adams compilation, and these savings show up directly as faster ODE integration.
Figure~\ref{fig:headline-gains} illustrates the typical behavior: retracting reducible substructure substantially shrinks the compiled CRN and accelerates convergence of the concentration-derived marginals.
Table~\ref{tab:bench-family-summary} summarizes results by graph family.  Tree- and chain-structured instances exhibit near-complete collapse to a small core (about $95\%$ fewer variables and $97\%$ fewer chemical species at the median), yielding dramatic simulation speedups (median $270\times$ on chains and $886\times$ on trees).  Loopy-core instances with reducible tendrils see partial but still substantial compression (median $79\%$ variable and $73\%$ species reduction) and corresponding speedups (median $22\times$), reflecting that the irreducible loopy core must remain.  By contrast, grid graphs are essentially irreducible under these moves and show no change in size or runtime, serving as a  control.  Finally, random planted-core graphs admit only modest reduction (median $20\%$ variables and $10\%$ species) and therefore modest speedups (median $1.5\times$), consistent with having fewer retractable tendrils.
In all cases, we additionally monitor marginal agreement on the surviving variables (Section~\ref{sec:simulations}); the observed runtime gains coincide with preserved inferred marginals rather than a change in the underlying beliefs. Section~7 also reports Erd\H{o}s--R\'enyi stress tests over sparse,
near-threshold, and dense regimes, showing that reduction tracks the amount
of graph lying outside the irreducible loopy core rather than depending on
a planted construction.

\begin{figure}[t]
  \centering
  \includegraphics[width=0.98\linewidth, keepaspectratio]{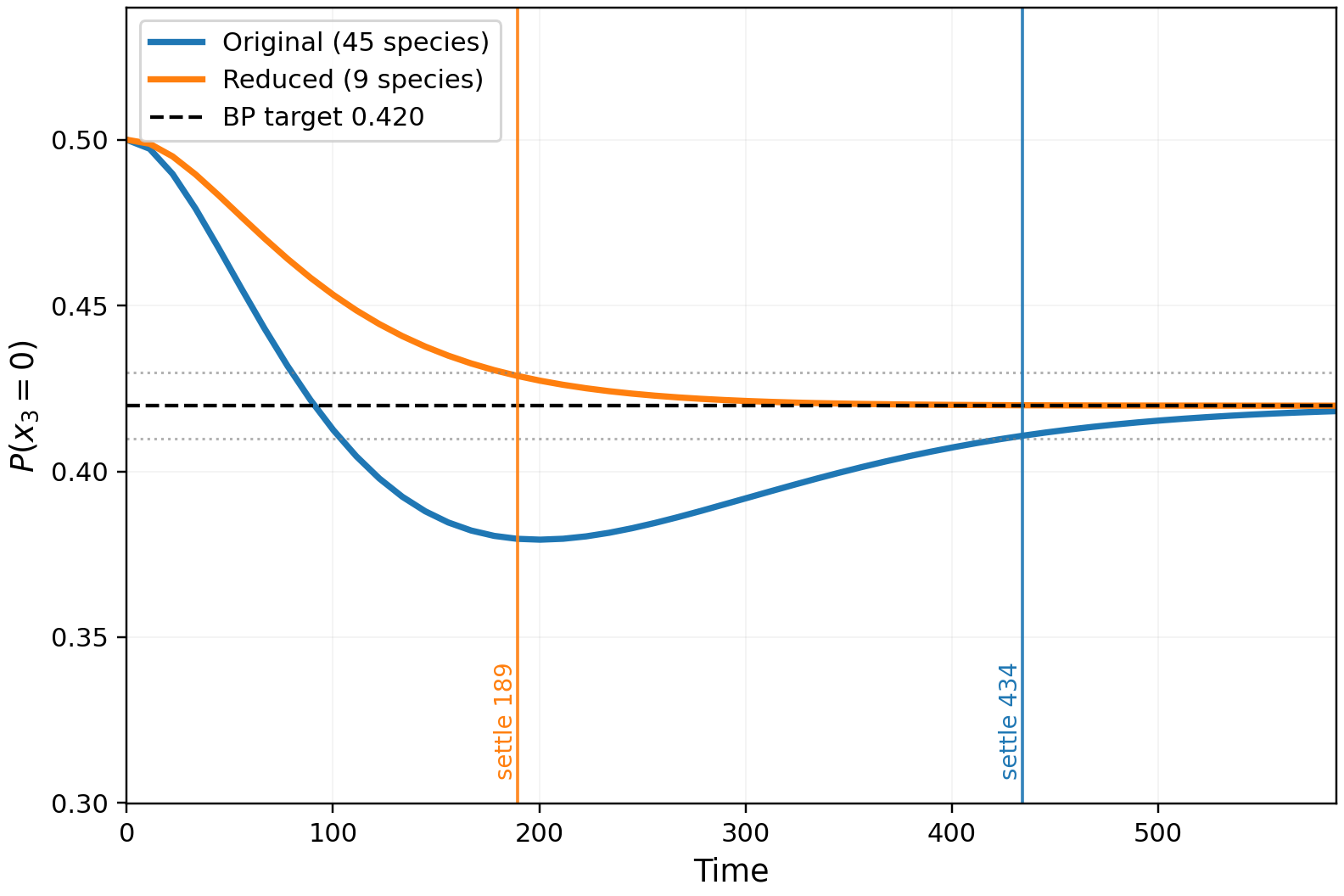}
  \caption{\textbf{Simulation Improvements}
  SP--B reductions substantially shrink compiled CRNs and yield large compilation and simulation-time speedups on our benchmark suite
  while preserving inferred marginals on surviving variables. Here we have an $80 \%$ reduction the size of the CRN (3-chain) and a $56 \%$ speed up in convergence to the correct marginal (Section~\ref{sec:simulations}).}
  \label{fig:headline-gains}
\end{figure}

\begin{table}[t]
  \caption{Median compression and simulation speedup of a structured benchmark suite.}
  \label{tab:bench-family-summary}
  \begin{center}
    \begin{small}
      \begin{sc}
        \begin{tabular}{lrrrr}
          \toprule
          Family & $n$ & Var$\downarrow$ (\%) & Species$\downarrow$ (\%) & Sim $\times$ \\
          \midrule
          \textcolor{RoyalBlue}{\textbullet}~Chain  & 5  & 95.0 & 97.0 & 270.3 \\
          \textcolor{ForestGreen}{\textbullet}~Tree & 4  & 95.1 & 97.5 & 885.5 \\
          \textcolor{Orange}{\textbullet}~Loopy     & 12 & 79.2 & 73.2 & 21.6 \\
          \textcolor{Purple}{\textbullet}~Grid      & 4  & 0.0  & 0.0  & 1.0 \\
          \textcolor{Red}{\textbullet}~Random       & 3  & 20.0 & 10.3 & 1.5 \\
          \bottomrule
        \end{tabular}
      \end{sc}
    \end{small}
  \end{center}
  \vskip -0.1in
\end{table}

\section{Background}

\paragraph{Factor graphs.}
We consider discrete probabilistic models whose joint distribution is a product of local functions.
A \emph{factor graph} makes this factorization explicit and provides a convenient interface for message-passing inference.
\begin{definition}[Factor graph \cite{NIPS2000_61b1fb3f}]
A factor graph $\mathcal{F}=(\mathcal{H},E,f)$ consists of a hypergraph $\mathcal{H}=(I,J)$ with variable indices $I$ and factor indices $J$,
a collection of finite state spaces $(E_i)_{i\in I}$, and nonnegative factors
$\psi_j:\prod_{i\in j}E_i\to \mathbb{R}_{\ge 0}$ for each $j\in J$.
\end{definition}
The factors define a global probability distribution by
\[
p(\mathbf{x})=\frac{1}{Z}\prod_{j\in J} \psi_j(x_j),
\qquad x_j := (x_i)_{i\in j},
\]
where $Z$ is the normalizing constant. We depict $\mathcal F$ as a bipartite graph with variable nodes $i\in I$ and factor nodes $j\in J$,
with an edge between $j$ and $i$ whenever $i\in j$.

\paragraph{Belief propagation.}
Belief propagation (BP) is a message-passing dynamic program that computes exact marginals on tree-structured factor graphs and is widely used as an approximation on graphs with cycles (loopy BP) \cite{NIPS2000_61b1fb3f}.
BP maintains messages along edges: variable-to-factor messages $P^{(i\to j)}\in \mathbb{R}_{\ge 0}^{E_i}$ and factor-to-variable messages $S^{(j\to i)}\in \mathbb{R}_{\ge 0}^{E_i}$.
For $k\in E_i$,
\begin{align*}
S^{(j\to i)}_k
&=\sum_{x_{j\setminus i}} f_j(x_j)\prod_{i'\in j\setminus i} P^{(i'\to j)}_{x_{i'}},
&
\\ P^{(i\to j)}_k
&=\prod_{j'\in \mathrm{ne}(i)\setminus j} S^{(j'\to i)}_k .
\end{align*}
An (unnormalized) belief at variable $i$ is \newline $\tilde p(x_i=k)=\prod_{j\in \mathrm{ne}(i)} S^{(j\to i)}_k$.

\paragraph{Chemical reaction networks.}
A CRN is a finite set of species and reactions equipped with rate constants.
\begin{definition}[CRN]
A CRN is a triple $\Gamma=(X,R,\kappa)$ where $X$ is a finite set of species, $R$ is a finite set of reactions, and $\kappa:R\to\mathbb{R}_{>0}$ assigns a positive rate to each reaction.
\end{definition}
Under mass-action kinetics, concentrations $x(t)\in \mathbb{R}_{\ge 0}^{X}$ evolve according to an ODE  determined by $(X,R,\kappa)$.
In this work, CRNs serve as a dynamical substrate for implementing and reducing probabilistic inference.

\section{Recognition Conditions and Reconstruction}

We introduce six local conditions on the CRN, (W1)--(W6), that can be checked
directly in terms of stoichiometry and rate polynomials. Together they guarantee
that a factor-graph structure exists and can be reconstructed. 

Before stating the recognition conditions, we briefly indicate their role. 
They abstract the bundle-and-rate pattern of the Napp--Adams compilation: 
messages become species bundles, factor tables become catalyzed production 
rates, and recycling supplies the normalization needed to recover BP fixed 
points at steady state. A fully worked three-variable chain example appears 
in Appendix~\ref{ex:chain-mixed}.

\subsection{Stoichiometric conditions (W1--W3)}

Let $\Gamma = (X,R, \kappa)$ be a CRN. A reaction $R_k \in R$ is expressed as $R_k:\ \sum_{i=1}^N r_{ik} X_i \longrightarrow \sum_{i=1}^N p_{ik} X_i$ with coefficients $r_{ik},p_{ik}\in\mathbb N$. The \emph{stoichiometric matrix} of $\Gamma$ is the $N\times K$ matrix
$S=(x_{ik})$ with entries $x_{ik} \;=\; p_{ik}-r_{ik}$,
so that column $k$ encodes the net change of species in reaction $R_k$.

\paragraph*{(W1) Stoichiometric block decomposition.}
There exists a finite index set $\mathcal B$ and a partition of the set of species into disjoint pieces $S_B$
\[
  X \;=\; \bigsqcup_{B\in\mathcal B} S_B
\]
such that, for each reaction $R_k$ with net stoichiometric column
$(x_{ik})_{i=1}^N$, the support
\[
  \{\,i : x_{ik} \neq 0\,\}
\]
is contained in a single block $S_B$. In other words, every reaction has
nonzero net stoichiometry in at most one block.

\paragraph*{(W2) Distinguished zero species in each block.}
For each block $S_B$ in the partition of \textup{(W1)} there exists a
distinguished species $X_{B,0}\in S_B$, called the \emph{zero species} of
$S_B$. We require that $S_B\setminus\{X_{B,0}\}$ be nonempty and call its elements the \emph{coordinate species} of $S_B$.

\paragraph*{(W3) Star--shaped internal stoichiometry.}
For each block $S_B$ and each reaction $R_k$ whose net stoichiometry is
supported in $S_B$, 
the restriction of the column $(x_{ik})$ to $S_B$ is either
\[
  -X_{B,0} + X_{B,k}
  \quad\text{or}\quad
  X_{B,0} - X_{B,k}.
\]

Under \textup{(W1)--(W3)}, the species of $\Gamma$ split into disjoint
blocks $S_B$, each with a unique zero species and several coordinate
species, and all reaction columns are intra-block and of
zero$\leftrightarrow$coordinate form.

\begin{remark}\label{rem:bundles}
If \textup{(W1)--(W3)} hold, we refer to each block $S_B$ as a
\emph{bundle} and write
\[
  B \;=\; \{X_{B,0}\} \,\sqcup\, \{X_{B,k}\}_{k\in E_B},
\]
where $E_B$ is the index set for the coordinate species of $B$.
Condition \textup{(W3)} then says that, within a bundle $B$, every
reaction column $k$ has nonzero entries only in
$B$ and its restriction to $B$ is either
\[
  -X_{B,0} + X_{B,k}
  \quad\text{or}\quad
  X_{B,0} - X_{B,k},
\]
for a unique coordinate $X_{B,k}\in B\setminus\{X_{B,0}\}$.
\end{remark}

\begin{definition}[Catalytic neighbors and assignments]\label{def:cat-assign}
Let $\Gamma = (X,R,\kappa)$ be a CRN with stoichiometric matrix $S$ satisfying (W1-W3).

\begin{enumerate}
  \item For a bundle $B$, the \emph{catalytic neighbor set} $\Cat(B)$ is the set
   of bundles $Q\neq B$ such that there exists a reaction $r\in R$
   whose net stoichiometry is supported in $B$ and in which species of $Q$
   appear with zero net stoichiometry.

  \item For each $B$, write $E_Q$ for the coordinate index
  set of a bundle $Q\in\Cat(B)$, and define the \emph{assignment set}
  \[
    \mathcal A_B \;:=\; \prod_{Q\in\Cat(B)} E_Q.
  \]
  An element $a\in\mathcal A_B$ is a choice $a(Q)\in E_Q$ of one
  coordinate for each catalytic neighbor $Q\in\Cat(B)$.
\end{enumerate}
\end{definition}

\subsection{Rate Conditions (W4--W6)}
Let $B$ be a bundle with zero species $X_{B,0}$ and coordinate species
$\{X_{B,k}\}_{k\in E_B}$, and let $\Cat(B)$ and $\mathcal A_B$ be as in
Definition~\ref{def:cat-assign}. For each $k\in E_B$, let $\rate [X_{B,k}]^+$ denote
the total \emph{positive} contribution to $\frac{d}{dt}[X_{B,k}]$ coming from
reactions whose net stoichiometry restricted to $B$ is $X_{B,0}\to X_{B,k}$.
\paragraph{(W4) Separability of production rates.}
We say that $\Gamma$ satisfies \textup{(W4)} if for every bundle $B$ and every $k\in E_B$, the function $\rate [X_{B,k}]^+$ is a polynomial in the
concentrations of the catalytic neighbor coordinates, with \emph{at most one}
coordinate species from each catalytic neighbor bundle appearing in any
monomial. That is, there exist coefficients $C_B(k;a)\ge 0$ indexed by
$k\in E_B$ and assignments $a\in\mathcal A_B$ such that
\[
  \rate [X_{B,k}]^+
  \;=\;
  [X_{B,0}] \sum_{a\in\mathcal A_B} C_B(k;a)\prod_{Q\in\Cat(B)}[X_{Q,a(Q)}].
\]

\begin{definition}[Sum-like vs product-like production]\label{def:sum-prod-like}
Assume \textup{(W1)--(W4)} and let $C_B(k;a)$ be the assignment table of bundle $B$.

\begin{enumerate}
\item We say $B$ is \emph{weakly product-like} if for every $k\in E_B$ there exists
at most one assignment $a^{(k)}\in\mathcal A_B$ such that $C_B(k;a^{(k)})>0$,
and $C_B(k;a)=0$ for all $a\neq a^{(k)}$.

\item We say $B$ is \emph{sum-like} if there exists some $k\in E_B$ and two
distinct assignments $a\neq a'$ in $\mathcal A_B$ such that
$C_B(k;a)>0$ and $C_B(k;a')>0$.
\end{enumerate}
\end{definition}

\begin{definition}[Assignment table]\label{def:assign-table}
Suppose $\Gamma$ is a CRN satisfying \textup{(W1)--(W4)}. For each bundle $B$ and $k\in E_B$, write the
positive production part $\rate [X_{B,k}]^+$ as a polynomial in
the catalytic coordinates $\{[X_{Q,i}]\}_{Q\in\Cat(B),\, i\in E_Q}$.
For each assignment
$a\in\mathcal A_B$
let $C_B(k;a)$ be the coefficient of the monomial
$[X_{B,0}]\prod_{Q\in\Cat(B)}[X_{Q,a(Q)}].$
\end{definition}

\paragraph{(W5) Coherent production for product-like bundles}
Let $P$ be a bundle with $\Cat(P) \neq \varnothing$ and assignment table $C_P(k; a)$ 
from (W4). We say that $P$ is \emph{product-like} if the following conditions hold:

\begin{enumerate}
    \item \textbf{(Unique supporting assignment)} 
    For each $k \in E_P$, there exists \textbf{exactly one} assignment 
    $a^{(k)} \in \mathcal{A}_P$ such that $C_P(k; a^{(k)}) > 0$, and 
    $C_P(k; a) = 0$ for all $a \neq a^{(k)}$.
    
    \item \textbf{(Diagonal structure via bijections)} 
    For each catalytic neighbor $Q \in \Cat(P)$, there exists a bijection 
    $\theta_{P,Q} : E_P \to E_Q$ such that for all $k \in E_P$:
    \[
        a^{(k)}(Q) = \theta_{P,Q}(k).
    \]
\end{enumerate}

\noindent

\paragraph*{(W6) Factor structure for sum-like bundles}
Assume there exists a CRN satisfying (W1)--(W5). Condition (W6) requires that the sum-like bundles of $\Gamma$ 
can be partitioned into \emph{factor classes}, each equipped with auxiliary data 
satisfying the following properties.

\medskip

There exists a partition of the set of sum-like bundles into disjoint classes 
$\{J_1, J_2, \ldots\}$. For each factor class $J$, the \textbf{factor-class data} 
consists of:
\begin{itemize}
    \item a finite set $V_J$ of product-like bundles (the \emph{incident product bundles}),
    \item for each $Q \in V_J$, a bijection 
          $\iota_{J \to Q} : E_Q \xrightarrow{\cong} E_{B_{J \to Q}}$
          (the \emph{state-space identification}),
    \item a strictly positive function $\psi_J : \prod_{Q \in V_J} E_Q \to (0, \infty)$ 
          (the \emph{factor table}),
    \item positive scalars $\{\kappa_{J \to Q} > 0\}_{Q \in V_J}$,
\end{itemize}
such that the following conditions hold:

\textbf{(W6.1)} \textbf{(Missing-one-neighbor pattern and targets)}
    
    We have $|J| = |V_J|$, and for each $Q \in V_J$ there exists a unique bundle 
    $B_{J \to Q} \in J$ with
    \[
        \Cat(B_{J \to Q}) = V_J \setminus \{Q\}.
    \]
    This establishes a bijection between $J$ and $V_J$. We define the \emph{target} 
    of $B_{J \to Q}$ to be
    \[
        \tgt(B_{J \to Q}) := Q,
    \]
    and refer to $Q$ as the \emph{target bundle} of $B_{J \to Q}$.
    
\textbf{(W6.2)} \textbf{(Target-compatible state spaces)}
    
    For each $Q \in V_J$, the bijection $\iota_{J \to Q} : E_Q \to E_{B_{J \to Q}}$ 
    witnesses that $|E_Q| = |E_{B_{J \to Q}}|$. Note that different $Q, Q' \in V_J$ 
    may have $|E_Q| \neq |E_{Q'}|$, corresponding to variables of different cardinalities.
    
\textbf{(W6.3)} \textbf{(Clamped common table)}
    
    For each $Q \in V_J$, each $k \in E_Q$, and each assignment 
    $a \in \prod_{Q' \in V_J \setminus \{Q\}} E_{Q'}$, the assignment table of 
    $B = B_{J \to Q}$ satisfies
    \begin{align*}
       &C_{B}\bigl(\iota_{J \to Q}(k);\, a\bigr) 
        \;=\; \\&\kappa_{J \to Q} \cdot 
        \psi_J\bigl(x_Q = k,\, x_{Q'} = a(Q') \text{ for } Q' \neq Q\bigr).
    \end{align*}
    Here $k \in E_Q$ is a state of the target product bundle, 
    $\iota_{J \to Q}(k) \in E_B$ is the corresponding state in the sum bundle, 
    and $a$ assigns states to the remaining product bundles in $V_J \setminus \{Q\}$.

\begin{definition}[Sum bundles and product bundles]\label{def:sum-product-bundles}
Assume $\Gamma$ satisfies (W1)--(W6).
\begin{enumerate}
    \item A \emph{sum bundle} is any sum-like bundle, i.e., a bundle $B_{J \to Q}$ 
          appearing in some factor class $(J, V_J)$.
    \item A \emph{product bundle} is any bundle $P$ that lies in $V_J$ for some 
          factor class $(J, V_J)$---equivalently, $P$ is a catalytic neighbor 
          of some sum bundle.
\end{enumerate}
\end{definition}

\begin{remark}[Bundle extraction from W1--W3]\label{prop:bundles}
Under \textup{(W1)--(W3)} the species set $X$ decomposes into disjoint bundles
$S_B$ with distinguished zero and coordinate species, and every internal
reaction has the form $X_{B,0}\to X_{B,k}$ plus catalytic species from other
bundles. In particular, the bundle structure and the classification into sum vs.
product bundles can be recovered purely from stoichiometry and conserved
quantities.
\end{remark}

\begin{theorem}[CRN $\to$ Factor Graph Reconstruction]\label{thm:recon}
Let $\Gamma = (X, R, \kappa)$ be a mass-action CRN satisfying \textup{(W1)--(W6)}.
Then one can reconstruct a factor graph $\mathrm{FG}(\Gamma) = (V,E, H)$ as follows:
\begin{itemize}
    \item The factor nodes are the factor classes $J$ from \textup{(W6)}.
    \item The variable nodes are the variable classes $[P]$ under $\sim_v$ 
          \textup{(Definition~\ref{def:variable-equivalence})}.
    \item A variable class $[P]$ is incident to a factor class $J$ if and only 
          if $[P] \cap V_J \neq \varnothing$.
    \item The state space of variable class $[P]$ is $E_{[P]} := E_P$ for any 
          representative $P \in [P]$.
    \item The factor table $\psi_J : \prod_{Q \in V_J} E_Q \to (0,\infty)$ from 
          \textup{(W6)} serves as the potential for factor node $J$.
\end{itemize}
Moreover, the incidence relation is well-defined: distinct elements of $V_J$ 
belong to distinct variable classes, establishing a bijection between $V_J$ and 
the variable classes incident to $J$. Each factor table $\psi_J$ is determined 
by $\Gamma$ up to multiplication by a positive constant.
\end{theorem}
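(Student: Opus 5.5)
The proof is essentially a well-definedness check. The construction in the statement takes its data directly from (W6) and from the equivalence $\sim_v$ of Definition~\ref{def:variable-equivalence}. I will assume that definition is the equivalence relation on product bundles generated by the bijections $\theta_{P,Q}$ of (W5), i.e.\ $P\sim_v P'$ when they are linked by a chain of diagonal catalytic dependencies, so that they represent copies of messages attached to the same variable. The plan has four steps, in this order:
\begin{enumerate}
\item Show that the factor and variable node sets are well defined.
\item Show that the state spaces $E_{[P]}$ are well defined up to canonical bijection.
\item Prove the incidence claim: distinct elements of $V_J$ lie in distinct variable classes.
\item Prove uniqueness of $\psi_J$ up to a positive scalar.
\end{enumerate}

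For the first two steps, Remark~\ref{prop:bundles} tells us that the bundles and the sum/product classification are recovered from stoichiometry alone. The factor classes $J$ and their data $V_J$ are supplied by (W6), and (W6.1) makes $J\leftrightarrow V_J$ a bijection via $Q\mapsto B_{J\to Q}$. For any $P\sim_v P'$, composing the bijections $\theta$ along the connecting chain gives a bijection $E_P\cong E_{P'}$. Hence $E_{[P]}:=E_P$ is defined up to this identification. I would also check that the identification is path-independent, or else fix a representative once and for all; the statement only requires "any representative", so a cardinality-level identification suffices.

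The incidence claim is where I expect the main obstacle. Suppose $Q\neq Q'$ in $V_J$ with $Q\sim_v Q'$. Consider the sum bundle $B_{J\to Q}$. By (W6.1), $\Cat(B_{J\to Q})=V_J\setminus\{Q\}\ni Q'$, so $Q'$ feeds the message toward $Q$. In the Napp--Adams pattern, a product bundle $P=P^{(i\to j)}$ has as catalysts exactly the sum bundles $S^{(j'\to i)}$ with $j'\neq j$. Two product bundles in the same class therefore correspond to the same variable, which gives the same variable being incident twice to $J$; the definition of a factor graph's hypergraph excludes this.

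To make this rigorous without the compiler, I would argue directly from the definition of $\sim_v$. A product bundle equivalent to $Q$ has catalysts that are sum bundles sharing targets in $[Q]$. I would first try to show that if $Q,Q'\in V_J$ were equivalent, then $B_{J\to Q}$ would have to appear among the catalysts defining $\sim_v$. That would force $B_{J\to Q}$ to target $Q'$ as well, contradicting the uniqueness of targets in (W6.1). If Definition~\ref{def:variable-equivalence} already includes a condition forbidding this, I would simply cite it. Given the incidence claim, the map $V_J\to\{[P]:[P]\cap V_J\neq\varnothing\}$, $Q\mapsto[Q]$, is surjective by definition and injective by the claim, so it is a bijection.

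For uniqueness of $\psi_J$, by (W6.3) the coefficients $C_B(\iota_{J\to Q}(k);a)$ are read off from the mass-action polynomials, and these are determined by $\Gamma$. Any admissible data $(\psi_J,\kappa)$ and $(\psi'_J,\kappa')$ satisfy $\kappa_{J\to Q}\psi_J=\kappa'_{J\to Q}\psi'_J$ on all of $\prod_{Q''\in V_J}E_{Q''}$, since each full configuration $(k,a)$ is hit. Fix one $Q$; then $\psi'_J=(\kappa_{J\to Q}/\kappa'_{J\to Q})\psi_J$, where the constant is positive and independent of the configuration. Any other $Q$ gives the same ratio, because both sides are the same function. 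Finally, $\iota$ could be replaced by another admissible bijection only in a way consistent with (W6.3). If that ambiguity matters, I would note that it is a relabeling of states, which is absorbed into "up to relabeling".
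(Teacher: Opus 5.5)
\textbf{Verdict: there is a genuine gap.} The unproved step is the incidence claim, which is the one substantive assertion of the theorem, and as stated it cannot be derived from (W1)--(W6).

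First, a smaller slip. The relation $\sim_v$ is not generated by the $\theta_{P,Q}$, which relate a product bundle to a \emph{sum} bundle. By Definition~\ref{def:variable-equivalence}, $P\sim P'$ iff some sum bundle $B$ has $\tgt(B)=P$ and $B\in\Cat(P')$, or symmetrically. The identification along a link is therefore $\iota_{J\to P}^{-1}\circ\theta_{P',B}$, using (W6.2) and (W5.2). This slip is harmless for the state spaces.

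The incidence argument itself does not go through. Your first argument imports the Napp--Adams catalyst pattern $\Cat(\mathsf P^{i\to j})=\{\mathsf S^{j'\to i}\}_{j'\neq j}$, which is not among (W1)--(W6). It then invokes the hypergraph axioms of a factor graph, which is circular: that the reconstructed object is a genuine factor graph is exactly what must be shown. Your proposed rigorous route fails for two reasons. First, $\sim_v$ is a transitive closure, so a chain from $Q$ to $Q'$ can run entirely through sum bundles of other factor classes and never involve $B_{J\to Q}$. Second, when $B_{J\to Q}$ does witness a link, the definition only asks that it target one endpoint and catalyze the other. Nothing forces it to target $Q'$.

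No argument from (W1)--(W6) alone can work, as the following example shows.
\begin{itemize}
\item Take one factor class $J=\{S,S'\}$ with $V_J=\{Q,Q'\}$ and all four bundles binary.
\item Sum production: $S_0+Q'_a\to S_k+Q'_a$ at rate $\psi(k,a)$, and $S'_0+Q_a\to S'_k+Q_a$ at rate $\psi(a,k)$.
\item Product production: $Q_0+S'_k\to Q_k+S'_k$ and $Q'_0+S_k\to Q'_k+S_k$, both at rate $\kappa_P$.
\item Recycling $X_k\to X_0$ in every bundle.
\end{itemize}
This CRN satisfies (W1)--(W6) and even (R1), with $S=B_{J\to Q}$ and $S'=B_{J\to Q'}$. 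It is BP for a pairwise factor whose two slots carry the same variable. Yet $\tgt(S')=Q'$ and $S'\in\Cat(Q)$ give $Q\sim Q'$, so $J$ meets only a single variable class.

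The paper's proof does not supply the missing idea either. At exactly this point it appeals to ``the Napp--Adams structure'', namely that $\Cat(Q')$ consists of sum bundles targeting the same variable as $Q'$. Its chain argument is phrased in terms of bundles ``representing the same variable'', a notion (W1)--(W6) do not define.

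To close the step you need an extra hypothesis. One option is to restrict to CRNs in the image of $\mathcal N$. Another is to require a labeling $\nu$ of product bundles such that $\nu(\tgt(B))=\nu(P')$ whenever a sum bundle $B$ lies in $\Cat(P')$, and $\nu$ is injective on each $V_J$. For $\mathcal N(A,H)$, take $\nu(\mathsf P^{n\to j})=n$. Induction along $\sim$-chains then makes $\nu$ a $\sim_v$-invariant, and the incidence claim follows immediately.

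Your uniqueness argument for $\psi_J$ is correct and matches the paper's.
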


\section{Reconstruction of the Belief Propagation Algorithm}
\subsection{Equivalence relations for factors and variables}

\begin{definition}[Variable equivalence on product bundles]\label{def:variable-equivalence}
Assume (W1)--(W6). Let $\mathcal{P}$ be the set of product bundles and 
$\mathcal{S}$ be the set of sum bundles.

Define a binary relation $\sim$ on $\mathcal{P}$ by: for $P, P' \in \mathcal{P}$,
\newline $P \sim P'$ if and only if:
\begin{align*}
    \exists\, B \in \mathcal{S} \text{ such that: } &
    \Bigl[\tgt(B) = P \text{ and } B \in \Cat(P')\Bigr]
     \\ \;\text{ or }\; &
    \Bigl[\tgt(B) = P' \text{ and } B \in \Cat(P)\Bigr].
\end{align*}

Let $\sim_v$ be the equivalence relation on $\mathcal{P}$ generated by $\sim$
(the reflexive, symmetric and transitive closure).

The equivalence classes of $\mathcal{P}$ under $\sim_v$ are called 
\emph{variable classes}.
\end{definition}

\begin{lemma}\label{lem:var-equiv}
The relation $\sim_v$ of Definition~\ref{def:variable-equivalence} is an 
equivalence relation on $\mathcal{P}$.
\end{lemma}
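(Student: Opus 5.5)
The plan is to unwind the definition. $\sim_v$ is \emph{defined} as the reflexive, symmetric and transitive closure of the binary relation $\sim$ on $\mathcal{P}$, so the task is to show that this closure exists, is well-defined, and has all three properties. I will construct it explicitly and then verify each property.

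For $P,P'\in\mathcal{P}$, set $P\sim_v P'$ if and only if there is a finite sequence $P=P_0,P_1,\dots,P_n=P'$ in $\mathcal{P}$, with $n\ge 0$, such that for each $t$ either $P_{t-1}\sim P_t$ or $P_t\sim P_{t-1}$. Since $\mathcal{P}$ is a finite set of bundles (by (W1), $X$ is finite and partitioned into bundles), this is a well-defined relation on $\mathcal{P}$.

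\begin{itemize}
\item \emph{Reflexivity.} Take the sequence of length $n=0$.
\item \emph{Symmetry.} Reverse a connecting sequence. Each link ``$P_{t-1}\sim P_t$ or $P_t\sim P_{t-1}$'' is symmetric in its two endpoints. In fact $\sim$ is already symmetric, because its defining clause is a disjunction whose two disjuncts are exchanged by swapping $P$ and $P'$.
\item \emph{Transitivity.} Concatenate a sequence from $P$ to $P'$ with one from $P'$ to $P''$.
\end{itemize}

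Next I will check that this relation is the smallest equivalence relation containing $\sim$. Any equivalence relation containing $\sim$ must contain every such chain, by induction on $n$ using symmetry and transitivity. So the explicit construction coincides with the generated relation of Definition~\ref{def:variable-equivalence}. Equivalently, one can take the intersection of all equivalence relations on $\mathcal{P}$ containing $\sim$. This family is nonempty, since it contains $\mathcal{P}\times\mathcal{P}$, and an intersection of equivalence relations is again an equivalence relation.

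None of these steps is a real obstacle; the statement is essentially tautological. The one point needing care is that $\sim$ refers to $\tgt(B)$ and to $\Cat(P')$. These are well-defined by (W6.1), since each sum bundle $B$ has a unique target, and by Definition~\ref{def:cat-assign}. This makes $\sim$ a genuine binary relation on $\mathcal{P}$, so the closure argument applies.
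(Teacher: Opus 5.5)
Your proposal is correct and matches the paper's proof, which simply notes that the reflexive, symmetric and transitive closure of any binary relation is an equivalence relation. Your explicit chain construction, and your check that it is the smallest such relation, just spell out that standard fact in more detail.
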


\begin{lemma}\label{lem:var-state-space}
If $P \sim_v P'$, then $|E_P| = |E_{P'}|$.
\end{lemma}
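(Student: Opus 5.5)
Since $\sim_v$ is by Definition~\ref{def:variable-equivalence} the reflexive, symmetric and transitive closure of the generating relation $\sim$ (and is an equivalence relation by Lemma~\ref{lem:var-equiv}), it suffices to show that the generating relation preserves cardinality: if $P \sim P'$ then $|E_P| = |E_{P'}|$. Equality of cardinalities is itself reflexive, symmetric and transitive, so the property then passes to the closure. Concretely, $P \sim_v P'$ means there is a finite chain $P = P_0, P_1, \dots, P_n = P'$ with $P_{m} \sim P_{m+1}$ or $P_{m+1}\sim P_m$ for each $m$, and induction on $n$ finishes the argument.

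For the generating step, by the symmetry of the definition of $\sim$ we may assume there is a sum bundle $B \in \mathcal{S}$ with $\tgt(B) = P$ and $B \in \Cat(P')$. We link $E_P$ and $E_{P'}$ through $E_B$.

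\textbf{First link.} By (W6.1), $B = B_{J\to P}$ for the factor class $J$ containing $B$, with $P \in V_J$. By (W6.2), the bijection $\iota_{J\to P} : E_P \to E_B$ gives $|E_P| = |E_B|$.

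\textbf{Second link.} Since $B \in \Cat(P')$, the set $\Cat(P')$ is nonempty. Because $P'$ is a product bundle, it satisfies (W5), which supplies a bijection $\theta_{P',B} : E_{P'} \to E_B$. Hence $|E_{P'}| = |E_B|$, and combining the two links gives $|E_P| = |E_{P'}|$.

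The only point needing care, and the main obstacle, is justifying that (W5) actually applies to $P'$. Definition~\ref{def:sum-product-bundles} calls $P'$ a product bundle because it lies in some $V_{J'}$, and (W6) requires elements of $V_{J'}$ to be product-like bundles, i.e.\ bundles satisfying (W5). The bijection clause of (W5) then requires only $\Cat(P') \neq \varnothing$, which holds because $B \in \Cat(P')$. I would state this explicitly, and also note that the roles of $B$ as target-carrier for $P$ and as catalyst for $P'$ come from different conditions, (W6) and (W5) respectively, and need no compatibility between them.
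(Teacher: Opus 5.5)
Your proof is correct and takes the same route as the paper's: you reduce to the generating relation $\sim$, then link $E_P$ and $E_{P'}$ through $E_B$, using $\iota_{J\to P}$ from (W6.2) on one side and $\theta_{P',B}$ from (W5) on the other. Your extra justification that (W5) applies to $P'$ spells out a step the paper only asserts: elements of $V_{J'}$ are required to be product-like, and $\Cat(P')\neq\varnothing$ because $B\in\Cat(P')$.
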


Conditions (W1)--(W6) characterize when a CRN encodes a factor graph and its
factor tables. To realize the Belief Propagation dynamics on the reconstructed
factor graph, we impose one additional condition on the CRN:

\paragraph{(R1) Bundlewise recycling and uniform product production.}
For each bundle $B$ with zero species $X_{B,0}$ and coordinates
$\{X_{B,k}\}_{k\in E_B}$ there exists a constant $\rho_B>0$ such that, for every
$k\in E_B$, the CRN contains an internal (non-catalyzed) recycling reaction
\[
  X_{B,k} \xrightarrow{\ \rho_B\ } X_{B,0},
\]
and these are the only internal reactions whose net stoichiometry is
$X_{B,k}\to X_{B,0}$.

Moreover, for each product bundle $P$, there exists a constant $\kappa_P>0$
such that for every $k\in E_P$,
\[
  C_P(k;a^{(k)})=\kappa_P,
\]
where $a^{(k)}$ is the unique assignment supporting production of $P_k$ given by
\textup{(W5)}.
Equivalently, every producing reaction with internal net stoichiometry $P_0\to P_k$
has rate constant $\kappa_P$, independent of $k\in E_P$.

\begin{theorem}[BP Implementation]\label{thm:BP-implementation}
Let $\Gamma = (X, R, \kappa)$ be a mass-action CRN satisfying \emph{(W1)--(W6)} and \emph{(R1)}, 
and let $\mathrm{FG}(\Gamma)$ be the factor graph with factor tables $\{\psi_J\}$ reconstructed 
by Theorem~\ref{thm:recon}. Then the positive steady states of $\Gamma$ correspond 
bijectively to BP fixed points on $\mathrm{FG}(\Gamma)$.

More precisely, at any positive steady state. For each factor class $J$ and 
target $Q \in V_J$, write $A_Q = \prod_{Q' \in V_J \setminus \{Q\}} E_{Q'}$ 
for the set of state assignments to the remaining variables, and write 
$\psi_J(k,\, a) := \psi_J\bigl(x_Q = k,\, x_{Q'} = a(Q') \text{ for all } 
Q' \in V_J \setminus \{Q\}\bigr)$ for the corresponding factor-table entry.

\begin{enumerate}
\item For each sum bundle $B = B_{J \to Q}$ associated to factor class $J$ and 
target product bundle $Q \in V_J$, the steady-state concentrations satisfy
\begin{align}\label{eq:sum-steady}
\frac{\rho_B}{[B_0]}[B_k] 
= \kappa_{J \to Q} 
\sum_{a \in A_Q} 
\psi_J(k,\, a)
\prod_{Q' \in V_J\setminus\{Q\}} [Q'_{a(Q')}]
\end{align}
for all $k \in E_Q$.

\item For each product bundle $P$, the steady-state concentrations satisfy
\begin{equation}\label{eq:prod-steady}
\frac{\rho_P}{\kappa_P [P_0]}[P_k] = \prod_{B \in \Cat(P)} [B_k]
\end{equation}
for all $k \in E_P$.

\item These equations are precisely the sum-product BP fixed-point equations on 
$\mathrm{FG}(\Gamma)$, with the factors $\rho_B/[B_0]$ and $\rho_P/(\kappa_P[P_0])$ 
serving as the per-message normalization constants.
\end{enumerate}
\end{theorem}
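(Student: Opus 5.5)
The plan is to write down the mass-action ODE of $\Gamma$ explicitly, bundle by bundle, using the structure guaranteed by (W1)--(W6) and (R1), and then read off the steady-state equations directly. By (W1)--(W3) (Remark~\ref{rem:bundles}), every reaction changes concentrations in only one bundle $B$, and there it moves mass between $X_{B,0}$ and a single coordinate $X_{B,k}$. Hence, for each coordinate,
\[
\rate[X_{B,k}] = \rate[X_{B,k}]^+ - (\text{consumption of }X_{B,k}).
\]
By (R1), the only reactions with internal net stoichiometry $X_{B,k}\to X_{B,0}$ are the recycling reactions with rate $\rho_B[X_{B,k}]$. So the consumption term is exactly $\rho_B[B_k]$. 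Also $\rate[X_{B,0}]=-\sum_k\rate[X_{B,k}]$, so the bundle total $[B_0]+\sum_k[B_k]$ is conserved. A positive state is therefore steady if and only if $\rate[X_{B,k}]^+=\rho_B[B_k]$ for all $B$ and $k$.

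\textbf{Sum bundles.} For $B=B_{J\to Q}$, substitute (W4) and then (W6.1), which gives $\Cat(B)=V_J\setminus\{Q\}$ so that $\mathcal A_B=A_Q$. Then (W6.3) replaces $C_B(\iota_{J\to Q}(k);a)$ by $\kappa_{J\to Q}\psi_J(k,a)$. Dividing by $[B_0]>0$ yields \eqref{eq:sum-steady}, after identifying $E_B$ with $E_Q$ via $\iota_{J\to Q}$ (which is what $[B_k]$ means in the statement).

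\textbf{Product bundles.} For a product bundle $P$, (W5) says only the single assignment $a^{(k)}$ contributes. (R1) fixes its coefficient to $\kappa_P$. The bijections $\theta_{P,B}$ identify the coordinate of each catalyst $B\in\Cat(P)$ with $k$. Thus $\rate[P_k]^+=\kappa_P[P_0]\prod_{B\in\Cat(P)}[B_k]$, and dividing gives \eqref{eq:prod-steady}.

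The step I expect to be the main obstacle is item 3: matching these equations to BP on $\mathrm{FG}(\Gamma)$. This requires showing that $\Cat(P)$ consists exactly of the sum bundles $B_{J'\to P'}$ with $P'\sim_v P$, excluding the factor $J$ toward which $P$ sends its message. Only then is \eqref{eq:prod-steady} the variable-to-factor update $P^{(i\to j)}_k=\prod_{j'\in\mathrm{ne}(i)\setminus j}S^{(j'\to i)}_k$. I would handle this through Definition~\ref{def:variable-equivalence} and Theorem~\ref{thm:recon}. Each product bundle $P\in V_J$ is read as the message $P^{([P]\to J)}$, and each sum bundle $B_{J\to Q}$ as the message $S^{(J\to[Q])}$. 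The relation $\sim$ is built precisely so that a sum bundle targeting $P$ and catalysing $P'$ places $P$ and $P'$ in the same variable class. Well-definedness from Theorem~\ref{thm:recon} ensures the factor $J$ is excluded; if this exclusion is not derivable from (W1)--(W6) alone, I would flag it as an implicit assumption of the compiled class. With the identifications fixed, \eqref{eq:sum-steady} is the factor-to-variable update and \eqref{eq:prod-steady} is the variable-to-factor update, each holding up to a positive scalar.

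\textbf{Bijection.} Finally I would construct the correspondence in both directions.
\begin{itemize}
\item \emph{Steady state to fixed point.} A positive steady state gives normalized messages $[B_k]/\sum_{k'}[B_{k'}]$, which satisfy the projective BP fixed-point equations.
\item \emph{Fixed point to steady state.} Given a positive BP fixed point and positive conserved totals, set each bundle's coordinates proportional to its message. The free scale is determined by solving for $[B_0]$ from the normalization constants $\rho_B/[B_0]$ and $\rho_P/(\kappa_P[P_0])$ together with the conservation law. This is a monotone scalar equation in the bundle scale, so it has a unique positive solution.
\end{itemize}
Bijectivity then holds within each stoichiometric compatibility class, and I would state it in that form.
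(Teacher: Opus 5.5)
Your derivation of \eqref{eq:sum-steady} and \eqref{eq:prod-steady} follows the paper's own proof. You expand $\rate[B_k]^+$ by (W4), use (W6.1) and (W6.3) for sum bundles, collapse to the single assignment by (W5) and fix its coefficient by (R1) for product bundles, take the only consumption term $\rho_B[B_k]$ from (R1), and set the ODE to zero. Getting the product monomial from (W4)--(W5)--(R1) is cleaner than the paper's appeal to the ``Napp--Adams form''. Your hedge on item~3 is warranted, and the fallback you mention is the one you will need. The paper simply asserts that $\Cat(P)$ is exactly the set of sum bundles from the other factors at $[P]$, and Theorem~\ref{thm:recon} does not give this. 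Its well-definedness rules out $B_{J\to Q}\in\Cat(P)$ for $Q\neq P$ in $V_J$, since that would force $Q\sim P$. It does not rule out $B_{J\to P}\in\Cat(P)$, which only yields $P\sim P$, nor catalysts that are not sum bundles.

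The genuine gap is your converse. The bundle scales cannot be fixed one bundle at a time, because the right-hand sides of \eqref{eq:sum-steady}--\eqref{eq:prod-steady} contain the \emph{unnormalized} catalyst concentrations. Write $[B_k]=\lambda_B m_B(k)$ with $\sum_k m_B(k)=1$. Let $c_B>0$ be the BP normalizing constant of $m_B$, i.e.\ $\sum_a\psi_J(k,a)\prod_{Q'}m_{Q'}(a(Q'))=c_Bm_B(k)$, resp.\ $\prod_{B'\in\Cat(P)}m_{B'}(k)=c_Pm_P(k)$. Put $\alpha_B=\kappa_{J\to Q}c_B$, resp.\ $\alpha_P=\kappa_Pc_P$. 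The steady-state equations then reduce to $\rho_B\lambda_B=\alpha_B[B_0]\prod_{Q\in\Cat(B)}\lambda_Q$. With $[B_0]=T_B-\lambda_B$ ($T_B$ the conserved bundle total) this becomes the \emph{coupled} system
\[
\lambda_B=T_B\,h(x_B),\qquad x_B=\frac{\alpha_B}{\rho_B}\prod_{Q\in\Cat(B)}\lambda_Q,\qquad h(x)=\frac{x}{1+x}.
\]

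This system may have no positive solution. Take a cycle of bundles each with one catalyst (two variables joined by two pairwise factors). The map composed around the cycle is concave, vanishes at $0$, and has slope $\prod_B T_B\alpha_B/\rho_B$ there. If this product is below $1$, the compatibility class contains no positive steady state over that fixed point.

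It may also have several positive solutions. In a symmetric instance with degree-$3$ variables on cycles (pairwise factors on $K_4$), it becomes $\lambda=T\,h\bigl(b\,T^2h(a\lambda)^2\bigr)$ for constants $a,b>0$. This map is quadratic at $0$ and saturates, so for large $T$ it has two positive roots. That gives two positive steady states in one class with the same normalized messages.

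Hence ``bijectivity within each compatibility class'' is neither proved nor true in general. What does hold is that bundle normalization maps positive steady states \emph{onto} BP fixed points: given a fixed point and any $\lambda\in\mathbb{R}_{>0}^{\mathcal{B}}$, set $[B_0]:=\rho_B\lambda_B/(\alpha_B\prod_{Q\in\Cat(B)}\lambda_Q)$. The fibres are exactly these $|\mathcal{B}|$-parameter families, so the correspondence is a bijection only modulo bundlewise rescaling. That is also all the paper's proof supports: it reads $\rho_B/[B_0]$ and $\rho_P/(\kappa_P[P_0])$ as per-message normalizations and never constructs a converse.
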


\section{Categories and the Compilation Functor}\label{sec:categories}

\subsection{Source Category $\mathbf{FGraphRet}$}

Objects are pairs $(A, H)$, where $A$ is the poset associated to a factor graph and 
$H$ is the family of factor Hamiltonians. Morphisms are finite composites of admissible 
linear and colinear retractions in the sense of Sergeant--Perthuis--Boitel: these are 
order-preserving deformation retractions of $A$ which act on $H$ by:
\begin{itemize}
\item Deleting a variable $i$ and updating its incident factor by marginalizing out 
the corresponding coordinate in $H$ (linear point);
\item Deleting a factor $j$ and incorporating its potential into neighboring factors 
(colinear point).
\end{itemize}

For the explicit formulas on $H$, see Propositions 5 and 6 of~\cite{sergeantperthuis2025minimacriticalpointsbethe}. The explicit factor table updates (i.e. CRN coefficients) are provided in the Appendix~\ref{sec:SP-psi-rules}

\subsection{Target Category $\mathbf{CRN}$}

Objects of $\mathbf{CRN}$ are mass-action reaction networks $\Gamma = (X, R, \kappa)$ 
satisfying the recognition conditions (W1)--(W6), where $X$ is the finite species set 
(partitioned into bundles), $R$ is the finite set of reactions, and 
$\kappa \in (0, \infty)^R$ is the vector of rate constants.

Morphisms in $\mathbf{CRN}$ are finite composites of elementary CRN reductions. 
An \emph{elementary reduction} is a triple
\[
(\varphi_0, \varphi_1, \psi) : (X, R, \kappa) \longrightarrow (X', R', \kappa')
\]
with:
\begin{itemize}
\item $\varphi_0 : X \to X'$ obtained by deleting a union of whole bundles, so that 
$X' = X \setminus \bigcup_{B \in \mathcal{B}_{\mathrm{del}}} S_B$;
\item $\varphi_1 : R \to R'$ the induced map on reactions, removing exactly those 
reactions that involve deleted species;
\item $\psi : (0, \infty)^R \to (0, \infty)^{R'}$ a map on rate vectors, producing 
$\kappa' = \psi(\kappa)$ for the reduced network $(X', R')$.
\end{itemize}

A general morphism is a finite composite of such reductions, with composition defined 
componentwise:
\[
(\varphi'_0, \varphi'_1, \psi') \circ (\varphi_0, \varphi_1, \psi) := 
(\varphi'_0 \circ \varphi_0, \varphi'_1 \circ \varphi_1, \psi' \circ \psi),
\]
and identity morphisms given by $(\mathrm{id}_X, \mathrm{id}_R, \mathrm{id})$.

\subsection{The Compilation Functor $\mathcal{N}$}

We now package the Napp compilation into a functor
\[
\mathcal{N} : \mathbf{FGraphRet} \longrightarrow \mathbf{CRN}.
\]

\paragraph{Action on objects.}
Let $(A, H)$ be a factor graph in the sense of Sergeant--Perthuis--Boitel, with 
Hamiltonians $H_c$ and factor tables $\psi_c \propto e^{-H_c}$. The compiled CRN is
\[
\mathcal{N}(A, H) = (X, R, \kappa),
\]
constructed exactly as in Definition~\ref{def:napp-compilation}.

\paragraph{Action on morphisms.}
Let
\[
r : (A, H) \longrightarrow (A', H')
\]
be a morphism in $\mathbf{FGraphRet}$, i.e., a finite composite of linear and colinear 
retractions in the sense of Sergeant--Perthuis--Boitel. Write $\psi_c \propto e^{-H_c}$ 
and $\psi'_c \propto e^{-H'_c}$ for the initial and updated factor tables.

On CRNs, we define
\[
\mathcal{N}(r) = (\varphi_0, \varphi_1, \psi) : \mathcal{N}(A, H) \to \mathcal{N}(A', H'),
\]
as follows:
\begin{itemize}
\item $\varphi_0 : X \to X'$ deletes exactly the bundles corresponding to edges and 
variables removed by $r$ (linear or colinear point), and is the identity on all 
remaining species. At the level of bundles, $\varphi_0$ is just restriction of the 
Napp compilation to the surviving part of the factor graph.

\item $\varphi_1 : R \to R'$ deletes all reactions whose reactants or products contain 
a deleted bundle; it is the identity on all other reactions (recycling, product, 
marginal, and surviving sum reactions).

\item $\psi : \kappa \mapsto \kappa'$ acts only on the rate constants of surviving 
sum-bundle reactions. For each surviving factor node $j$ and edge $j \to n$, we keep 
the same scalar $c_{j \to n} > 0$ and set
\[
\lambda'_{j \to n}(\mathbf{k}^j) = c_{j \to n} \cdot \psi'_j(\mathbf{k}^j),
\]
where $\psi'_j$ is the updated factor from the SP--B retraction. Recycling and product 
rates are left unchanged:
\[
\kappa'_r = \kappa_r, \quad \kappa'_{\mathrm{prod}} = \kappa_{\mathrm{prod}}.
\]
\end{itemize}

By construction, $(\varphi_0, \varphi_1, \psi)$ is a morphism in $\mathbf{CRN}$: it 
removes entire bundles and their reactions, and only rescales rate parameters on the 
surviving sum bundles, leaving the long-time behavior of the surviving species compatible 
with the factor-table update.

\begin{lemma}[Recycling structure of $\mathcal N( A,H)$]\label{lem:recycling-structure}
For any factor graph $(A,H)$, the compiled CRN
$\mathcal N(\mathcal A,H)=(X,R,\kappa)$ satisfies \textup{(R1)}:

\begin{itemize}
  \item each bundle $B$ (sum, product, or marginal) contains a zero species
        $B_0$ and coordinate species $\{B_k\}_{k\in E_B}$;
  \item there is a single constant $\rho_B>0$ such that, for every $k\in E_B$,
        the only internal reactions with net flow $B_k\to B_0$ are the
        recycling reactions
        \[
          B_k \xrightarrow{\ \rho_B\ } B_0,
        \]
        which involve no catalysts from other bundles.
\end{itemize}
In particular, the image of $\mathcal N$ lies in the full subcategory of CRNs
satisfying \textup{(W1)--(W6)} and \textup{(R1)}.
\end{lemma}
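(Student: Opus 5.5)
The argument is a direct inspection of the compiled network. Definition~\ref{def:napp-compilation} lists the reactions of $\mathcal N(A,H)$ in four families: recycling, product, sum (factor-catalysed), and marginal reactions. For each family I will read off its reactants, products and net stoichiometry. Everything reduces to checking which family can have net column $B_k\to B_0$ within a bundle.

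\textbf{Step 1 (bundle form).} For every message and marginal vector, the compilation introduces a bundle consisting of a zero species $B_0$ and coordinates $B_k$, $k\in E_B$. This gives the first bullet directly. It also gives (W1)--(W3), since each compiled reaction either converts $B_0\leftrightarrow B_k$ inside one bundle or leaves other bundles unchanged as catalysts.

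\textbf{Step 2 (recycling reactions).} For each bundle, the compilation includes reactions $B_k\to B_0$ with one rate constant $\rho_B$ chosen per bundle. These involve no catalysts, by construction.

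\textbf{Step 3 (no other $B_k\to B_0$ flows).} All other reactions are product, sum or marginal reactions. Each has the form
\[
B_0+(\text{catalysts})\to B_k+(\text{catalysts}),
\]
so its internal net column is $-B_0+B_k$, never $B_k\to B_0$. Hence the recycling reactions are the only internal reactions with that net stoichiometry.

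\textbf{Step 4 (uniform product rates).} The second half of (R1) concerns product bundles. In the compilation, each producing reaction of a product bundle $P$ with internal column $P_0\to P_k$ carries a single rate $\kappa_{\mathrm{prod}}$ that does not depend on $k$. Since each $k$ has a unique catalytic monomial (the diagonal structure of (W5)), this gives $C_P(k;a^{(k)})=\kappa_P$.

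\textbf{Step 5 (conclusion).} Combining these steps with the fact that $\mathcal N(A,H)$ is an object of $\mathbf{CRN}$, and so satisfies (W1)--(W6), shows that the image lies in the stated full subcategory. The same holds for images of morphisms: $\mathcal N(r)$ deletes whole bundles together with all their reactions and leaves $\rho_B$ and $\kappa_{\mathrm{prod}}$ unchanged, so (R1) survives on the remaining bundles.

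The main obstacle is making sure that no catalysed reaction in the compilation has net $B_k\to B_0$. Napp--Adams normalisation is sometimes written as a catalysed degradation. If Definition~\ref{def:napp-compilation} does this, I would first check that such a step is uncatalysed and has a $k$-independent rate. If it is catalysed, I would need to rewrite it as an equivalent uncatalysed recycling reaction; otherwise the uniqueness clause of (R1) fails and the lemma cannot be proved as stated.
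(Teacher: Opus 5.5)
Your proposal is correct and follows essentially the paper's own argument. Inspecting Definition~\ref{def:napp-compilation}, every sum- and product-production reaction has internal column $-B_0+B_k$, so the uncatalysed recycling reactions at the global rate $\kappa_r$ are the only reactions with net flow $B_k\to B_0$, and one sets $\rho_B=\kappa_r$. Your Step~4 also checks that every product-production reaction carries the $k$-independent rate $\kappa_{\mathrm{prod}}$, giving $C_P(k;a^{(k)})=\kappa_P:=\kappa_{\mathrm{prod}}$. That covers the uniform-production clause of (R1), which the paper's proof does not address.

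Two small caveats on Step~5. First, you take (W1)--(W6) from the typing $\mathcal N(A,H)\in\mathbf{CRN}$; the paper does the same implicitly and only checks (W4)--(W6) explicitly in the proof of Proposition~\ref{prop:functoriality}. Second, your remark about morphisms is unnecessary, because the subcategory is full and so only objects need checking. As phrased it is also inaccurate: deleting bundles also removes catalysed production reactions of surviving bundles (for example those of $P^{2\to b}$ once $S^{a\to 2}$ is deleted in Example~\ref{ex:chain-mixed-reduction}), and these are restored only in the freshly compiled codomain $\mathcal N(A',H')$, as uncatalysed reactions.
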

\begin{proof}[Proof of Lemma ~\ref{lem:recycling-structure}]
By construction of $\mathcal N(A,H)$, each bundle $B$ is equipped with
recycling reactions
\[
  B_k \xrightarrow{\ \kappa_r\ } B_0 \quad\text{or}\quad
  B_k \xrightarrow{\ \kappa_{\mathrm{prod}}\ } B_0,
\]
depending on whether $B$ is a message, belief, or marginal bundle, and no
other reactions in $R$ have net stoichiometry supported in $\{B_0,B_k\}$.
Setting $\rho_B$ equal to the appropriate global recycling constant yields
exactly the structure required by \textup{(R1)}.
\end{proof}

\begin{proposition}[Functoriality]\label{prop:functoriality}
The assignment
\[
(A, H) \longmapsto \mathcal{N}(A, H), \quad r \longmapsto \mathcal{N}(r)
\]
defines a functor $\mathcal{N} : \mathbf{FGraphRet} \to \mathbf{CRN}$. Moreover, by 
Lemma~\ref{lem:recycling-structure} and the description of $\mathcal{N}(r)$, the image 
of $\mathcal{N}$ lies in the full subcategory of CRNs satisfying \emph{(W1)--(W6)} and \emph{(R1)}.
\end{proposition}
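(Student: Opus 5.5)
To prove Proposition~\ref{prop:functoriality}, I will verify three things in order: $\mathcal N(r)$ is a morphism of $\mathbf{CRN}$ with the right source and target; $\mathcal N$ preserves identities and composition; and every object in the image satisfies (W1)--(W6) and (R1).

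\emph{Step 1: objects.} Since compilation is purely local (Definition~\ref{def:napp-compilation}), $\mathcal N(A,H)$ satisfies (W1)--(W6) by construction. The bundles are the message and belief bundles, and each reaction is of the form zero$\leftrightarrow$coordinate, catalyzed by neighbouring bundles. This gives (W1)--(W3). The production rates are monomials in one coordinate per neighbour, which gives (W4). Product bundles have diagonal catalysis, which gives (W5). The sum bundles attached to a factor $j$ form a class $J$ with $V_J$ equal to the incident product bundles, $\Cat(B_{J\to Q})=V_J\setminus\{Q\}$, and coefficients $c_{j\to n}\psi_j$; this gives (W6). Lemma~\ref{lem:recycling-structure} gives (R1).

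\emph{Step 2: elementary morphisms.} For a single linear or colinear SP--B retraction $r:(A,H)\to(A',H')$, I will check that $\mathcal N(r)$ is an elementary reduction landing exactly in $\mathcal N(A',H')$, up to relabeling. I compare the two species sets first. The bundles of $\mathcal N(A',H')$ are indexed by the surviving variables and edges of $A'$. Since $r$ deletes a variable, or a factor together with its incident edges, these index sets are exactly the restriction of those of $A$. So $\varphi_0$ deletes whole bundles, and the surviving species set equals that of $\mathcal N(A',H')$.

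The reactions are then checked the same way. A reaction of $\mathcal N(A,H)$ survives $\varphi_1$ precisely when none of its reactant, product, or catalyst species lies in a deleted bundle. These surviving reactions are exactly the reactions that compilation assigns to the surviving local structure of $A'$. One subtlety must be checked here: the retraction can change $\Cat$ of a surviving sum bundle, since a factor loses a neighbour after marginalization. The catalytic monomials must therefore be re-indexed by $\prod_{Q\in V_{J'}\setminus\{Q\}}E_Q$, and the rate coefficients must be those of $\psi'_j$. This is exactly what $\psi$ does by setting $\lambda'_{j\to n}=c_{j\to n}\psi'_j$, while recycling and product rates stay unchanged, matching the compilation of $(A',H')$. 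Hence $\kappa'=\kappa(\mathcal N(A',H'))$.

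\emph{Step 3: functoriality.} Identities map to $(\mathrm{id},\mathrm{id},\mathrm{id})$, since nothing is deleted and $\psi'=\psi$. For composition, a morphism is a finite composite of elementary retractions, and I will define $\mathcal N$ on composites as the composite of the images. The point to establish is well-definedness: two factorizations of the same retraction must give the same CRN morphism. I plan to handle this via Step 2. Each image is determined by its source and target, because deletion is the complement of the surviving bundles and the new rates are the compiled rates of the target. Componentwise composition of deletions is deletion of the union. Composition of the rate maps yields $c\cdot\psi''$ computed from the final tables, because the SP--B updates compose to the composite retraction's update.

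I expect the main obstacle to be this last point. It requires the factor tables produced by iterated SP--B updates to depend only on the final poset map and not on the order of the moves. I will first try to cite Propositions 5--6 of~\cite{sergeantperthuis2025minimacriticalpointsbethe}, where marginalization and absorption of potentials commute up to positive constants. Any constant discrepancy is harmless, because Theorem~\ref{thm:recon} determines $\psi_J$ only up to scale, and it can be absorbed into $c_{j\to n}$ or quotiented in the rate-map equality.
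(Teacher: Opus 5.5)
\textbf{Gap in Step~2.} Step~3 depends on this step. You correctly spot the subtlety that a surviving bundle can lose catalytic neighbours, but you resolve it with a tool that cannot do the job.

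\emph{What the definition allows.} In the paper's definition of an elementary reduction, $\varphi_1$ removes every reaction in which a deleted species occurs. This includes catalysts, since a catalyst appears among both reactants and products. The rate map $\psi:(0,\infty)^R\to(0,\infty)^{R'}$ only assigns rates to the reactions that remain; it cannot create or re-index reactions.

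\emph{What the target requires.} Whenever a retraction deletes a catalyst bundle of a surviving bundle, $\mathcal N(A',H')$ contains reactions that do not occur in $\mathcal N(A,H)$. Example~\ref{ex:chain-mixed-reduction} shows both cases. Retracting variable $1$ deletes $P^{1\to a}$, so $\varphi_1$ removes all six catalysed reactions producing $S^{a\to 2}$. Meanwhile $\mathcal N(A',H')$ has three new uncatalysed reactions $S^{a\to 2}_0\to S^{a\to 2}_k$ with rates $\psi'_a(k)=\sum_{k_1}\psi_a(k_1,k)$. Deleting the unary factor $a$ removes the catalysed production reactions of the surviving \emph{product} bundle $P^{2\to b}$, and the compiled target replaces them by uncatalysed ones.

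\emph{Consequence.} Your sentence that the surviving reactions are exactly those compilation assigns to $A'$ is false. Your remark that product reactions merely keep their rates misses the product-bundle instance of the same phenomenon. As defined, $\mathcal N(r)$ does not have codomain $\mathcal N(A',H')$.

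\emph{A possible repair.} You need to change the morphisms of $\mathbf{CRN}$. For example, let $\varphi_1$ send a reaction whose only deleted species are catalysts to the same reaction with those catalysts stripped (a many-to-one map). Then let $\psi$ sum rate constants over the fibres of $\varphi_1$ before applying the SP--B update. Summing $\psi_a(k_1,k)$ over $k_1$ reproduces exactly the marginalized rates of the example. Product-bundle fibres are singletons, so $\kappa_{\mathrm{prod}}$ is preserved.

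The paper's own proof never performs this codomain check; it verifies identities, componentwise composition, and closure under (W1)--(W6) separately. So your more careful verification is precisely where the problem surfaces, but as written it does not go through.

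\textbf{Step~3 puts the difficulty in the wrong place.} The paper reads a morphism of $\mathbf{FGraphRet}$ as its sequence of elementary moves. The tables produced by $r_2\circ r_1$ are then by definition those of $r_1$ followed by $r_2$, and $\psi^{r_2\circ r_1}=\psi^{r_2}\circ\psi^{r_1}$ holds by construction. With morphisms taken as formal composites, there is no factorization-independence to prove.

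If you do want to identify different factorizations, your fallbacks do not work:
\begin{itemize}
\item Your observation that the image is determined by source and target only pins down the value $\psi(\kappa)$. It does not determine the map $\psi$ on all of $(0,\infty)^R$, which is part of the morphism.
\item Propositions~5--6 of SP--B, as the paper uses them, give the update for a single move, not a commutation theorem. Moreover, the exponent $1/c_B(b)$ in the non-linear colinear case depends on the current poset.
\item Absorbing scalar discrepancies into $c_{j\to n}$ contradicts the definition of $\mathcal N(r)$, which keeps $c_{j\to n}$ fixed.
\item Quotienting rate maps by scalars changes the category.
\end{itemize}

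Your Step~1 matches the paper's closure argument. Once Step~2 is repaired, your organisation has an advantage: closure of the image follows from Step~1 alone, since every codomain is a compiled CRN. The paper instead relies on the separate assertion that retractions do not permute or merge catalytic bundles.
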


\begin{lemma}[Functoriality at BP Fixed Points]\label{thm:functor-BP}
Let
\[
r : (A, H) \longrightarrow (A', H')
\]
be a morphism in $\mathbf{FGraphRet}$, and let
\[
\Gamma = \mathcal{N}(A, H), \quad \Gamma' = \mathcal{N}(A', H')
\]
be the associated CRNs. Then:
\begin{enumerate}
\item $\Gamma$ and $\Gamma'$ satisfy \emph{(W1)--(W6)} and \emph{(R1)}, and hence their 
positive steady states implement BP on $(A, H)$ and $(A', H')$, respectively, by 
Theorem~\ref{thm:BP-implementation}.

\item The CRN morphism
\[
\mathcal{N}(r) = (\varphi_0, \varphi_1, \psi) : \Gamma \to \Gamma'
\]
deletes exactly the bundles corresponding to sites/regions removed by $r$, leaves all 
recycling rates $\rho_B$ for surviving bundles unchanged, and updates sum-bundle 
production coefficients by
\[
\lambda_{j \to n}(\mathbf{k}^j) = c_{j \to n} \psi_j(\mathbf{k}^j) 
\longmapsto \lambda'_{j \to n}(\mathbf{k}^j) = c_{j \to n} \psi'_j(\mathbf{k}^j)
\]
where $\psi_j \mapsto \psi'_j$ is the factor-table update induced by $r$ in the sense 
of Sergeant--Perthuis--Boitel.
\item If $\pi$ is a BP fixed point on $(A, H)$, then its restriction $\pi'$ to the 
surviving variables of $(A', H')$ is a BP fixed point there, and any positive steady 
states $x^*$ of $\Gamma$ and $x'^*$ of $\Gamma'$ whose bundle-normalized concentrations 
correspond to $\pi$ and $\pi'$ are related by $\mathcal{N}(r)$ on the surviving bundles.
\end{enumerate}
\end{lemma}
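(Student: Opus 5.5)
The plan is to treat the three items in order, reducing each to a result already available: Lemma~\ref{lem:recycling-structure} and Proposition~\ref{prop:functoriality} for item 1, the definition of $\mathcal N(r)$ for item 2, and Theorem~\ref{thm:BP-implementation} combined with the SP--B fixed-point preservation theorem of \cite{sergeantperthuis2025minimacriticalpointsbethe} for item 3.

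\textbf{Item 1.} Both $\Gamma=\mathcal N(A,H)$ and $\Gamma'=\mathcal N(A',H')$ lie in the image of $\mathcal N$ on objects. By Lemma~\ref{lem:recycling-structure} and Proposition~\ref{prop:functoriality}, they therefore satisfy (W1)--(W6) and (R1).

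I would also check that $\mathrm{FG}(\mathcal N(A,H))$ recovers $(A,H)$ up to relabeling and positive rescaling of each $\psi_j$. This is done by reading off the factor classes $J$ (one per factor node $j$), the incident product bundles $V_J$ (one per edge $j$--$i$), and the clamped tables $C_{B_{J\to Q}}=c_{j\to n}\psi_j$ from (W6.3). The rescaling freedom is exactly the one stated in Theorem~\ref{thm:recon}. Theorem~\ref{thm:BP-implementation} then identifies positive steady states of $\Gamma$ (respectively $\Gamma'$) with BP fixed points on $(A,H)$ (respectively $(A',H')$).

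\textbf{Item 2.} This item is definitional. By the construction of $\mathcal N(r)$, $\varphi_0$ deletes precisely the bundles indexed by removed variables and edges, and $\varphi_1$ deletes only the reactions touching them. The map $\psi$ fixes $\kappa_r$ and $\kappa_{\mathrm{prod}}$, hence fixes every $\rho_B$ for surviving $B$. On sum-bundle coefficients $\psi$ acts by $c_{j\to n}\psi_j\mapsto c_{j\to n}\psi'_j$. For a composite $r$, I would induct on the number of elementary retractions using the componentwise composition law.

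\textbf{Item 3.} Here I would first handle the factor-graph side. The SP--B result (Propositions 5 and 6 of \cite{sergeantperthuis2025minimacriticalpointsbethe}) states that an elementary linear or colinear retraction preserves BP fixed points on the surviving variables. I would make this explicit at the level of messages.

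For a \emph{linear point}, summing out variable $i$ in the neighboring factor reproduces exactly the message $S^{(j\to i')}$ that passed through $i$. Those messages therefore satisfy the updated equations with $\psi'_j$.

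For a \emph{colinear point}, absorbing $\psi_{j}$ into the neighboring factor equates the composite message with the product of the original messages.

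Composition then follows by induction.

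On the chemical side, I would take a positive steady state $x^*$ whose normalized bundles give $\pi$ and restrict it to the surviving bundles. I then need to show that $(x^*|_{\text{surv}})$, possibly after rescaling the zero species $[B_0]$ so that bundle totals are conserved, satisfies equations \eqref{eq:sum-steady} and \eqref{eq:prod-steady} of $\Gamma'$. Two cases arise:
\begin{itemize}
\item \emph{Unaffected bundles.} For bundles whose rate coefficients did not change, the equations are literally inherited, since their catalysts all survive.
\item \emph{Affected bundles.} For sum bundles with updated $\psi'_j$, the right-hand side of \eqref{eq:sum-steady} for $\Gamma'$ equals the old right-hand side by the message identity above, up to a positive scalar. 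That scalar can be absorbed into the per-message normalization $\rho_B/[B_0]$ by adjusting $[B_0]$ within the conserved total of bundle $B$.
\end{itemize}
Theorem~\ref{thm:BP-implementation} in $\Gamma'$ then yields $x'^*$ corresponding to $\pi'$, related to $x^*$ via $\varphi_0$ on surviving bundles.

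The main obstacle is this normalization bookkeeping. The SP--B tables are defined only up to a constant, and the bundle totals are conserved quantities. I must verify that the required rescaling of the ratio $[B_k]/[B_0]$ is achievable with a positive $[B_0]$ inside a fixed conservation class. I would check this by solving $[B_0]\bigl(1+\sum_k [B_k]/[B_0]\bigr)=T_B$ explicitly. Because the relation is stated only for steady states whose normalized concentrations correspond, the statement is up to this rescaling, which suffices.
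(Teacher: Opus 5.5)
Your outline follows the paper's route: Lemma~\ref{lem:recycling-structure} for item 1, the definition of $\mathcal N(r)$ for item 2, and the SP--B theorem plus Theorem~\ref{thm:BP-implementation} for item 3. However, the step meant to carry item 3 fails for colinear retractions.

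In a factor-graph poset the colinear points are unary factors. So take a unary factor $a$ on a variable $i$ with $\mathrm{ne}(i)=\{a,b\}$, retracted into the factor $b$ with $\psi'_b(\mathbf k)=\psi_b(\mathbf k)\,\psi_a(k_i)$.
\begin{itemize}
\item \emph{Lost catalyst.} At a steady state of $\Gamma$, $[S^{a\to i}_k]\propto\psi_a(k)$, and therefore $[P^{i\to b}_k]\propto\psi_a(k)$. In $\Gamma'$ the bundle $P^{i\to b}$ keeps its rate $\kappa_{\mathrm{prod}}$ but has lost its only catalyst $S^{a\to i}$, so it is uniform at steady state. Your claim that bundles with unchanged rates inherit their equations because ``their catalysts all survive'' is false here.
\item \emph{Non-scalar change.} Evaluated at the restriction of $x^*$, the right-hand side of \eqref{eq:sum-steady} for $S^{b\to i}$ in $\Gamma'$ is $\psi_a(k)$ times the old one. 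That is a $k$-dependent factor, which no adjustment of $[B_0]$ can absorb.
\end{itemize}

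Your own colinear message identity already shows this. The BP fixed point of $(A',H')$ is obtained from $\pi$ by replacing $S^{(b\to i)}_k$ with $S^{(a\to i)}_kS^{(b\to i)}_k$ and $P^{(i\to b)}_k$ with $P^{(i\to b)}_k/S^{(a\to i)}_k$, all other messages unchanged up to scalars. Hence, unless $\psi_a$ is constant, the message-level restriction of $\pi$ is not a BP fixed point of $(A',H')$. Nor do any scalars $s_B$ give $[B_k]'^*=s_B[B_k]^*$ on these two surviving bundles. What is preserved are beliefs, e.g.\ $\prod_{j\in\mathrm{ne}(i)}[S^{j\to i}_k]$ at surviving variables and the factor belief at $b$.

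Step~2 of Example~\ref{ex:chain-mixed-reduction} is a concrete instance. There $[P^{2\to b}_k]\propto\psi'_a(k)$ before the retraction and is uniform after, and $[S^{b\to 2}_k]$ picks up the factor $\psi'_a(k)$. So the displayed identity at the end of that example does not hold either. For linear retractions your argument is fine: the deleted $P^{i\to j}$ is uniform, so every surviving right-hand side changes only by a scalar, and the normalization bookkeeping you single out is routine.

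The paper's proof does not fill this hole. After citing the SP--B theorem, which is a statement about beliefs (critical points of the Bethe free energy), it simply asserts $[B_k]'^*=s_B[B_k]^*$. A provable item 3 needs one of two changes:
\begin{itemize}
\item phrase it for bundle-derived beliefs or marginals on surviving variables and factors, which your linear computation plus the explicit colinear message map above then proves; or
\item let $\mathcal N(r)$ act on the two bundles adjacent to the retracted factor by $k$-dependent multiplication and division by $[S^{a\to i}_k]$, which is not a ``delete bundles and rescale'' morphism.
\end{itemize}

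Two smaller points. First, your extra check that $\mathrm{FG}(\mathcal N(A,H))$ recovers $(A,H)$ breaks whenever $(A,H)$ has a unary factor, i.e.\ at the source of every colinear retraction. A sum bundle with $\Cat(B)=\varnothing$ has a single assignment, so it is not sum-like and belongs to no factor class in (W6). The same issue undercuts the (W6) claim you import from Proposition~\ref{prop:functoriality} for item 1. Second, in the ``survivor not linear'' case (the S--P Eq.~4.30 lemma), the update lands on the table $\psi_{a^{\downarrow}}$ of a variable. The compiled CRN $\mathcal N$ has no reaction carrying that table whose rate $\psi$ could modify.
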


\section{Simulations and Experiments}
\label{sec:simulations}

We benchmark the computational impact of SP--B retractions on CRNs
obtained via the Napp--Adams compilation. For each factor-graph
instance $\mathcal{F}$, we compare \emph{compile-first} (compile
$\mathcal{F}$ into $\mathcal{N}(\mathcal{F})$) against
\emph{reduce-then-compile} (apply the SP--B retraction to obtain
$r(\mathcal{F})$, then compile $\mathcal{N}(r(\mathcal{F}))$).
We report reductions in factor-graph size, compiled CRN size,
wall-clock simulation time for integrating the induced mass-action ODEs,
and marginal agreement on the surviving variables.

Unless otherwise stated, the experiments in the main text use binary
state spaces, $|E_i|=2$, in order to isolate the effect of graph topology
on reduction. This is the relevant structural variable for SP--B
retractions: reducibility depends on the poset structure of the factor
graph, not on the cardinalities of the variable state spaces. Larger
state spaces increase the absolute number of compiled species and
reactions, but do not change which variables or factors are removed by
the retraction. Appendix~\ref{app:cardinality} reports cardinality sweeps
and mixed-cardinality examples confirming this behavior.

\paragraph{Tree-structured factor graphs (exact BP regime).}
We generate binary-tree factor graphs with depths 
$d\in\{3,4,5,6\}$, giving $|I|\in\{7,15,31,63\}$ variables. 
On trees, BP computes exact marginals and SP--B reductions 
retract away essentially all non-core acyclic structure, 
collapsing each instance to a trivial core. Median variable 
reduction is $\approx 95\%$ and median species reduction is 
$\approx 97.5\%$ (Table~\ref{tab:bench-family-summary}), 
yielding dramatic simulation speedups (median $\sim 885\times$). 
Trees are faster than chains in our benchmarks 
(chains: $\sim 270\times$) not because they reduce more, 
but because unreduced tree instances are substantially more 
expensive: compiled CRN size grows more aggressively with 
depth for trees than for chains, so the speedup ratio is 
larger.

\paragraph{Loopy BP: loopy cores with reducible tendrils.}
We generate loopy-core instances with core size 
$c\in\{3,4,5\}$ and attached tendrils of length 
$t\in\{1,3,5,10\}$. SP--B reductions remove the acyclic 
tendrils while preserving the irreducible loopy core, 
yielding partial but substantial compression (median 
$\approx 79\%$ variable and $\approx 73\%$ species reduction) 
and corresponding speedups (median $\sim 22\times$)
(Table~\ref{tab:bench-family-summary}). 

\begin{figure}[t]
  \centering
  \includegraphics[width=0.98\linewidth]{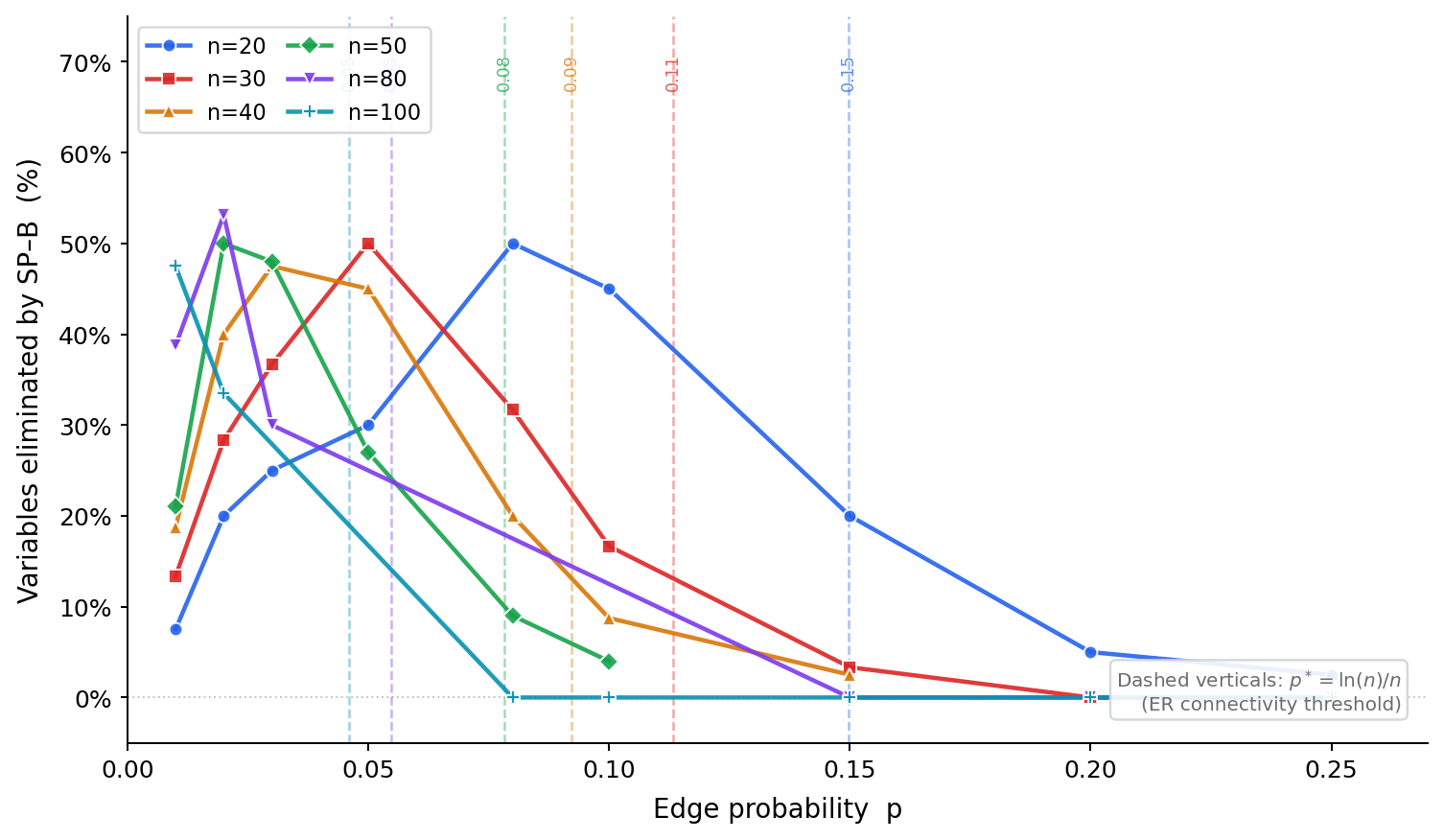}
  \caption{Variable reduction on $G(n,p)$ as a function of edge probability
$p$. Dashed verticals mark $p^{*}=\ln(n)/n$. Reduction peaks below the
connectivity threshold, where graphs contain small loopy cores with
retractable tendrils, and falls to zero once the irreducible core
dominates.}
  \label{fig:er-phase}
\end{figure}

\paragraph{Erd\H{o}s--R\'enyi random graphs.}
To test whether the method depends on planted reducible structure, we
also benchmark Erd\H{o}s--R\'enyi graphs $G(n,p)$ with
$n\in[20,100]$ and edge probabilities spanning sparse, near-threshold,
and dense regimes. The relevant structural scale is the connectivity
threshold $p^*=\ln(n)/n$. Below and near this threshold, graphs contain
many tree-like components and tendrils attached to small loopy cores;
above it, the irreducible core occupies an increasing fraction of the
graph.

Figure~\ref{fig:er-phase} shows that variable reduction follows this
core/tendril transition. Reduction is small near $p=0$, rises in the
sparse regime, peaks below $p^*$, and then falls to zero as the graph
becomes dense. Species reduction shows the same pattern
(Figure~\ref{fig:er-species-heatmap}), exceeding $60\%$ in representative
subcritical regimes and reaching about $69\%$ in the best median cells.
Median simulation speedups peak around $16\times$
(Figure~\ref{fig:er-speedup-heatmap}), with larger individual gains, but
runtime is more variable because it also depends on ODE stiffness and
solver behavior.

\begin{figure}[t]
  \centering
  \includegraphics[width=0.98\linewidth]{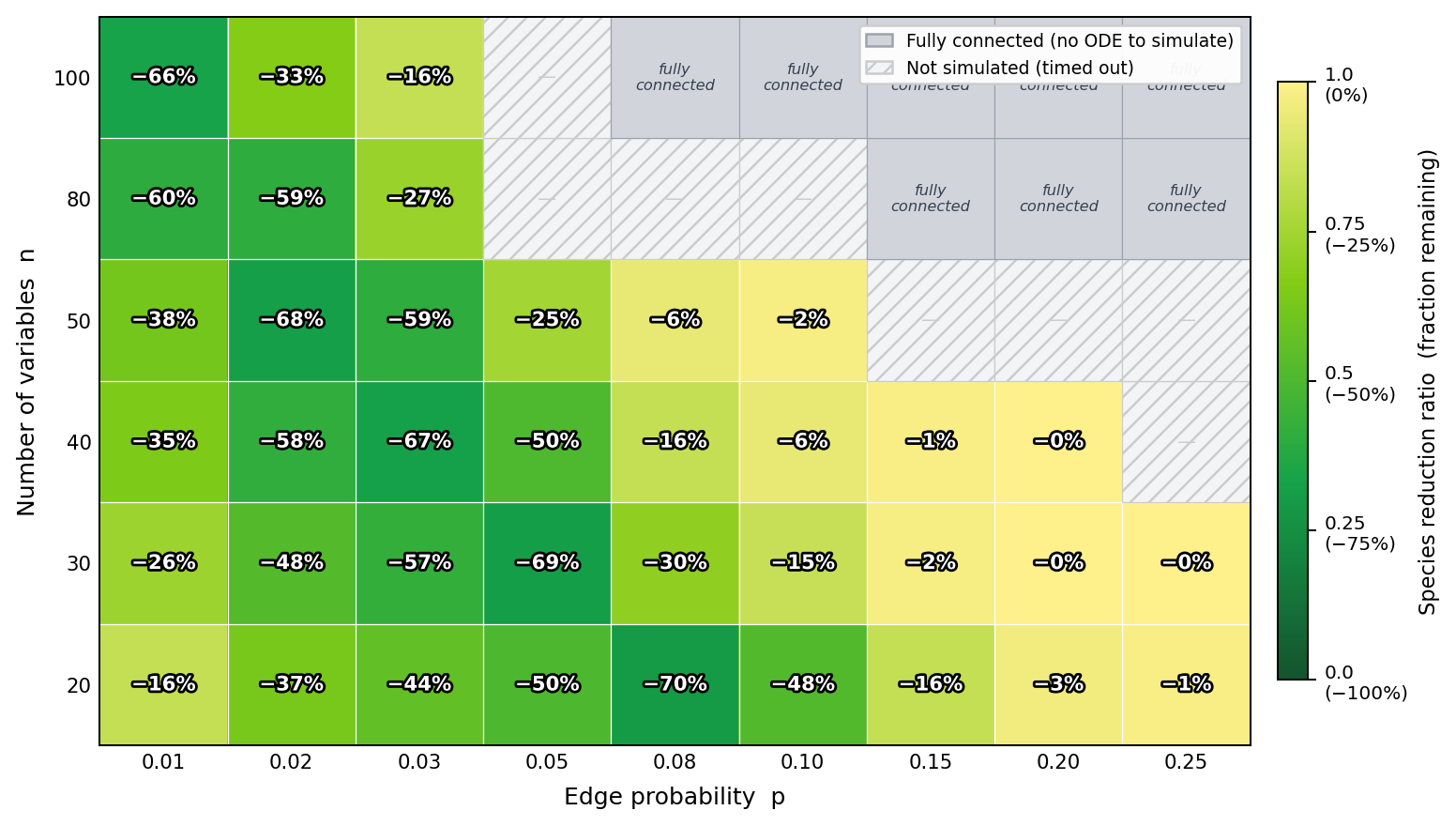}
  \caption{Median CRN species reduction on $G(n,p)$ instances. Species
reduction is concentrated in the sparse/subcritical wedge and vanishes in
the dense regime, where the irreducible loopy core dominates.}
  \label{fig:er-species-heatmap}
\end{figure}

The ER benchmark therefore supports the same interpretation as the
structured families: SP--B reductions remove graph regions outside the
irreducible loopy core. In dense ER graphs, where the core dominates, the
method correctly halts with little or no reduction rather than altering
the inferential structure.

All wall-clock ER timings were measured on Auburn University's Easley
high-performance computing cluster. Each job used one node, one task, one CPU per task, 8GB memory, a
12-hour wall-time limit; each array task evaluated
one \((n,p)\) cell with 10 random seeds.

\begin{figure}[t]
  \centering
  \includegraphics[width=0.98\linewidth]{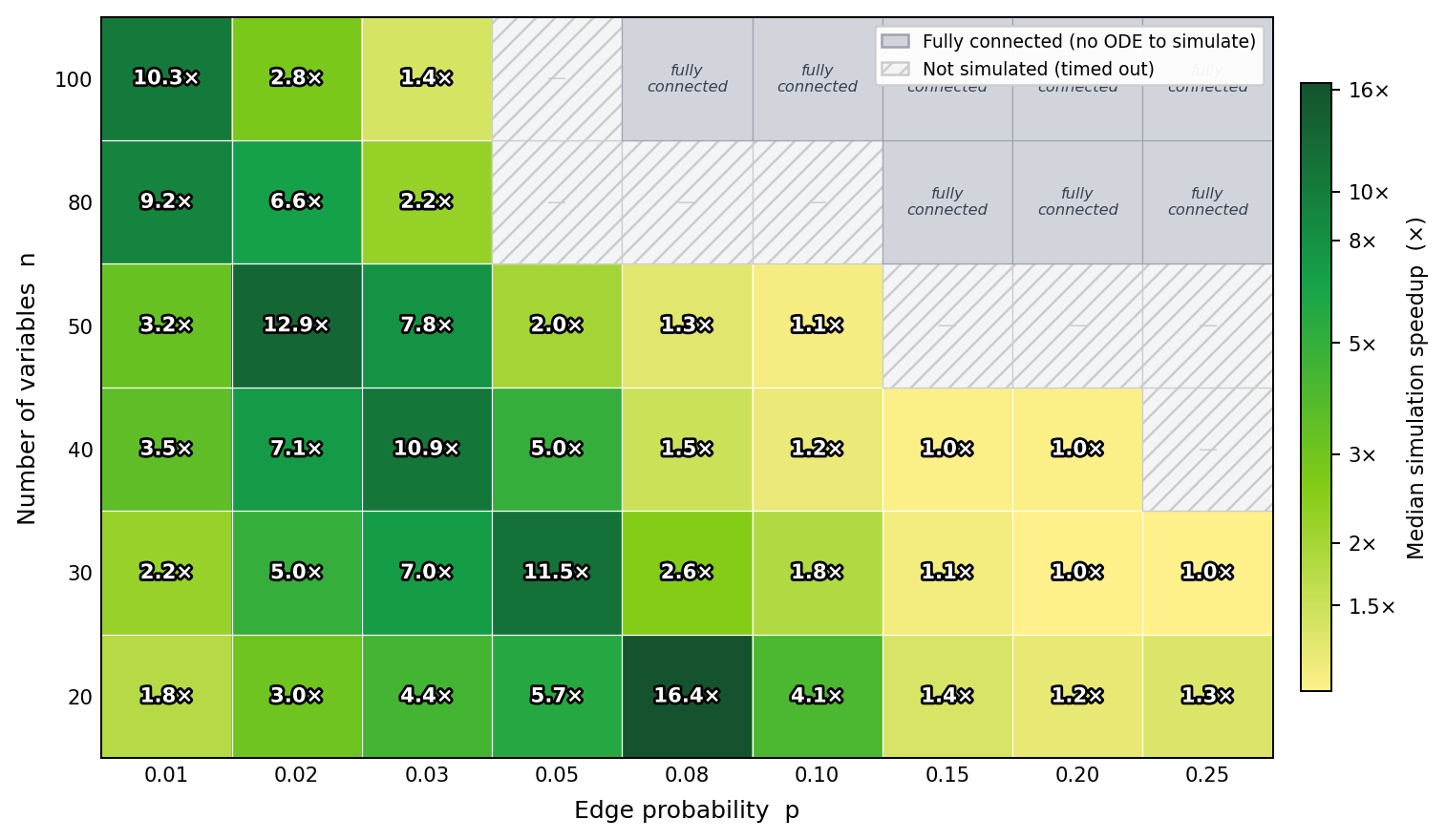}
  \caption{Median simulation speedup on $G(n,p)$ instances. Speedup is
largest in the sparse/subcritical regime where reductions remove many
species.}
  \label{fig:er-speedup-heatmap}
\end{figure}

\paragraph{Correctness diagnostics.}
BP converged in all $440$ ER runs on both the original 
and the reduced networks. We additionally track three 
marginal-agreement diagnostics: BP(original) vs.\ 
BP(reduced) on surviving variables (the SP--B preservation 
test), CRN-derived marginals vs.\ BP marginals on each 
network (end-to-end validation of the compiled chemical 
implementation), and CRN(original) vs.\ CRN(reduced) on 
surviving species. 

For tree instances, BP is exact and the BP(orig) vs.\ 
BP(reduced) comparison is a stringent correctness test; 
observed differences are dominated by numerical precision . For loopy and ER instances, we interpret 
the comparison as a core fixed-point preservation 
diagnostic. The median BP(original) vs.\ BP(reduced) discrepancy 
on ER instances is $4 \times 10^{-8}$. A small fraction of 
near-critical runs ($\sim 1\%$ above $0.1$) show larger 
discrepancies; these correspond to graphs near the 
connectivity threshold where loopy BP can have multiple 
fixed points and the original and reduced systems converge 
to different basins. These gains 
reflect computational savings from retraction rather than 
changes in the inferred beliefs, with the caveat that 
loopy BP non-uniqueness can select different 
fixed points across the two systems in the near-critical 
regime.

\section*{Future Work}
A natural next step is physical realization via DNA strand displacement, which by the universality result of \cite{doi:10.1073/pnas.0909380107} can implement any mass-action CRN. Our reduction pipeline therefore provides a direct path to smaller, more experimentally feasible DNA implementations of BP inference: reduce the factor graph, compile via Napp-Adams, then implement the reduced CRN in DNA. The species counts reported here suggest this could bring moderately complex inference tasks within reach of current DNA nanotechnology. A longer-horizon question is whether gene regulatory networks that perform approximate inference can be cast in Napp-Adams form, which would make them amenable to the same reduction pipeline; this depends on whether their ODE structure can be shown to satisfy conditions (W1)–(W6), and we leave this as an open problem.


\clearpage
\section*{Impact Statement}
We are not aware of any immediate negative societal impacts arising from this work. Methods that make molecular computation more physically realizable could contribute to the design of programmable cells and synthetic biological systems. The ethical implications of such downstream applications would need to be assessed as the technology matures.
\section*{Acknowledgments}
This material is based upon research supported by the Chateaubriand Fellowship of the Office for Science \& Technology of the Embassy of France in the United States as well as by NSF Grant DMS-2246127-PSC and DOE Grant DE-SC0025649

\nocite{langley00}

\newpage 
\bibliography{example_paper}
\bibliographystyle{icml2026}

\newpage
\appendix
\onecolumn

\section*{Reduction of CRNs}
A common class of exact reduction methods exploits stoichiometric structure to eliminate subnetworks while preserving steady-state behavior of the unreduced portion. 
In particular, Hirono et al.~\cite{Hirono2021} identify output-complete subnetworks $\gamma$ and compute an \emph{influence index} $\lambda(\gamma)$, an integer determined by its topology, $\chi(\gamma)=|V_\gamma|-|E_\gamma|$ together with stoichiometric cycle and conservation subspaces. 
When $\lambda(\gamma)=0$, they prove a localization principle: steady-state sensitivities outside $\gamma$ are unaffected by perturbations internal to $\gamma$, which justifies eliminating $\gamma$ via a Schur complement reduction on the stoichiometry.
Related reduction strategies apply Schur complements to weighted graph Laplacians derived from CRNs. 
For complex-balanced mass-action systems, Rao et al.~\cite{Rao2013} use the weighted Laplacian on the graph of complexes to delete complexes while retaining equilibrium structure, and later work ~\cite{Gasparyan2025.10.15.682662} proposes a Schur complement on \emph{species--reaction graphs} with an emphasis on approximation error in dynamical trajectories. 
Broader families of model simplification and reduction such as conservation analysis, decomposition, timescale separation, and sensitivity-based methods are surveyed in~\cite{Snowden2017}.



However, these methods do not apply directly to CRNs arising from hidden Markov models, graphical models, and factor graphs~\cite{10.1098/rsif.2022.0877, NIPS2013_e5e63da7, COLLIAUX2017385}. In these networks, key reactions are catalyzed, and the stoichiometric matrix alone fails to capture the underlying message-passing structure. We therefore exploit reduction techniques developed for graphical models that preserve collections of distributions corresponding to fixed points of the belief propagation algorithm~\cite{sergeantperthuis2025minimacriticalpointsbethe, NIPS2000_61b1fb3f}. To this end, we propose a dictionary that maps CRNs to graphical models in a structure-preserving manner.

\section{Conceptual Overview and Scope}
\label{app:overview}

This section makes explicit the dictionary used throughout 
Sections~4--6. Table~\ref{tab:correspondence} summarizes the 
correspondence between factor-graph concepts and their chemical 
realizations in a Napp--Adams compiled CRN; conditions (W1)--(W6) 
formalize the right column, and (R1) ensures the resulting 
steady states are precisely the BP fixed-point equations after 
bundle normalization. Table~\ref{tab:scope} clarifies which 
classes of CRNs fall within and outside this framework.

\begin{table}[h]
\centering
\small
\caption{Factor-graph/CRN correspondence in a Napp--Adams 
compiled CRN.}
\label{tab:correspondence}
\begin{tabular}{p{2.0cm}p{3.2cm}p{3.2cm}}
\toprule
\textbf{Concept}
    & \textbf{Factor graph}
    & \textbf{CRN} \\
\midrule
Variable
    & Variable node, state space $E_i$
    & Equivalence class of product bundles
      (Definition~\ref{def:variable-equivalence}) \\[4pt]
Factor
    & Factor node $j$, potential $\psi_j$
    & Factor class of sum bundles (W6) \\[4pt]
Message
    & $P^{(i \to j)}$,\; $S^{(j \to i)}$
    & Bundle-normalized concentration
      vector (Theorem~\ref{thm:BP-implementation}) \\[4pt]
Factor update
    & Summation over neighboring variables
    & Catalyzed production into sum bundle
      (W4, W6) \\[4pt]
Product update
    & Product of incoming messages
    & Catalyzed production into product
      bundle (W5) \\[4pt]
Retraction
    & SP--B deletion and table update
    & Bundle deletion plus rate-table
      update (Proposition~\ref{prop:functoriality}) \\
\bottomrule
\end{tabular}
\end{table}

\begin{table}[h]
\centering
\small
\caption{Scope of the recognition and reduction framework.}
\label{tab:scope}
\begin{tabular}{p{2.8cm}p{1.0cm}p{0.8cm}p{1.4cm}p{3.2cm}}
\toprule
\textbf{CRN class}
    & \textbf{W1--W6}
    & \textbf{R1}
    & \textbf{Pipeline}
    & \textbf{Interpretation} \\
\midrule
Napp--Adams compiled BP CRNs~\cite{NIPS2013_e5e63da7}
    & \checkmark
    & \checkmark
    & \checkmark
    & BP
      present by construction. \\[4pt]
CRNs after SP--B reduction
    & \checkmark
    & \checkmark
    & \checkmark
    & Closed under reduction by
      Proposition~\ref{prop:functoriality}.
      \newline Reduced CRN \newline remains a compiled
      BP implementation. \\[4pt]
Baum--Welch HMM CRNs~\cite{10.1098/rsif.2022.0877}
    & outside
    & outside
    & outside
    & Futile-cycle architecture implements
      EM/Baum--Welch learning rather than
      Napp--Adams sum-product BP. \\[4pt]
General biochemical CRNs
    & outside
    & outside
    & outside
    & Not expected to have message bundles,
      factor-table rate structure, or
      BP-normalizing recycling. \\
\bottomrule
\end{tabular}
\end{table}

\newpage 
Entries marked ``outside'' indicate only that the relevant 
semantics differ from the BP message-bundle structure captured 
by (W1)--(W6) and (R1), not that such CRNs are unreducible 
by other means.

\section{State-Cardinality Sweeps}
\label{app:cardinality}

The main experiments use binary variables in order to isolate the role of
graph topology in SP--B reduction. Here we report additional experiments
showing that the same reduction behavior persists for larger state spaces.
The reason is structural: SP--B reducibility is determined by the
factor-graph poset, not by the cardinality of the variable state spaces.
Increasing the alphabet size changes the absolute number of compiled CRN
species and reactions, but it does not change which variables or factors
are linear or colinear points.

We tested cardinalities $K \in \{2,3,4,5\}$ on chains, depth-4 trees, and
loopy-core graphs with tendrils. We also tested a mixed-cardinality chain.
For each instance, we apply the same reduce-then-compile pipeline as in
Section~\ref{sec:simulations}, and we report variable reduction, species
reduction, and maximum BP marginal discrepancy on the surviving variables.

\begin{table}[h]
\centering
\small
\caption{State-cardinality sweeps. Reduction percentages remain stable
across cardinalities because SP--B reducibility depends on graph topology
rather than alphabet size. Here $K$ denotes the common state-space
cardinality of each variable, except in the mixed-cardinality row.}
\label{tab:cardinality-sweeps}
\begin{tabular}{lcccc}
\toprule
\textbf{Family}
    & \textbf{Cardinalities}
    & \textbf{Variables}
    & \textbf{Species}
    & \textbf{Max BP discrepancy} \\
\midrule
Chain
    & $K=2,3,4,5$
    & $10 \to 1$
    & $(150,200,250,300) \to (9,12,15,18)$
    & $7.93{\times}10^{-10}$ to $5.55{\times}10^{-17}$ \\[3pt]

Tree
    & $K=2,3,4,5$
    & $15 \to 1$
    & $96.6\%$ reduction
    & near numerical precision \\[3pt]

Loopy+tendril
    & $K=2,3,4,5$
    & $16 \to 4$
    & $68.2\%$ reduction
    & near numerical precision for $K=4,5$ \\[3pt]

Mixed chain
    & $[2,3,4,5,4,3,2]$
    & $7 \to 1$
    & $94.0\%$ reduction
    & $5.79{\times}10^{-13}$ \\
\bottomrule
\end{tabular}
\end{table}

For the 10-variable chains, the compiled CRN sizes grow linearly with
$K$ in this family: the unreduced networks have
$150,200,250,$ and $300$ species for $K=2,3,4,5$, respectively, while the
reduced networks have $9,12,15,$ and $18$ species. Thus the variable
reduction is $90.0\%$ and the species reduction is $94.0\%$ in every case.
The BP discrepancy on the surviving variable remains negligible, decreasing
from $7.93 \times 10^{-10}$ for $K=2$ to $5.55 \times 10^{-17}$ for $K=5$.

The tree and loopy-core families show the same qualitative behavior.
Depth-4 trees reduce from $15$ variables to $1$ variable, with $96.6\%$
species reduction across all tested cardinalities. Loopy-core graphs with
tendrils reduce from $16$ variables to $4$ variables, with $68.2\%$ species
reduction across all tested cardinalities. In the loopy case, the reduction
removes the tendrils while preserving the irreducible core, so increasing
the state-space cardinality changes the size of the compiled CRN but not
the graph-theoretic core.

Finally, the mixed-cardinality chain
$[2,3,4,5,4,3,2]$ reduces from $7$ variables to $1$ variable, with
$94.0\%$ species reduction and BP discrepancy $5.79 \times 10^{-13}$.
This confirms that the reduction does not require all variables to have
the same alphabet size. The state spaces affect the sizes of the species
bundles and factor tables, while reducibility is controlled by the
underlying factor-graph topology.

\section{Topological Background}

\begin{definition}[Alexandrov topology]\label{def:alexandrov}
Let $(X,\le)$ be a poset. The \emph{Alexandrov topology} on $X$ is the topology whose open sets are the \emph{up-sets}:
\[
U\subseteq X \text{ is open } \quad \Longleftrightarrow \quad (\forall x\in U)(\forall y\in X)\ (x\le y \Rightarrow y\in U).
\]
Equivalently, the closed sets are exactly the \emph{down-sets}:
\[
C\subseteq X \text{ is closed } \quad \Longleftrightarrow \quad (\forall x\in C)(\forall y\in X)\ (y\le x \Rightarrow y\in C).
\]
We write $X_\A$ for the topological space obtained from a poset $\A$ equipped with its Alexandrov topology.
\end{definition}

\begin{remark}\label{rem:alexandrov-continuous}
A map between Alexandrov spaces is continuous precisely when it is order-preserving on the underlying posets.
\end{remark}


\subsection*{Homotopy for finite posets}
We adopt the notion of homotopy for finite posets used in \cite{sergeantperthuis2025minimacriticalpointsbethe}. 
Informally, one views a finite poset $\A$ as an Alexandrov space $X_\A$ and uses continuous (equivalently, order-preserving) maps to the target poset
$X_\A\times[0,1]\to X_\B$ .

\begin{definition}[Deformation retracts of posets]\label{def:deformation-retract}
Let $\B\subseteq \A$ be a subposet and let $i:\B\hookrightarrow\A$ be the inclusion.
A \emph{deformation retract} of $\A$ onto $\B$ consists of an order-preserving map $r:\A\to\B$ such that
\[
r\circ i = \id_{\B}
\qquad\text{and}\qquad
i\circ r \simeq \id_{\A},
\]
where $\simeq$ denotes homotopy in the sense of \cite{sergeantperthuis2025minimacriticalpointsbethe}.
It is a \emph{strong deformation retract} if, moreover, the homotopy from $i\circ r$ to $\id_{\A}$ fixes $\B$ pointwise.
\end{definition}

\begin{remark}\label{rem:homotopy-equivalence}
Two finite posets $\A$ and $\B$ are \emph{homotopy equivalent} if there exist order-preserving maps
$\phi:\A\to\B$ and $\psi:\B\to\A$ with $\psi\circ\phi\simeq \id_{\A}$ and $\phi\circ\psi\simeq \id_{\B}$
(in the same sense as above).
\end{remark}


\subsection*{Linear/colinear points and cores}
The reductions we use are based on deleting elements of a finite poset without changing its homotopy type.
The relevant notions are the following.

\begin{definition}[Linear and colinear points]\label{def:linear-colinear}
Let $\A$ be a finite poset and let $a\in\A$.

\smallskip
\noindent\textbf{(Linear).}
We say $a$ is \emph{linear} if there exists an element $a^\uparrow\in\A$ with $a<a^\uparrow$ such that every element above $a$
is also above $a^\uparrow$, i.e.
\begin{equation}\tag{L}\label{def:L}
(\forall b\in\A)\ \bigl(b\ge a \Rightarrow b\ge a^\uparrow\bigr).
\end{equation}

\smallskip
\noindent\textbf{(Colinear).}
Dually, we say $a$ is \emph{colinear} if there exists $a^\downarrow\in\A$ with $a^\downarrow<a$ such that every element below $a$
is also below $a^\downarrow$, i.e.
\begin{equation}\tag{coL}\label{def:coL}
(\forall b\in\A)\ \bigl(b\le a \Rightarrow b\le a^\downarrow\bigr).
\end{equation}
\end{definition}

\begin{remark}\label{rem:linear-intuition}
Condition \eqref{def:L} says that $a^\uparrow$ is a \emph{least} element among those strictly above $a$ (up to comparability):
once you pass above $a$, you have automatically passed above $a^\uparrow$. The colinear condition is the order-dual statement.
These are exactly the points that can be removed by an elementary up- or down-retraction without changing homotopy type
\cite{Stong1966FiniteTopologicalSpaces,sergeantperthuis2025minimacriticalpointsbethe}.
\end{remark}

\begin{definition}[Core of a finite poset {\cite{Stong1966FiniteTopologicalSpaces}}]\label{def:core}
The \emph{core} of a finite poset $\A$ is a subposet $\B\subseteq\A$ such that:
\begin{enumerate}
\item $\B$ has no linear or colinear points (computed within $\B$), and
\item $\B$ is a strong deformation retract of $\A$ (Definition~\ref{def:deformation-retract}).
\end{enumerate}
We denote a core of $\A$ by $co\A$.
\end{definition}

\begin{proposition}\label{core-poset-obtained}
Every finite poset $\A$ admits a core $co\A$. Moreover, one can obtain a strong deformation retract $\A\to co\A$
by iteratively applying up- and down-retractions that remove linear and colinear points.
Finally, two finite posets are homotopy equivalent if and only if their cores are isomorphic.
\end{proposition}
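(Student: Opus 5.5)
The statement to prove is the last one displayed, Proposition~\ref{core-poset-obtained}: every finite poset has a core, the core is reached by iterated removal of linear and colinear points, and homotopy equivalence is detected by cores. This is Stong's classical theorem on finite spaces. I would follow Stong's argument, translated into poset language through Remark~\ref{rem:alexandrov-continuous}.

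\textbf{Step 1: existence by elementary retractions.} Suppose $a\in\A$ is linear with witness $a^\uparrow$. Define $r:\A\to\A\setminus\{a\}$ by $r(a)=a^\uparrow$ and $r(b)=b$ otherwise. To check that $r$ is order-preserving, take $b\ge a$; by (L) we get $b\ge a^\uparrow$. For $b\le a$, we have $b\le a<a^\uparrow$. So $r$ is monotone and $r\circ i=\id$. Moreover $i\circ r\ge \id_\A$ pointwise.

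Two comparable monotone maps $f\le g$ are homotopic. The homotopy is $H(x,t)=f(x)$ for $t<1$ and $H(x,1)=g(x)$. This $H$ is continuous for the Alexandrov topology on the target, because the preimage of an up-set $U$ is $\bigl(f^{-1}(U)\times[0,1]\bigr)\cup\bigl(g^{-1}(U)\times\{1\}\bigr)$. This set is open since $f^{-1}(U)\subseteq g^{-1}(U)$ and $[0,1]$ contains the relatively open set $(t_0,1]$. I would check this continuity carefully, since it is the only topological input. The homotopy fixes $\A\setminus\{a\}$ pointwise, so $r$ is a strong deformation retract. The colinear case is order-dual.

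Because $\A$ is finite, iterating these steps terminates at a subposet $\B$ with no linear or colinear points. Strong deformation retracts compose, so $\B$ is a core.

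\textbf{Step 2: rigidity of cores.} This is the main obstacle. The key claim is that if $\B$ has no linear or colinear points and $f:\B\to\B$ is monotone with $f\simeq\id_\B$, then $f=\id_\B$.

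First reduce homotopy to the comparability relation. Any homotopy between maps of finite posets yields a finite fence $f=f_0\le f_1\ge f_2\le\dots f_n=\id$. This is Stong's lemma, obtained from the identification of the path components of $\B^\B$ in the compact-open topology; I would cite Stong here.

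It therefore suffices to show that $g\le\id$ or $g\ge\id$ forces $g=\id$ whenever $g$ is monotone. Take $g\le\id$ and suppose $g\neq\id$. Choose $x$ minimal with $g(x)\neq x$, so $g(x)<x$. For every $y<x$, minimality gives $y=g(y)\le g(x)$. Hence $g(x)$ is a maximum of the elements strictly below $x$, which makes $x$ colinear with $x^\downarrow=g(x)$. This is a contradiction. The case $g\ge\id$ is dual and produces a linear point. Walking along the fence from $\id$ then gives $f=\id$.

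\textbf{Step 3: classification.} Homotopy equivalence is transitive, and each poset is homotopy equivalent to its core. Hence $\A\simeq\A'$ if and only if $co\A\simeq co\A'$, and the "if" direction is immediate once cores are isomorphic.

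For the converse, let $\phi:co\A\to co\A'$ and $\psi:co\A'\to co\A$ satisfy $\psi\phi\simeq\id$ and $\phi\psi\simeq\id$. By Step 2 these homotopies force $\psi\phi=\id$ and $\phi\psi=\id$. So $\phi$ is an order isomorphism.

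Applying the same argument with $\A=\A'$ shows that the core is unique up to isomorphism. This justifies the notation $co\A$.
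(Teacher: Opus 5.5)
Your route is the paper's route made explicit. The paper's proof is a one-line citation of Stong's Theorem~2, and you reconstruct Stong's argument: elementary removal of linear/colinear points, the fence characterization of homotopy, and rigidity of posets without such points. You cite Stong only for the fence lemma. Steps~2 and~3 are correct as written, and the minimality argument turning $g\le\id_\B$ (resp.\ $g\ge\id_\B$) into a colinear (resp.\ linear) point is the right one. In Step~1 you should read (L) as quantifying over $b>a$; for $b=a$ it would force $a\ge a^\uparrow$. This is clearly what you intend.

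The one step that fails as written is the continuity check you flagged. The paper's convention is that open sets of $X_\A$ are up-sets. Under it, your map $H(x,t)=f(x)$ for $t<1$, $H(x,1)=g(x)$ is not continuous when $f\le g$ and $f\neq g$. The set $\bigl(f^{-1}(U)\times[0,1]\bigr)\cup\bigl(g^{-1}(U)\times\{1\}\bigr)$ is not a neighbourhood of any point $(x,1)$ with $x\in g^{-1}(U)\setminus f^{-1}(U)$.

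Concretely, take $\A=\{a<a^\uparrow\}$, $f=\id_\A$, $g=i\circ r$ (constant at $a^\uparrow$) and the open set $U=\{a^\uparrow\}$. The preimage contains $(a,1)$ but no $(a,t)$ with $t<1$, so your deformation in Step~1 is discontinuous. Your formula is correct for the opposite convention, where open sets are down-sets.

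In the up-set convention the larger map must occupy the open part: set $H(x,t)=g(x)$ for $t<1$ and $H(x,1)=f(x)$. Its preimage $\bigl(g^{-1}(U)\times[0,1)\bigr)\cup\bigl(f^{-1}(U)\times[0,1]\bigr)$ is open because $f^{-1}(U)\subseteq g^{-1}(U)$. For $f=\id_\A\le g=i\circ r$ this gives a homotopy from $i\circ r$ to $\id_\A$ that is stationary on $\A\setminus\{a\}$. That is exactly what Definition~\ref{def:deformation-retract} asks for, and the colinear case is dual. With this correction the proof is complete.
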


\begin{proof}
This is Theorem~2 of \cite{Stong1966FiniteTopologicalSpaces}.
\end{proof}


\section{Proofs of propositions, lemmas, and theorems}\label{sec:proofs}

\begin{proof}[Proof of Theorem ~\ref{thm:recon}]
The factor classes $J$ from (W6) and variable classes $[P]$ from 
Definition~\ref{def:variable-equivalence} immediately provide the node sets. 
By Lemma~\ref{lem:var-state-space}, if $P \sim_v P'$ then $|E_P| = |E_{P'}|$, 
so the state space $E_{[P]}$ is well-defined up to the canonical bijections 
induced by (W5) and (W6).

We wish to show that the incidence relation is well defined, 
which amounts to showing that for each factor class $J$ and distinct 
$Q, Q' \in V_J$, the variable classes $[Q]$ and $[Q']$ are distinct.

Suppose for contradiction that $Q \sim_v Q'$ for some $Q \neq Q'$ in $V_J$. 
Then there exists a chain $Q = P_0 \sim P_1 \sim \cdots \sim P_m = Q'$ of 
direct cross-links. Consider the first link: $Q \sim P_1$ means there exists 
a sum bundle $B$ with either $\tgt(B) = Q$ and $B \in \Cat(P_1)$, or 
$\tgt(B) = P_1$ and $B \in \Cat(Q)$.

In the first case, let $J'$ be the factor class containing $B$, so 
$B = B_{J' \to Q}$ and hence $Q \in V_{J'}$. Since $B \in \Cat(P_1)$ and 
$P_1$ is product-like, condition (W5) tells us that $B$ catalyzes production 
of $P_1$. By the structure of (W6), the sum bundle $B$ targets $Q$ and has 
$\Cat(B) = V_{J'} \setminus \{Q\}$. For $B$ to appear in $\Cat(P_1)$, the 
product bundle $P_1$ must be receiving messages from the same variable as $Q$ 
but through a different factor than $J'$. In particular, $P_1 \notin V_{J'}$.

The second case is symmetric: if $\tgt(B) = P_1$ and $B \in \Cat(Q)$, then 
$P_1$ is the target of some sum bundle while $Q$ is catalyzed by that bundle, 
again placing $Q$ and $P_1$ as product bundles for the same variable going 
to different factors.

In either case, each cross-link $P_i \sim P_{i+1}$ in the chain connects 
product bundles that represent the same variable but are incident to different 
factor classes. The key observation is that the relation $\sim$ \emph{cannot} 
connect two product bundles that are both in the same $V_J$: if $Q, Q' \in V_J$ 
with $Q \neq Q'$, then $Q$ and $Q'$ are targets of distinct sum bundles in $J$, 
namely $B_{J \to Q}$ and $B_{J \to Q'}$. By (W6.1), 
$\Cat(B_{J \to Q}) = V_J \setminus \{Q\}$ and 
$\Cat(B_{J \to Q'}) = V_J \setminus \{Q'\}$.

For $Q \sim Q'$ directly, we would need a sum bundle $B$ with $\tgt(B) = Q$ 
and $B \in \Cat(Q')$ (or vice versa). If $B \in J$, then $B = B_{J \to Q}$, 
and $B \in \Cat(Q')$ would require $Q'$ to be a product bundle catalyzed by 
this sum bundle. But $\Cat(B_{J \to Q}) = V_J \setminus \{Q\}$ consists of 
product bundles in $V_J$, not the catalytic neighbors of those bundles. 
The set $\Cat(Q')$ consists of sum bundles that catalyze $Q'$, which by the 
Napp--Adams structure are sum bundles targeting the same variable as $Q'$ 
from other factors. Since $B_{J \to Q}$ targets $Q$ (a different element of 
$V_J$, hence a different variable), we have $B_{J \to Q} \notin \Cat(Q')$.

If instead $B \notin J$, say $B \in J'$ for some other factor class, then 
$\tgt(B) = Q$ implies $Q \in V_{J'}$. For $B \in \Cat(Q')$, the sum bundle 
$B$ must be one of the catalytic inputs to $Q'$. These are sum bundles for 
the variable corresponding to $Q'$, coming from factors other than the one 
$Q'$ points to. Since $B$ targets $Q$ and $Q \in V_J$, the variable of $B$ 
is the variable of $Q$. For $B \in \Cat(Q')$, the variable of $Q'$ must equal 
the variable of $Q$. But $Q$ and $Q'$ are distinct elements of $V_J$, which 
by (W6.1) correspond to distinct targets of sum bundles in $J$---hence to 
distinct variables incident to factor $J$. This is a contradiction.

Therefore $Q \not\sim Q'$ for distinct $Q, Q' \in V_J$, and since $\sim_v$ is 
generated by $\sim$, we have $Q \not\sim_v Q'$. This shows $[Q] \neq [Q']$, 
and moreover $[Q] \cap V_J = \{Q\}$ (any other element of $[Q]$ in $V_J$ 
would have to equal $Q$ by the above).

Consequently, the map $Q \mapsto [Q]$ is a bijection from $V_J$ to the set of 
variable classes incident to $J$. The incidence relation is well-defined, and 
the factor table $\psi_J : \prod_{Q \in V_J} E_Q \to (0,\infty)$ from (W6.3) 
can be viewed as a function on $\prod_{[Q] : Q \in V_J} E_{[Q]}$ via the 
canonical identifications $E_Q \cong E_{[Q]}$.

Finally, uniqueness of $\psi_J$ up to positive scalar follows from (W6.3): 
the coefficient arrays $C_{B_{J \to Q}}$ are determined by $\Gamma$, and any 
two tables $\psi_J, \psi'_J$ satisfying (W6.3) must have constant ratio 
$\psi'_J / \psi_J = \lambda$ for some $\lambda > 0$, with the scalars 
$\kappa_{J \to Q}$ adjusting accordingly.
\end{proof}

\begin{proof}[Proof of Lemma ~\ref{lem:var-equiv}]
The reflexive-symmetric-transitive closure of any binary relation is an 
equivalence relation.
\end{proof}
\begin{proof}[Proof of Lemma ~\ref{lem:var-state-space}]
It suffices to show $P \sim P'$ implies $|E_P| = |E_{P'}|$; the general case 
follows by induction along $\sim$-chains.

Suppose $P \sim P'$ via sum bundle $B$ with $\tgt(B) = P$ and $B \in \Cat(P')$.
Let $J$ be the factor class containing $B$, so $B = B_{J \to P}$ and $P \in V_J$.
\begin{itemize}
    \item By (W6.2), $\iota_{J \to P} : E_P \xrightarrow{\cong} E_B$, so $|E_P| = |E_B|$.
    \item Since $B \in \Cat(P')$ and $P'$ is product-like, (W5.2) gives a bijection 
          $\theta_{P',B} : E_{P'} \to E_B$, so $|E_{P'}| = |E_B|$.
\end{itemize}
Hence $|E_P| = |E_B| = |E_{P'}|$.
\end{proof}

\begin{proof}[Proof of Theorem ~\ref{thm:BP-implementation}]
Fix a factor class $(J, V_J)$ and a product bundle $Q \in V_J$. Consider the 
associated sum bundle $B := B_{J \to Q} \in J$ from (W6). By conditions (W1)--(W3), 
every reaction with nonzero net stoichiometry in $B$ has internal net stoichiometry 
of the form $B_0 \leftrightarrow B_k$ for a unique $k \in E_B$, and no other bundle 
has nonzero net stoichiometry in that reaction.

For each $k \in E_B$, let $\frac{d}{dt}[B_k]^+$ denote the total positive contribution 
to $\frac{d}{dt}[B_k]$ coming from reactions whose internal net stoichiometry is 
$B_0 \to B_k$. By (W4), we may expand:
\[
\frac{d}{dt}[B_k]^+ = [B_0] \sum_{a \in \mathcal A_B} C_B(k; a) \prod_{Q' \in \Cat(B)} [X_{Q', a(Q')}].
\]

By the missing-one-neighbor property of (W6.1), we have $\Cat(B) = V_J \setminus \{Q\}$. 
Hence each assignment $j \in J_B = \prod_{Q' \in V_J \setminus \{Q\}} E_{Q'}$ is equivalently 
a choice of states $(x_{Q'})_{Q' \neq Q}$ for all other product bundles in $V_J$.

Now apply the clamped-table property (W6.3): there exists a strictly positive table 
$\psi_J : \prod_{Q' \in V_J} E_{Q'} \to (0, \infty)$ and a scalar $\kappa_{J \to Q} > 0$ 
such that for every $k \in E_Q$ (identified with $E_B$ via the bijection $\iota_{J \to Q}$) 
and every assignment $a \in \prod_{Q' \in V_J\setminus\{Q\}} E_{Q'}$:
\[
C_B(\iota_{J \to Q}(k); a) = \kappa_{J \to Q} \cdot \psi_J\bigl(x_Q = k,\, x_{Q'} = a(Q') \text{ for } Q' \neq Q\bigr).
\]

Substituting this identity into the expansion for $\frac{d}{dt}[B_k]^+$:
\[
\frac{d}{dt}[B_k]^+ = [B_0] \kappa_{J \to Q} \sum_{a \in \prod_{Q' \in V_J\setminus\{Q\}} E_{Q'}} 
\psi_J\bigl(x_Q = k,\, x_{Q'} = a(Q')\bigr) \prod_{Q' \in V_J\setminus\{Q\}} [Q'_{a(Q')}].
\]

Next, we identify the negative contribution to the ODE. By (R1), the only internal reactions 
with net stoichiometry $B_k \to B_0$ are the recycling reactions 
$B_k \xrightarrow{\rho_B} B_0$, with no catalysts and a bundlewise constant rate 
$\rho_B = \rho_{B_{J \to Q}}$. Hence the mass-action dynamics for coordinate $B_k$ are:
\[
\frac{d}{dt}[B_k] = \frac{d}{dt}[B_k]^+ - \rho_B [B_k].
\]

At a positive steady state, $\frac{d}{dt}[B_k] = 0$. Rearranging:
\begin{equation}\label{eq:sum-derived}
\frac{\rho_B}{[B_0]}[B_k] = \kappa_{J \to Q} \sum_{a \in \prod_{Q' \in V_J \setminus \{Q\}} E_{Q'}} 
\psi_J\bigl(x_Q = k,\, x_{Q'} = a(Q')\bigr) \prod_{Q' \in V_J\setminus\{Q\}} [Q'_{a(Q')}].
\end{equation}

This establishes equation~\eqref{eq:sum-steady}.

Let $P$ be a product bundle. By (W5), $P$ is product-like: for each $k \in E_P$ there 
is a unique assignment $a^{(k)} \in \mathcal A_P$ supporting production of $P_k$, and along 
each catalytic neighbor the induced map on state labels is a bijection.

By the structure of the Napp--Adams construction (which our recognition conditions 
capture), the producing reactions for $P_k$ have the schematic form:
\[
P_0 + \sum_{B \in \Cat(P)} B_k \longrightarrow P_k + \sum_{B \in \Cat(P)} B_k,
\]
where one coordinate labeled $k$ from each incident sum bundle catalyzes production 
of the coordinate labeled $k$ in $P$. (Here we use the bijections from (W5) to identify 
state labels across bundles.)

Under (R1), every such producing reaction has rate constant $\kappa_P$ independent of $k$. 
Thus the total production rate into $P_k$ is:
\[
\frac{d}{dt}[P_k]^+ = \kappa_P [P_0] \prod_{B \in \Cat(P)} [B_k].
\]

Here $\Cat(P)$ denotes the set of sum bundles incident to $P$---equivalently, 
for each factor class $J$ containing $P$ in its variable set $V_J$, we include the 
sum bundle $B_{J \to P}$.

Again by (R1), the only internal reactions with net stoichiometry $P_k \to P_0$ are 
the recycling reactions $P_k \xrightarrow{\rho_P} P_0$. Hence:
\[
\frac{d}{dt}[P_k] = \kappa_P [P_0] \prod_{B \in \Cat(P)} [B_k] - \rho_P [P_k].
\]

At a positive steady state:
\begin{equation}\label{eq:prod-derived}
\frac{\rho_P}{\kappa_P [P_0]}[P_k] = \prod_{B \in \Cat(P)} [B_k].
\end{equation}

This establishes equation~\eqref{eq:prod-steady}.

We now show that equations~\eqref{eq:sum-derived} and~\eqref{eq:prod-derived} are 
precisely the sum-product BP fixed-point equations on the reconstructed factor graph 
$\mathrm{FG}(\Gamma)$.

Recall from Theorem~\ref{thm:recon} that $\mathrm{FG}(\Gamma)$ has:
\begin{itemize}
\item Factor nodes corresponding to factor classes $J$;
\item Variable nodes corresponding to variable classes $[P]$ (equivalence classes of 
product bundles under $\sim_v$);
\item An edge between factor $J$ and variable $[P]$ whenever $[P] \cap V_J \neq \emptyset$;
\item Factor table $\psi_J : \prod_{Q \in V_J} E_Q \to (0, \infty)$ for each factor $J$.
\end{itemize}

In the standard BP notation on this factor graph:
\begin{itemize}
\item $S^{(j \to n)}_k$ denotes the $k$-th component of the sum message from factor $j$ 
to variable $n$;
\item $P^{(n \to j)}_k$ denotes the $k$-th component of the product message from variable 
$n$ to factor $j$.
\end{itemize}

The BP sum-product fixed-point equations are (cf.~\cite{NIPS2013_e5e63da7}):
\begin{align}
S^{(j \to n)}_k &\propto \sum_{\mathbf{k}^j : k^j_n = k} \psi_j(\mathbf{k}^j) 
\prod_{n' \in \mathrm{ne}(j) \setminus n} P^{(n' \to j)}_{k^j_{n'}}, \label{eq:BP-sum}\\
P^{(n \to j)}_k &\propto \prod_{j' \in \mathrm{ne}(n) \setminus j} S^{(j' \to n)}_k. \label{eq:BP-prod}
\end{align}

We establish the correspondence by setting:
\begin{align*}
S^{(J \to [Q])}_k &:= [B_{J \to Q, k}], \\
P^{([Q] \to J)}_k &:= [P_{Q, k}],
\end{align*}
where $Q$ is the unique representative of $[Q]$ in $V_J$.

\textbf{Sum messages:} In our CRN, the sum bundle $B_{J \to Q}$ corresponds to the 
message from factor $J$ to variable $[Q]$. Equation~\eqref{eq:sum-derived} states:
\[
\frac{\rho_B}{[B_0]}[B_k] = \kappa_{J \to Q} \sum_{a} \psi_J(x_Q = k, x_{Q'} = a(Q')) 
\prod_{Q' \in V_J\setminus\{Q\}} [Q'_{a(Q')}].
\]

Since product bundles $Q' \in V_J \setminus \{Q\}$ correspond to the other variables 
incident to factor $J$, and $[Q'_{a(Q')}]$ is the concentration of the product-message 
species, we have:
\[
[B_k] \propto \sum_{\mathbf{k}^J : k^J_Q = k} \psi_J(\mathbf{k}^J) \prod_{Q' \in V_J\setminus\{Q\}} [Q'_{k^J_{Q'}}],
\]
which matches equation~\eqref{eq:BP-sum} under our identification.

\textbf{Product messages:} For a product bundle $P$ corresponding to variable $[P]$ 
sending to factor $J$, equation~\eqref{eq:prod-derived} states:
\[
\frac{\rho_P}{\kappa_P [P_0]}[P_k] = \prod_{B \in \Cat(P)} [B_k].
\]

The catalytic neighbors $\Cat(P)$ are exactly the sum bundles from factors 
$J' \neq J$ incident to variable $[P]$. Hence:
\[
[P_k] \propto \prod_{J' \in \mathrm{ne}([P]) \setminus J} [B_{J' \to P, k}],
\]
which matches equation~\eqref{eq:BP-prod} under our identification.

\textbf{Normalization:} The proportionality constants $\rho_B/[B_0]$ and 
$\rho_P/(\kappa_P [P_0])$ serve as the per-message normalization factors. 
In standard BP, messages are defined only up to positive rescaling, and these 
normalization factors are precisely what allow the concentrations to be interpreted 
as (unnormalized) message components.

Since the sum of concentrations within each bundle is conserved (a consequence of 
the stoichiometric structure), normalization is automatically maintained by the 
reaction dynamics.

This completes the identification of CRN steady states with BP fixed points.
\end{proof}

\begin{definition}[Napp Compilation]\label{def:napp-compilation}
Given a factor graph $(A, H)$ with factor tables $\psi_j \propto e^{-H_j}$, the 
\emph{Napp compilation} is the mass-action CRN
\[
\mathcal{N}(A, H)
\]
constructed as follows:
\begin{enumerate}
\item For every directed edge $j \to n$ in the factor graph, introduce a sum bundle 
$\mathsf{S}^{j \to n}$ and a product bundle $\mathsf{P}^{n \to j}$. Each bundle $B$ 
consists of a zero species $B_0$ and one coordinate species $B_k$ for every symbol 
$k$ of the underlying variable.

\item For each bundle $B$, add recycling reactions:
\[
B_k \xrightarrow{\kappa_r} B_0, \quad k \geq 1,
\]
with a single global recycling rate $\kappa_r > 0$.

\item For each edge $j \to n$ and each assignment $\mathbf{k}^j \in K^j$ with 
$(\mathbf{k}^j)_n = k$, add a sum-bundle production reaction:
\[
\mathsf{S}^{j \to n}_0 + \sum_{n' \in \mathrm{ne}(j) \setminus n} \mathsf{P}^{n' \to j}_{(\mathbf{k}^j)_{n'}} 
\xrightarrow{\psi_j(\mathbf{k}^j)} 
\mathsf{S}^{j \to n}_k + \sum_{n' \in \mathrm{ne}(j) \setminus n} \mathsf{P}^{n' \to j}_{(\mathbf{k}^j)_{n'}}.
\]

\item For each variable $n$ and each state $k$, add product-bundle reactions:
\[
\mathsf{P}^{n \to j}_0 + \sum_{j' \in \mathrm{ne}(n) \setminus j} \mathsf{S}^{j' \to n}_k 
\xrightarrow{\kappa_{\mathrm{prod}}} 
\mathsf{P}^{n \to j}_k + \sum_{j' \in \mathrm{ne}(n) \setminus j} \mathsf{S}^{j' \to n}_k,
\]
with a single global production rate $\kappa_{\mathrm{prod}} > 0$.
\end{enumerate}

We write $\mathcal{N}(A, H)$ (or simply $\mathcal{N}(\mathrm{FG})$) for this CRN and 
refer to the map $\mathcal{N} : \mathbf{FGraph} \to \mathbf{CRN}$ as the Napp compilation.
\end{definition}

\begin{corollary}[BP $\leftrightarrow$ CRN]\label{cor:roundtrip}
Let $\Gamma=(X,R,\kappa)$ be a mass--action CRN satisfying
\textup{(W1)--(W6)} and \textup{(R1)}, and let $\mathrm{FG}(\Gamma)$ be the
factor graph with factor tables $\{\psi_J\}$ reconstructed by
Theorem~\textup{\ref{thm:recon}}. Let
\[
  \Gamma^{\mathrm N} \;:=\; \mathcal N\bigl(\mathrm{FG}(\Gamma)\bigr)
\]
be the BP--CRN obtained by applying the Napp--Adams compilation to
$\mathrm{FG}(\Gamma)$, with any fixed choice of recycling and production
constants.

Then there exist positive scalars $\{s_B>0\}$, one for each bundle $B$, and
a choice of rate constants for $\Gamma^{\mathrm N}$ such that the following
hold.
\begin{enumerate}
  \item The species of $\Gamma$ and $\Gamma^{\mathrm N}$ can be put into
        bijection bundlewise, preserving zero versus coordinate species,
        the distinction between sum and product bundles, and catalytic
        neighbor sets, and under this bijection the change of variables
        \[
          [X_{B,k}]^{\mathrm N} \;=\; s_B\,[X_{B,k}]
        \]
        conjugates the mass--action vector field of $\Gamma$ to that of
        $\Gamma^{\mathrm N}$.

  \item In particular, positive steady states of $\Gamma$ correspond
        bijectively, via the same bundlewise rescaling, to positive steady
        states of $\Gamma^{\mathrm N}$.

  \item From the main result of Napp--Adams applied to
        $\Gamma^{\mathrm N}$, any positive steady state of $\Gamma$
        therefore determines a collection of BP messages (defined up to the
        usual per--message positive rescaling) on $\mathrm{FG}(\Gamma)$
        that satisfy the sum--product fixed--point equations for the
        recovered factor tables $\{\psi_J\}$.
\end{enumerate}
\end{corollary}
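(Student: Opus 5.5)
The plan is to build the bijection by matching reactions and rate constants bundle by bundle, and then to check that the rescaling conjugates the two vector fields term by term.

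\textbf{Step 1: the species bijection.} By Theorem~\ref{thm:recon}, $\mathrm{FG}(\Gamma)$ has factor nodes $J$ and variable nodes $[P]$. By (W6.1) each factor class $J$ is in bijection with $V_J$, and the unique representative $Q\in V_J$ of each incident variable class was shown in the proof of Theorem~\ref{thm:recon}. So the directed edges $J\to[Q]$ of $\mathrm{FG}(\Gamma)$ are in bijection with the sum bundles $B_{J\to Q}$ of $\Gamma$. Map $B_{J\to Q}$ to $\mathsf{S}^{J\to[Q]}$ and $Q$ to $\mathsf{P}^{[Q]\to J}$. Zero species go to zero species. Coordinates are matched through $\iota_{J\to Q}$ and the bijections $\theta_{P,Q}$ of (W5).

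This requires a lemma: every product bundle of $\Gamma$ lies in exactly one $V_J$. I would extract it from the catalytic structure: $\Cat(P)$ consists of the sum bundles $B_{J'\to P'}$ with $P'\sim_v P$ and $J'\neq J$. With this lemma, the map on bundles is a bijection, and catalytic neighbor sets correspond. On the sum side this follows from (W6.1). On the product side, I would check that $\Cat(P)$ is exactly the set of $\mathsf{S}^{J'\to[P]}$ with $J'\in\mathrm{ne}([P])\setminus J$.

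\textbf{Step 2: the rate constants and scalars.} The mass-action ODE of $\Gamma$ is fully determined by (W4), (R1), and the assignment tables (W5, W6.3). For a sum bundle it reads
\[
\tfrac{d}{dt}[B_k]=[B_0]\,\kappa_{J\to Q}\sum_a\psi_J(k,a)\prod[Q'_{a}]-\rho_B[B_k].
\]
For a product bundle it reads
\[
\tfrac{d}{dt}[P_k]=\kappa_P[P_0]\prod_{B\in\Cat(P)}[B_k]-\rho_P[P_k].
\]
The compiled network $\Gamma^{\mathrm N}$ has the same monomial shapes. Allowing per-bundle rates, I set the recycling rate of each compiled bundle to $\rho_B$. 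For the production rates, I take $\lambda_{J\to Q}=c_{J\to Q}\psi_J$ and $\kappa'_P$, both to be determined.

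Substituting $y_{B,k}=s_B x_{B,k}$ (zero species included), the recycling terms match automatically. Matching the production terms gives one condition per bundle: $c_{J\to Q}\prod_{Q'}s_{Q'}=\kappa_{J\to Q}$ for sum bundles and $\kappa'_P\prod_{B\in\Cat(P)}s_B=\kappa_P$ for product bundles. The simplest solution is $s_B\equiv1$ with $c_{J\to Q}=\kappa_{J\to Q}$ and $\kappa'_P=\kappa_P$. If one insists on the fixed global constants, the scalars $s_B$ must instead absorb the discrepancies, and solvability requires solving these multiplicative equations in log coordinates.

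\textbf{Step 3: consequences.} A diffeomorphic linear conjugacy of vector fields maps zeros to zeros and preserves positivity, which gives item 2. Item 3 then follows either from Napp--Adams or directly from Theorem~\ref{thm:BP-implementation}. In either case, BP messages are only defined up to per-message rescaling, so the factors $s_B$ are harmless.

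\textbf{Main obstacle.} I expect the hardest part to be Step 1's structural claim: that $\Cat(P)$ corresponds exactly to the compiled catalysts, and that no product bundle is shared between factor classes. The recognition conditions do not state this outright, so I would first try to derive it from (W5.2) and Definition~\ref{def:variable-equivalence}, as in the proof of Theorem~\ref{thm:recon}. A secondary issue is that "any fixed choice" of global constants may not be reachable by rescaling alone. I would resolve this by reading the statement as permitting a choice of rate constants, as its phrasing suggests.
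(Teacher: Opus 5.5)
Your route is the paper's route. You identify bundles of $\Gamma$ with directed incidences of $\mathrm{FG}(\Gamma)$, choose the compiled constants so that every production and recycling coefficient agrees after a bundlewise rescaling, conclude conjugacy, and read off steady states and BP from Theorem~\ref{thm:BP-implementation}. Steps~2 and~3 are correct. Your reading of the rate-constant clause is also the one the paper's proof adopts: it too \emph{chooses} the compiled recycling, product and sum constants.

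One correction to your aside about fixed global constants. The $s_B$ cannot absorb recycling mismatches at all, because the linear terms $-\rho_B[B_k]$ are invariant under diagonal rescaling. Bundle-dependent $\rho_B$ therefore already rule out the literal ``any fixed choice'' reading.

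The real problem is the Step~1 lemma. Your plan to derive it from (W5.2) and Definition~\ref{def:variable-equivalence} would fail, because it is not a consequence of (W1)--(W6) and (R1). Condition (W6.1) constrains only the catalysts \emph{of} sum bundles, and (W5) constrains only the diagonal shape of product production. Nothing fixes which bundles catalyze a product bundle, or how many of the sets $V_J$ contain it. Two examples show this:
\begin{itemize}
\item A product bundle $P$ may lie in $V_J\cap V_{J'}$. Then $\mathrm{FG}(\Gamma)$ has edges $[P]$--$J$ and $[P]$--$J'$, so $\Gamma^{\mathrm N}$ contains two bundles $\mathsf P^{[P]\to J},\mathsf P^{[P]\to J'}$ where $\Gamma$ has one. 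No bundlewise bijection exists.
\item A bundle $P\in V_J$ may be catalyzed by $B_{J\to Q}$ with $Q\in V_J\setminus\{P\}$; equal cardinalities suffice for (W5). Then $Q\sim P$, which already contradicts the well-definedness claim of Theorem~\ref{thm:recon}.
\end{itemize}
The proof of Theorem~\ref{thm:recon} that you meant to imitate gets the needed fact only by appealing to ``the Napp--Adams structure.'' The paper's proof of this corollary simply asserts that product bundles are ``indexed by directed incidences $(Q,J)$.'' So this cannot be derived; it must be added as a hypothesis:
\begin{itemize}
\item every bundle of $\Gamma$ is either some $B_{J\to Q}$ or lies in exactly one $V_J$;
\item for $P\in V_J$, $\Cat(P)=\{B_{J'\to P'}: J'\neq J,\ P'\in[P]\cap V_{J'}\}$.
\end{itemize}

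There is a second gap, which you do not flag and the paper also passes over with the phrase ``preserving state labels.'' Matching coordinates ``through $\iota_{J\to Q}$ and $\theta_{P,Q}$'' is overdetermined: these bijections must compose to the identity around every cycle they form. For example, take $P\in V_J$ and $P'\in V_{J'}$ with $B_{J'\to P'}\in\Cat(P)$ and $B_{J\to P}\in\Cat(P')$; this happens for any variable of degree at least $2$. Then $\iota_{J\to P}^{-1}\circ\theta_{P',B_{J\to P}}\circ\iota_{J'\to P'}^{-1}\circ\theta_{P,B_{J'\to P'}}$ is a permutation of $E_P$, and Napp's diagonal reactions require it to be the identity.
\begin{itemize}
\item Nothing in (W5)--(W6) enforces this.
\item Lemma~\ref{lem:var-state-space} only gives equal cardinalities.
\item Relabeling coordinates changes such a composite only by conjugation, so a nontrivial one obstructs any label-preserving conjugacy with $\Gamma^{\mathrm N}$.
\end{itemize}
With these compatibility conditions assumed as well, your Steps~1--3 go through with $s_B\equiv 1$.
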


\begin{proof}[Proof of ~\ref{cor:roundtrip}]
Let $\mathrm{FG}(\Gamma)$ be reconstructed from $\Gamma$ under (W1)--(W6), with factor 
classes $(J, V_J)$ and factor tables $\psi_J$ from (W6). Consider the Napp compilation 
$\Gamma^N := \mathcal{N}(\mathrm{FG}(\Gamma))$, in which each directed incidence 
$(J, Q)$ with $Q \in V_J$ produces a sum bundle $\mathsf{S}^{J \to Q}$ and each 
directed incidence $(Q, J)$ produces a product bundle $\mathsf{P}^{Q \to J}$.

By the construction of $\mathrm{FG}(\Gamma)$ in Theorem~\ref{thm:recon}, 
the bundles of $\Gamma$ decompose into:
\begin{enumerate}
\item Sum bundles $B_{J \to Q}$ indexed by pairs $(J, Q)$ with $Q \in V_J$;
\item Product bundles $P$ indexed by directed incidences $(Q, J)$.
\end{enumerate}

The Napp compilation produces exactly one bundle of each type for each such directed 
incidence. Hence there is a canonical bundlewise bijection between the species sets 
of $\Gamma$ and $\Gamma^N$, preserving bundle type (sum vs.\ product), zero vs.\ 
coordinate species, and state labels.

Moreover, by (W4) and (W5), the reaction monomials of $\Gamma$ match the Napp monomials:
\begin{itemize}
\item Sum-bundle production terms are multilinear monomials with one coordinate 
from each catalytic neighbor;
\item Product-bundle production terms are diagonal in the state label $k$.
\end{itemize}

Fix a factor class $(J, V_J)$ and $Q \in V_J$. By (W6.3), the production coefficients 
of the corresponding sum bundle $B_{J \to Q}$ satisfy:
\[
C_{B_{J \to Q}}(k; a) = \kappa_{J \to Q} \cdot \psi_J(x_Q = k, x_{Q'} = a(Q') \text{ for } Q' \neq Q).
\]

In the Napp compilation, the analogous producing reactions for $\mathsf{S}^{J \to Q}_k$ 
have rate constants proportional to the same clamped table entries of $\psi_J$, with 
a user-chosen proportionality constant. Choose these Napp proportionality constants 
so that, after a bundlewise rescaling
\[
[X_{B,k}]^N = s_B [X_{B,k}],
\]
the coefficients of every monomial in the sum-bundle production polynomials agree 
term-by-term between $\Gamma$ and $\Gamma^N$.

Similarly, by (R1) we may choose Napp recycling and product-production constants to 
match the internal $k$-uniform recycling and $k$-uniform product production in $\Gamma$ 
(again after the same bundlewise rescaling).

Because mass-action vector fields are polynomial and determined by the reaction 
monomials and their coefficients, the term-by-term agreement implies that the change 
of variables $[X_{B,k}]^N = s_B [X_{B,k}]$ conjugates the ODE of $\Gamma$ to that 
of $\Gamma^N$.
Since the vector fields are conjugate under a linear (diagonal, positive) change of 
coordinates, positive steady states correspond bijectively under this rescaling. 
A point $x^* \in \mathbb{R}^M_{>0}$ is a steady state of $\Gamma$ if and only if 
the rescaled point $(s_B x^*_{B,k})$ is a steady state of $\Gamma^N$.
Theorem~\ref{thm:BP-implementation} establishes that positive steady states of any 
CRN satisfying (W1)--(W6) and (R1) correspond to BP fixed points on the reconstructed 
factor graph. Applying this to both $\Gamma$ and $\Gamma^N$:
\begin{itemize}
\item Any positive steady state $x^*$ of $\Gamma$ corresponds to a BP fixed point on 
$\mathrm{FG}(\Gamma)$;
\item The rescaled steady state $(s_B x^*_{B,k})$ of $\Gamma^N$ corresponds to the 
same BP fixed point (since rescaling messages by positive constants does not change 
BP fixed-point status).
\end{itemize}

Thus steady states of $\Gamma$ determine BP fixed points on $\mathrm{FG}(\Gamma)$, 
with factor tables $\{\psi_J\}$ recovered from the reaction coefficients.
\end{proof}

\subsection{SP--B retractions in the space of factor tables}\label{sec:SP-psi-rules}
We briefly recall the SP--B update rules in the space of factor tables.
Write $\psi_c\propto e^{-H_c}$.
\begin{itemize}
  \item \textbf{Linear retraction.} The Hamiltonians on survivors are unchanged (restriction), hence \(\psi'_c=\psi_c\) for all surviving \(c\).
  \item \textbf{Colinear retraction.} Let \(a\) be maximal colinear with lower neighbor \(a^{\downarrow}\).
  Let \(B=\mathcal A\setminus\{a\}\).
  Let \(b^{\uparrow}\) be the unique upper cover of \(a^{\downarrow}\) in \(B\).
  Define
  \[
  \widehat H_{a^{\downarrow}}(x)\;=\;\ln\sum_{z\in E_a:\ z_{a^{\downarrow}}=x}\exp\{-H_a(z)+H_{a^{\downarrow}}(x)\}.
  \]
  Then \( \widetilde H_c=H_c\) for \(c\neq b^{\uparrow}\) and \(\widetilde H_{b^{\uparrow}}=H_{b^{\uparrow}}-\widehat H_{a^{\downarrow}}\circ\pi_{b^{\uparrow}a^{\downarrow}}\).
  Exponentiating gives the two standard factor--table forms:
  \[
  \text{(Unary }\to\text{ higher parity)}\quad
  \psi'_{b^{\uparrow}}(\mathbf k)=\psi_{b^{\uparrow}}(\mathbf k)\,\psi_a(k_{a^{\downarrow}}),
  \]
  \[
  \text{(Higher parity }\to\text{ unary on }a^{\downarrow})\quad
  \psi'_{a^{\downarrow}}(x)=\sum_{z:\,z_{a^{\downarrow}}=x}\psi_a(z).
  \]
\end{itemize}

\medskip

\begin{lemma}[Factor version of S--P Eq.~4.29: colinear, survivor linear]
Let $a\in\mathcal A$ be colinear and maximal, let $B=\mathcal A\setminus\{a\}$, and let
$b=a^{\downarrow}\in B$ be \emph{linear} in $B$ with unique upper cover $b^{\uparrow}\in B$.
Write factor tables $\psi_c\propto e^{-H_c}$ on $B$.
For $x\in E_b$ and any $\mathbf k\in E_{b^{\uparrow}}$ with $\mathbf k|_{b}=x$,
\[
\quad
\psi'_{b^{\uparrow}}(\mathbf k)\ \propto\
\psi_{b^{\uparrow}}(\mathbf k)\;\cdot\;
\frac{\displaystyle\sum_{z\in E_{a}:\ z_{b}=x}\ \psi_{a}(z)}{\psi_{b}(x)}
\quad
\]
and $\psi'_c=\psi_c$ for all $c\neq b^{\uparrow}$.

\end{lemma}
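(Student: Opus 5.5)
The plan is to derive the formula from the Hamiltonian-level colinear update already recorded in the bullet above, and then exponentiate. In that update we have $\widetilde H_c = H_c$ for $c\neq b^{\uparrow}$. This immediately gives $\psi'_c=\psi_c$ (up to the global proportionality convention $\psi\propto e^{-H}$) for every $c\neq b^{\uparrow}$. So the only work concerns the single modified region $b^{\uparrow}$. For that region the update reads
\[
\widetilde H_{b^{\uparrow}} = H_{b^{\uparrow}} - \widehat H_{b}\circ\pi_{b^{\uparrow}b},
\qquad
\widehat H_{b}(x)=\ln\sum_{z\in E_a:\ z_b=x}\exp\{-H_a(z)+H_b(x)\},
\]
where we have used $a^{\downarrow}=b$. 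The hypothesis that $b$ is linear in $B$ is exactly what makes $b^{\uparrow}$ the unique upper cover of $a^{\downarrow}$ in $B$. This is the region receiving the correction in the SP--B colinear rule (Propositions 5--6 of \cite{sergeantperthuis2025minimacriticalpointsbethe}), so the rule applies verbatim.

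\textbf{Step 1 (isolate the $H_b$ term).} Since $H_b(x)$ does not depend on the summation variable $z$, it factors out of the sum:
\[
\widehat H_b(x) = H_b(x) + \ln\sum_{z:\,z_b=x} e^{-H_a(z)}.
\]

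\textbf{Step 2 (substitute the factor tables).} Write $e^{-H_a}=c_a\psi_a$ and $e^{-H_b}=c_b\psi_b$ with positive constants $c_a,c_b$. Then
\[
e^{\widehat H_b(x)} = \frac{c_a}{c_b}\cdot\frac{\sum_{z:\,z_b=x}\psi_a(z)}{\psi_b(x)}.
\]

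\textbf{Step 3 (exponentiate the update).} From $\widetilde H_{b^{\uparrow}}$ we get $e^{-\widetilde H_{b^{\uparrow}}(\mathbf k)}=e^{-H_{b^{\uparrow}}(\mathbf k)}\,e^{\widehat H_b(\mathbf k|_b)}$. Substituting Step 2 yields the claimed formula with $x=\mathbf k|_b$, and all constants are absorbed into $\propto$.

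The main obstacle is bookkeeping rather than analysis. Consistency with the two ``standard forms'' listed above needs a check. Here $\psi_b$ appears in the denominator because $b$ carries its own factor, which must not be double counted once $a$'s contribution is pushed up to $b^{\uparrow}$. The special case $\psi_b\equiv 1$ should reduce to the plain marginalization form. I would also verify that the proportionality constant is uniform in $\mathbf k$, i.e.\ that $c_a,c_b$ are global constants, so that it is harmless for BP fixed points and for the CRN rates $\lambda'_{j\to n}=c_{j\to n}\psi'_j$ in Lemma~\ref{thm:functor-BP}. Strict positivity of $\psi_a$ and $\psi_b$ ensures the logarithm and the quotient are well defined.
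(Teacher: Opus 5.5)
Your proposal is correct and is essentially the paper's own argument. You start from the Hamiltonian-level colinear update $\widetilde H_{b^{\uparrow}}=H_{b^{\uparrow}}-\widehat H_b\circ\pi_{b^{\uparrow}b}$, pull $H_b(x)$ out so that $e^{\widehat H_b(x)}=\sum_{z:\,z_b=x}e^{-H_a(z)}/e^{-H_b(x)}$, and exponentiate, absorbing the global constants into $\propto$. Your added remark that, in the finite poset $B$, linearity of $b$ is equivalent to $b$ having a unique upper cover usefully justifies a point the paper leaves implicit.
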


\begin{lemma}[Factor version of S--P Eq.~4.30: colinear, survivor not linear]
Let $a\in\mathcal A$ be colinear and maximal, $B=\mathcal A\setminus\{a\}$,
$b=a^{\downarrow}\in B$ \emph{not} linear in $B$, and let $c_B(b)$ be the
Möbius coefficient of $b$ in the poset $B$.
Then
\[
\quad
\psi'_{b}(x)\ \propto\
\big(\psi_b(x)\big)^{\,1-\frac{1}{c_B(b)}}\;
\Big(\ \sum_{z\in E_{a}:\ z_b=x}\ \psi_a(z)\ \Big)^{\frac{1}{c_B(b)}}
\quad\qquad (x\in E_b),
\]
and $\psi'_c=\psi_c$ for all $c\neq b$.
\end{lemma}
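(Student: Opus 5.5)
The plan is to derive the stated factor-table rule by exponentiating the Hamiltonian-level update of Sergeant--Perthuis--Boitel (Eq.~4.30 of \cite{sergeantperthuis2025minimacriticalpointsbethe}) for a maximal colinear point $a$ whose lower neighbor $b=a^{\downarrow}$ is not linear in $B=\mathcal A\setminus\{a\}$. I follow the same route as the colinear rules recalled in Section~\ref{sec:SP-psi-rules}, but track the M\"obius weight that appears when $b$ has no unique upper cover to absorb the removed potential. Throughout I will take SP--B's Hamiltonian update in the case needed here as given, and the argument is essentially bookkeeping built on it.

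\textbf{Step 1: form the local marginal potential.} First I form the same local log-marginal as in the colinear rule,
\[
\widehat H_{b}(x)=\ln\sum_{z\in E_a:\ z_b=x}\exp\{-H_a(z)+H_b(x)\}.
\]
Because $a$ is colinear with $a^{\downarrow}=b$, every element below $a$ lies below $b$. Hence integrating out the coordinates of $a$ that are not in $b$ changes only a function of $x_b$. This localization is the content of (coL) in Definition~\ref{def:linear-colinear}.

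\textbf{Step 2: invoke the Hamiltonian update for non-linear $b$.} In the non-linear case, SP--B do not push $\widehat H_b$ onto an upper cover, since none is canonical. Instead, Eq.~4.30 reinserts it on $b$ itself, divided by the M\"obius coefficient $c_B(b)$ of $b$ in $B$:
\[
H'_b=H_b-\tfrac{1}{c_B(b)}\,\widehat H_b,\qquad H'_c=H_c \quad (c\ne b).
\]
The reason for the weight is that the Bethe free energy counts the region $b$ with multiplicity $c_B(b)$. Dividing by it ensures that the BP stationarity conditions on $B$ reproduce the original ones restricted to the survivors. I must check that $c_B(b)\neq 0$ here, which I expect follows from $b$ being non-linear together with the hypotheses in SP--B.

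\textbf{Step 3: exponentiate.} Using $\psi_c\propto e^{-H_c}$, I compute
\[
e^{-H'_b(x)}=e^{-H_b(x)}\Big(e^{H_b(x)}\sum_{z:\,z_b=x}e^{-H_a(z)}\Big)^{1/c_B(b)}.
\]
This simplifies to $\psi_b(x)^{1-1/c_B(b)}\big(\sum_{z:\,z_b=x}\psi_a(z)\big)^{1/c_B(b)}$, up to a positive constant coming from the normalizations of $\psi_a$ and $\psi_b$. All other tables are unchanged, which gives $\psi'_c=\psi_c$ for $c\neq b$.

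\textbf{Step 4: handle the constants.} Finally I note that the arbitrary positive constants in $\psi\propto e^{-H}$ only rescale $\psi'_b$. BP fixed points, and the CRN rates in Lemma~\ref{thm:functor-BP}, are invariant under such rescaling, which justifies writing $\propto$.

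\textbf{Main obstacle.} The hardest step is Step~2: matching conventions. I need to confirm that SP--B's $H$ in Eq.~4.30 is the region-potential family, not its M\"obius-inverted interaction potential. Otherwise the exponent would involve $c_B(b)$ differently. I would check this first on a small example, namely a factor over two variables attached to a variable node of degree at least three, where $c_B(b)=1-\deg$.
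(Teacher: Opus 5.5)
Your proposal is correct and is essentially the paper's proof: the paper likewise takes the SP--B Hamiltonian update $\widetilde H_b=H_b-\frac{1}{c_B(b)}\ln\sum_{z:\,z_b=x}e^{-H_a(z)+H_b(x)}$ as given and exponentiates it exactly as in your Step~3, without separately discussing $c_B(b)\neq 0$ or which potential convention is meant. On your main obstacle, the exponent $1/c_B(b)$ is exactly what arises when the free energy is written as $\sum_{d}c_{\mathcal A}(d)\bigl(\mathbb{E}_{p_d}[H_d]-S(p_d)\bigr)$ with $S$ the entropy (minimizing out $p_a$ at fixed $p_b$ leaves $-S(p_b)-\mathbb{E}_{p_b}\bigl[\ln\sum_{z:\,z_b=x}e^{-H_a(z)}\bigr]$, and $c_B(b)=c_{\mathcal A}(b)+1$); note also that your proposed test case does not meet the hypotheses, since in a factor-graph poset a pairwise factor has two incomparable lower covers and is never colinear, so the instance to check is a unary factor $a$ on a variable $b$ with $\deg_B(b)\ge 2$, where $c_B(b)=1-\deg_B(b)\le -1$.
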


\begin{proof}[Both lemmas]
Start from the $H$-space statements in S--P and exponentiate.
For Eq.~4.29: $\widetilde H_{b^{\uparrow}}=H_{b^{\uparrow}}-\widehat H_b\circ\pi_{b^{\uparrow}b}$ with
$\widehat H_b(x)=\ln\sum_{z:\ z_b=x}e^{-H_a(z)+H_b(x)}$ gives
\[
\psi'_{b^{\uparrow}}(\mathbf k)\ \propto\
e^{-\widetilde H_{b^{\uparrow}}(\mathbf k)}\ \propto\
e^{-H_{b^{\uparrow}}(\mathbf k)}\ e^{\widehat H_b(x)}
=\psi_{b^{\uparrow}}(\mathbf k)\ \frac{\sum_{z:\ z_b=x}e^{-H_a(z)}}{e^{-H_b(x)}}
=\psi_{b^{\uparrow}}(\mathbf k)\ \frac{\sum_{z:\ z_b=x}\psi_a(z)}{\psi_b(x)}.
\]
For Eq.~4.30: $\widetilde H_b=H_b-\frac{1}{c_B(b)}\ln\sum_{z:\ z_b=x}e^{-H_a(z)+H_b(x)}$
yields
\[
\psi'_b(x)\ \propto\ e^{-\widetilde H_b(x)}
=\psi_b(x)\,\Big(\sum_{z:\ z_b=x}e^{-H_a(z)+H_b(x)}\Big)^{\!1/c_B(b)}
=\psi_b(x)\,\Big(\tfrac{1}{\psi_b(x)}\sum_{z:\ z_b=x}\psi_a(z)\Big)^{\!1/c_B(b)},
\]
we can verify this is correct by taking logs.
\end{proof}

\begin{proof}[Proof of Prop ~\ref{prop:functoriality}]
We verify the two functor axioms: preservation of identities and preservation of composition.

\medskip
\noindent\textbf{(Identities).} 
If $r = \mathrm{id}_{(A,H)}$, then no poset point is retracted and the Hamiltonians $H_c$ 
(hence the tables $\psi_c$) are unchanged. The compilation on objects produces the same 
CRN $(X, R, \kappa)$, and on morphisms we have
\[
\varphi_0 = \mathrm{id}_X, \quad \varphi_1 = \mathrm{id}_R, \quad \psi = \mathrm{id}_\kappa,
\]
so $\mathcal{N}(\mathrm{id}_{(A,H)}) = \mathrm{id}_{\mathcal{N}(A,H)}$.

\medskip
\noindent\textbf{(Composition).} 
Let
\[
r_1 : (A, H) \to (B, G), \quad r_2 : (B, G) \to (C, K)
\]
be two morphisms in $\mathbf{FGraphRet}$. Their composite $r_2 \circ r_1$ is again a 
finite composite of linear/colinear retractions. On factor tables, the SP--B update 
rules are defined by local formulas in $\psi$-space; applying $r_1$ and then $r_2$ 
gives the same final tables $\psi''_c$ as applying the composite $r_2 \circ r_1$ once.

On CRNs we have:
\begin{itemize}
\item $\varphi_0^{r_2 \circ r_1}$ deletes exactly the union of bundles deleted by $r_1$ 
and $r_2$; this is the same as $\varphi_0^{r_2} \circ \varphi_0^{r_1}$ bundlewise.

\item Likewise, $\varphi_1^{r_2 \circ r_1} = \varphi_1^{r_2} \circ \varphi_1^{r_1}$ 
because a reaction is present after the composite if and only if all its bundles 
survive both stages.

\item For the rate part, each surviving edge $j \to n$ carries a scalar $c_{j \to n}$ 
that we keep fixed along the entire chain. If $\psi_j \mapsto \psi'_j \mapsto \psi''_j$ 
are the factor tables under $r_1$ and $r_2$, then
\[
\lambda_{j \to n} = c_{j \to n} \psi_j 
\xmapsto{\psi^{r_1}} c_{j \to n} \psi'_j 
\xmapsto{\psi^{r_2}} c_{j \to n} \psi''_j,
\]
which is exactly the update prescribed by $\psi^{r_2 \circ r_1}$. Hence 
$\psi^{r_2 \circ r_1} = \psi^{r_2} \circ \psi^{r_1}$.
\end{itemize}

Therefore
\[
\mathcal{N}(r_2 \circ r_1) = (\varphi_0^{r_2 \circ r_1}, \varphi_1^{r_2 \circ r_1}, \psi^{r_2 \circ r_1}) 
= (\varphi_0^{r_2}, \varphi_1^{r_2}, \psi^{r_2}) \circ (\varphi_0^{r_1}, \varphi_1^{r_1}, \psi^{r_1}) 
= \mathcal{N}(r_2) \circ \mathcal{N}(r_1).
\]

\medskip
\noindent\textbf{(Closure under (W1)--(W6)).} 
For any factor graph $(A, H)$, the construction of $\mathcal{N}(A, H)$ forces the bundle 
decomposition and zero$\leftrightarrow$coordinate structure, so (W1)--(W3) always hold. 

We verify (W4)--(W6) directly:
\begin{itemize}
\item \textbf{(W4):} By construction, sum-bundle production reactions have the form
\[
S^{j \to n}_0 + \sum_{n' \neq n} P^{n' \to j}_{k^j_{n'}} \to S^{j \to n}_k + \sum_{n' \neq n} P^{n' \to j}_{k^j_{n'}},
\]
with rate $\psi_j(\mathbf{k}^j)$. The catalytic neighbors of $S^{j \to n}$ are the 
product bundles $\{P^{n' \to j}\}_{n' \in \mathrm{ne}(j) \setminus n}$, and each 
monomial uses exactly one coordinate from each such bundle. This is precisely the 
assignment-form expansion required by (W4).

\item \textbf{(W5):} Product bundles have production reactions of the form
\[
P^{n \to j}_0 + \sum_{j' \neq j} S^{j' \to n}_k \to P^{n \to j}_k + \sum_{j' \neq j} S^{j' \to n}_k.
\]
For each $k$, there is exactly one assignment (all catalysts contribute state $k$), 
and the bijections $\theta_{P,B}$ are simply the identity on the state set $E_n$. 
This satisfies (W5).

\item \textbf{(W6):} Sum bundles for the same factor $j$ form a factor class $J_j$. 
For each variable $n \in \mathrm{ne}(j)$, the sum bundle $S^{j \to n}$ has 
$\Cat(S^{j \to n}) = \{P^{n' \to j} : n' \neq n\} = V_{J_j} \setminus \{P^{n \to j}\}$, 
establishing the missing-one-neighbor pattern. The coefficient arrays are 
$C_{S^{j \to n}}(k; a) = \psi_j(x_n = k, x_{n'} = a(n'))$, which is the clamped-table 
property with $\kappa_{j \to n} = 1$.
\end{itemize}

SP--B retractions only change the tables $\psi_J$ within a factor class and do not 
permute or merge catalytic bundles. Thus (W4)--(W6) continue to hold after any 
sequence of retractions, and the target CRN of $\mathcal{N}(r)$ still lies in the 
subcategory of CRNs satisfying (W1)--(W6).

\medskip
Altogether, $\mathcal{N}$ preserves identities and composition, and by Lemma~\ref{lem:recycling-structure} 
its image is contained in the CRNs satisfying (W1)--(W6) and (R1), as claimed.
\end{proof}

\begin{proof}[Proof of Theorem ~\ref{thm:functor-BP}]
By Lemma~\ref{lem:recycling-structure}, both $\Gamma = \mathcal{N}(A, H)$ 
and $\Gamma' = \mathcal{N}(A', H')$ satisfy (W1)--(W6) and (R1). By Theorem~\ref{thm:BP-implementation}, 
their positive steady states correspond to BP fixed points on the respective factor graphs.

This is exactly the definition of $\mathcal{N}(r)$ given above. The 
key points are:
\begin{itemize}
\item $\varphi_0$ deletes bundles corresponding to removed edges/variables;
\item $\varphi_1$ deletes reactions involving those bundles;
\item $\psi$ updates production rates according to the SP--B table update, while 
leaving recycling rates unchanged.
\end{itemize}

The main theorem of Sergeant--Perthuis--Boitel~\cite{sergeantperthuis2025minimacriticalpointsbethe} 
shows that SP--B retractions $r : (A, H) \to (A', H')$ map BP fixed points on $(A, H)$ 
to BP fixed points on $(A', H')$ by restricting to the surviving sites and updating 
factor tables. We verify that this correspondence is reflected at the CRN level.

Let $\pi$ be a BP fixed point on $(A, H)$. By Theorem~\ref{thm:BP-implementation}, 
there exists a positive steady state $x^*$ of $\Gamma = \mathcal{N}(A, H)$ such that 
the bundle-normalized concentrations
\[
\frac{[B_k]^*}{\sum_{k'} [B_{k'}]^*}
\]
equal the corresponding message components of $\pi$.

Let $\pi'$ be the restriction of $\pi$ to $(A', H')$ as defined by the SP--B retraction. 
The Sergeant--Perthuis--Boitel theorem guarantees that $\pi'$ is a BP fixed point on 
$(A', H')$.

Now consider the CRN morphism $\mathcal{N}(r) = (\varphi_0, \varphi_1, \psi)$. For 
surviving bundles $B$, the morphism:
\begin{itemize}
\item Preserves the bundle structure (same zero and coordinate species);
\item Preserves recycling rates ($\rho_B$ unchanged);
\item Updates production rates according to $\psi_j \mapsto \psi'_j$.
\end{itemize}

The steady-state equations for surviving bundles in $\Gamma'$ are:
\[
\frac{\rho_B}{[B_0]'}[B_k]' = \kappa_{j \to n} \sum_a \psi'_j(\mathbf{k}^j) \prod_{Q' \in V_J\setminus\{Q\}} [Q'_{a(Q')}]'.
\]

Since the factor-table update $\psi_j \mapsto \psi'_j$ is exactly what appears in the 
SP--B prescription, and since $\pi'$ satisfies the BP equations with tables $\psi'_j$, 
there exists a positive steady state $x'^*$ of $\Gamma'$ whose bundle-normalized 
concentrations equal the message components of $\pi'$.

The relationship between $x^*$ and $x'^*$ is that their restrictions to surviving 
bundles are related by the concentration rescaling induced by $\mathcal{N}(r)$. 
Specifically, if $B$ is a surviving bundle, then
\[
[B_k]'^* = s_B \cdot [B_k]^*
\]
for some positive scalar $s_B$ that depends on the normalization conventions but 
preserves the bundle-normalized ratios.

This completes the proof that $\mathcal{N}(r)$ transports BP fixed points 
from $\Gamma$ to $\Gamma'$.
\end{proof}

\section{Examples}
These examples illustrate the two main concrete mechanisms behind the paper. Example ~\ref{ex:chain-mixed} shows that the steady-state equations of the compiled CRN reproduce the BP fixed-point equations on a small mixed-cardinality chain. Example ~\ref{ex:chain-mixed-reduction} then shows how an SP–B retraction acts on the same instance and how the induced CRN morphism removes the corresponding bundles while preserving the BP semantics on the surviving variables.
\begin{example}[Chain graph with mixed cardinalities]\label{ex:chain-mixed}
Consider variables $\{1,2,3\}$ with state spaces
$E_1=\{1,2\}$, $E_2=\{1,2,3\}$, $E_3=\{1,2\}$,
and factors $a,b$ with $\mathrm{ne}(a)=\{1,2\}$,
$\mathrm{ne}(b)=\{2,3\}$.
The factor tables are
\[
\psi_a: E_1\times E_2 \to (0,\infty),\qquad
\psi_b: E_2\times E_3 \to (0,\infty),
\]
so $\psi_a$ is $2\times 3$ and $\psi_b$ is $3\times 2$.
\newpage

\begin{figure*}[h]
\centering
\begin{tikzpicture}[
    >=latex,
    every node/.style={font=\small},
    var/.style={circle, draw, minimum size=16pt, inner sep=1.5pt},
    factor/.style={rectangle, draw, minimum width=20pt,
                   minimum height=20pt, inner sep=2pt},
    faded/.style={draw=black!25, text=black!25, fill=black!4},
    edgef/.style={black!70},
    edgefaded/.style={black!20},
    bundlebox/.style={rounded corners=5pt, draw=black!60,
                      thick, fill=black!4},
    bundleboxfaded/.style={rounded corners=5pt, draw=black!20,
                           thick, fill=black!2},
    speciesdot/.style={circle, fill=black!70, inner sep=1.7pt},
    speciesdotfaded/.style={circle, fill=black!20, inner sep=1.7pt},
    bundlelabel/.style={font=\scriptsize, text=black!70},
    bundlelabelfaded/.style={font=\scriptsize, text=black!25},
    tinylabel/.style={font=\tiny, text=black!60},
    tinylabelfaded/.style={font=\tiny, text=black!25}
]


\node[var]    (x1) at (0,0)   {$X_1$};
\node[var]    (x2) at (3,0)   {$X_2$};
\node[var]    (x3) at (6,0)   {$X_3$};
\node[factor] (fa) at (1.5,0) {$a$};
\node[factor] (fb) at (4.5,0) {$b$};
\draw[edgef] (x1)--(fa)--(x2)--(fb)--(x3);


\draw[bundlebox] (-1.1, 1.1) rectangle (0.3, 2.0);
\node[bundlelabel] at (-0.4, 2.15) {$S^{a\to 1}$};
\node[speciesdot] at (-0.8, 1.55) {};
\node[speciesdot] at (-0.45,1.55) {};
\node[speciesdot] at (-0.1, 1.55) {};
\node[tinylabel]  at (-0.8, 1.75) {$0$};
\node[tinylabel]  at (-0.45,1.75) {$1$};
\node[tinylabel]  at (-0.1, 1.75) {$2$};
\draw[edgef, ->]  (-0.4, 1.1) -- (x1);

\draw[bundlebox] (0.5, 1.1) rectangle (2.3, 2.0);
\node[bundlelabel] at (1.4, 2.15) {$S^{a\to 2}$};
\node[speciesdot] at (0.75, 1.55) {};
\node[speciesdot] at (1.1,  1.55) {};
\node[speciesdot] at (1.45, 1.55) {};
\node[speciesdot] at (1.8,  1.55) {};
\node[tinylabel]  at (0.75, 1.75) {$0$};
\node[tinylabel]  at (1.1,  1.75) {$1$};
\node[tinylabel]  at (1.45, 1.75) {$2$};
\node[tinylabel]  at (1.8,  1.75) {$3$};
\draw[edgef, ->]  (1.4, 1.1) -- (x2);

\draw[bundlebox] (3.5, 1.1) rectangle (5.3, 2.0);
\node[bundlelabel] at (4.4, 2.15) {$S^{b\to 2}$};
\node[speciesdot] at (3.75, 1.55) {};
\node[speciesdot] at (4.1,  1.55) {};
\node[speciesdot] at (4.45, 1.55) {};
\node[speciesdot] at (4.8,  1.55) {};
\node[tinylabel]  at (3.75, 1.75) {$0$};
\node[tinylabel]  at (4.1,  1.75) {$1$};
\node[tinylabel]  at (4.45, 1.75) {$2$};
\node[tinylabel]  at (4.8,  1.75) {$3$};
\draw[edgef, ->]  (4.4, 1.1) -- (x2);

\draw[bundlebox] (5.5, 1.1) rectangle (6.9, 2.0);
\node[bundlelabel] at (6.2, 2.15) {$S^{b\to 3}$};
\node[speciesdot] at (5.75, 1.55) {};
\node[speciesdot] at (6.1,  1.55) {};
\node[speciesdot] at (6.45, 1.55) {};
\node[tinylabel]  at (5.75, 1.75) {$0$};
\node[tinylabel]  at (6.1,  1.75) {$1$};
\node[tinylabel]  at (6.45, 1.75) {$2$};
\draw[edgef, ->]  (6.2, 1.1) -- (x3);


\draw[bundlebox] (-1.1, -2.0) rectangle (0.3, -1.1);
\node[bundlelabel] at (-0.4, -2.2) {$P^{1\to a}$};
\node[speciesdot] at (-0.8, -1.55) {};
\node[speciesdot] at (-0.45,-1.55) {};
\node[speciesdot] at (-0.1, -1.55) {};
\node[tinylabel]  at (-0.8, -1.35) {$0$};
\node[tinylabel]  at (-0.45,-1.35) {$1$};
\node[tinylabel]  at (-0.1, -1.35) {$2$};
\draw[edgef, ->]  (-0.4, -1.1) -- (fa);

\draw[bundlebox] (0.5, -2.0) rectangle (2.3, -1.1);
\node[bundlelabel] at (1.4, -2.2) {$P^{2\to a}$};
\node[speciesdot] at (0.75, -1.55) {};
\node[speciesdot] at (1.1,  -1.55) {};
\node[speciesdot] at (1.45, -1.55) {};
\node[speciesdot] at (1.8,  -1.55) {};
\node[tinylabel]  at (0.75, -1.35) {$0$};
\node[tinylabel]  at (1.1,  -1.35) {$1$};
\node[tinylabel]  at (1.45, -1.35) {$2$};
\node[tinylabel]  at (1.8,  -1.35) {$3$};
\draw[edgef, ->]  (1.4, -1.1) -- (fa);

\draw[bundlebox] (3.5, -2.0) rectangle (5.3, -1.1);
\node[bundlelabel] at (4.4, -2.2) {$P^{2\to b}$};
\node[speciesdot] at (3.75, -1.55) {};
\node[speciesdot] at (4.1,  -1.55) {};
\node[speciesdot] at (4.45, -1.55) {};
\node[speciesdot] at (4.8,  -1.55) {};
\node[tinylabel]  at (3.75, -1.35) {$0$};
\node[tinylabel]  at (4.1,  -1.35) {$1$};
\node[tinylabel]  at (4.45, -1.35) {$2$};
\node[tinylabel]  at (4.8,  -1.35) {$3$};
\draw[edgef, ->]  (4.4, -1.1) -- (fb);

\draw[bundlebox] (5.5, -2.0) rectangle (6.9, -1.1);
\node[bundlelabel] at (6.2, -2.2) {$P^{3\to b}$};
\node[speciesdot] at (5.75, -1.55) {};
\node[speciesdot] at (6.1,  -1.55) {};
\node[speciesdot] at (6.45, -1.55) {};
\node[tinylabel]  at (5.75, -1.35) {$0$};
\node[tinylabel]  at (6.1,  -1.35) {$1$};
\node[tinylabel]  at (6.45, -1.35) {$2$};
\draw[edgef, ->]  (6.2, -1.1) -- (fb);

\node[font=\small\bfseries] at (3, -2.8) {(a) Full CRN $\Gamma$: 28 species};

\draw[-latex, line width=0.8mm] (7.4, 0) -- (8.6, 0)
    node[midway, above, font=\scriptsize] {$\mathcal{N}(r)$}
    node[midway, below, font=\scriptsize] {SP--B};


\node[faded]  (rx1) at (9.5, 0)  {$X_1$};
\node[faded]  (rfa) at (11.0,0)  {$a$};
\node[var]    (rx2) at (12.5,0)  {$X_2$};
\node[factor] (rfb) at (14.0,0)  {$b$};
\node[var]    (rx3) at (15.5,0)  {$X_3$};

\draw[edgefaded] (rx1)--(rfa)--(rx2);
\draw[edgef]     (rx2)--(rfb)--(rx3);


\draw[bundleboxfaded] (8.4, 1.1) rectangle (9.8, 2.0);
\node[bundlelabelfaded] at (9.1, 2.15) {$S^{a\to 1}$};
\node[speciesdotfaded] at (8.65, 1.55) {};
\node[speciesdotfaded] at (9.0,  1.55) {};
\node[speciesdotfaded] at (9.35, 1.55) {};
\node[tinylabelfaded]  at (8.65, 1.75) {$0$};
\node[tinylabelfaded]  at (9.0,  1.75) {$1$};
\node[tinylabelfaded]  at (9.35, 1.75) {$2$};
\draw[edgefaded, ->]   (9.1, 1.1) -- (rx1);

\draw[bundleboxfaded] (9.9, 1.1) rectangle (11.7, 2.0);
\node[bundlelabelfaded] at (10.8, 2.15) {$S^{a\to 2}$};
\node[speciesdotfaded] at (10.15,1.55) {};
\node[speciesdotfaded] at (10.5, 1.55) {};
\node[speciesdotfaded] at (10.85,1.55) {};
\node[speciesdotfaded] at (11.2, 1.55) {};
\node[tinylabelfaded]  at (10.15,1.75) {$0$};
\node[tinylabelfaded]  at (10.5, 1.75) {$1$};
\node[tinylabelfaded]  at (10.85,1.75) {$2$};
\node[tinylabelfaded]  at (11.2, 1.75) {$3$};
\draw[edgefaded, ->]   (10.8, 1.1) -- (rx2);


\draw[bundlebox] (12.0, 1.1) rectangle (13.8, 2.0);
\node[bundlelabel] at (12.9, 2.15) {$S^{b\to 2}$};
\node[speciesdot] at (12.25,1.55) {};
\node[speciesdot] at (12.6, 1.55) {};
\node[speciesdot] at (12.95,1.55) {};
\node[speciesdot] at (13.3, 1.55) {};
\node[tinylabel]  at (12.25,1.75) {$0$};
\node[tinylabel]  at (12.6, 1.75) {$1$};
\node[tinylabel]  at (12.95,1.75) {$2$};
\node[tinylabel]  at (13.3, 1.75) {$3$};
\draw[edgef, ->]  (12.9, 1.1) -- (rx2);

\draw[bundlebox] (14.0, 1.1) rectangle (15.4, 2.0);
\node[bundlelabel] at (14.7, 2.15) {$S^{b\to 3}$};
\node[speciesdot] at (14.25,1.55) {};
\node[speciesdot] at (14.6, 1.55) {};
\node[speciesdot] at (14.95,1.55) {};
\node[tinylabel]  at (14.25,1.75) {$0$};
\node[tinylabel]  at (14.6, 1.75) {$1$};
\node[tinylabel]  at (14.95,1.75) {$2$};
\draw[edgef, ->]  (14.7, 1.1) -- (rx3);


\draw[bundleboxfaded] (8.4, -2.0) rectangle (9.8, -1.1);
\node[bundlelabelfaded] at (9.1, -2.2) {$P^{1\to a}$};
\node[speciesdotfaded] at (8.65,-1.55) {};
\node[speciesdotfaded] at (9.0, -1.55) {};
\node[speciesdotfaded] at (9.35,-1.55) {};
\node[tinylabelfaded]  at (8.65,-1.35) {$0$};
\node[tinylabelfaded]  at (9.0, -1.35) {$1$};
\node[tinylabelfaded]  at (9.35,-1.35) {$2$};
\draw[edgefaded, ->]   (9.1, -1.1) -- (rfa);

\draw[bundleboxfaded] (9.9, -2.0) rectangle (11.7, -1.1);
\node[bundlelabelfaded] at (10.8, -2.2) {$P^{2\to a}$};
\node[speciesdotfaded] at (10.15,-1.55) {};
\node[speciesdotfaded] at (10.5, -1.55) {};
\node[speciesdotfaded] at (10.85,-1.55) {};
\node[speciesdotfaded] at (11.2, -1.55) {};
\node[tinylabelfaded]  at (10.15,-1.35) {$0$};
\node[tinylabelfaded]  at (10.5, -1.35) {$1$};
\node[tinylabelfaded]  at (10.85,-1.35) {$2$};
\node[tinylabelfaded]  at (11.2, -1.35) {$3$};
\draw[edgefaded, ->]   (10.8, -1.1) -- (rfa);


\draw[bundlebox] (12.0, -2.0) rectangle (13.8, -1.1);
\node[bundlelabel] at (12.9, -2.2) {$P^{2\to b}$};
\node[speciesdot] at (12.25,-1.55) {};
\node[speciesdot] at (12.6, -1.55) {};
\node[speciesdot] at (12.95,-1.55) {};
\node[speciesdot] at (13.3, -1.55) {};
\node[tinylabel]  at (12.25,-1.35) {$0$};
\node[tinylabel]  at (12.6, -1.35) {$1$};
\node[tinylabel]  at (12.95,-1.35) {$2$};
\node[tinylabel]  at (13.3, -1.35) {$3$};
\draw[edgef, ->]  (12.9, -1.1) -- (rfb);

\draw[bundlebox] (14.0, -2.0) rectangle (15.4, -1.1);
\node[bundlelabel] at (14.7, -2.2) {$P^{3\to b}$};
\node[speciesdot] at (14.25,-1.55) {};
\node[speciesdot] at (14.6, -1.55) {};
\node[speciesdot] at (14.95,-1.55) {};
\node[tinylabel]  at (14.25,-1.35) {$0$};
\node[tinylabel]  at (14.6, -1.35) {$1$};
\node[tinylabel]  at (14.95,-1.35) {$2$};
\draw[edgef, ->]  (14.7, -1.1) -- (rfb);

\node[font=\small\bfseries] at (12.5, -2.8)
    {(b) Reduced CRN $\Gamma'$: 14 species};

\end{tikzpicture}
\caption{Bundle structure of the compiled CRN $\Gamma = 
\mathcal{N}(\mathcal{F})$ (left) and the reduced CRN 
$\Gamma' = \mathcal{N}(\mathcal{F}')$ (right) for the 
mixed-cardinality chain of Example~\ref{ex:chain-mixed} 
and its reduction in Example~\ref{ex:chain-mixed-reduction}. Each directed edge of the factor 
graph produces one sum bundle (above) and one product bundle 
(below); dots represent individual chemical species (zero 
species indexed $0$, coordinate species indexed $1,\dots,|E|$). 
The SP--B retraction $r$ removes variable $X_1$ and factor 
$a$, deleting the four faded bundles and their associated 
reactions. The surviving bundles carry the same BP beliefs on 
$X_2$ and $X_3$ as the full network, with the influence of 
$X_1$ absorbed into the updated rate parameters of 
$S^{b\to 2}$ and $S^{b\to 3}$.}
\label{fig:bundle-reduction}
\end{figure*}

 
\paragraph{Bundles.}
Each directed edge of the factor graph produces one bundle.
The four sum bundles and four product bundles are:
 
\begin{center}
\renewcommand{\arraystretch}{1.15}
\begin{tabular}{@{}llccl@{}}
\toprule
Bundle & Type & $|E_B|$ & Target/Source & Species \\
\midrule
$S^{a\to 1}$ & sum     & 2 & $\mathrm{Tgt}=P^{1\to a}$ & $S^{a\to 1}_0,\; S^{a\to 1}_1,\; S^{a\to 1}_2$ \\
$S^{a\to 2}$ & sum     & 3 & $\mathrm{Tgt}=P^{2\to a}$ & $S^{a\to 2}_0,\; S^{a\to 2}_1,\; S^{a\to 2}_2,\; S^{a\to 2}_3$ \\
$S^{b\to 2}$ & sum     & 3 & $\mathrm{Tgt}=P^{2\to b}$ & $S^{b\to 2}_0,\; S^{b\to 2}_1,\; S^{b\to 2}_2,\; S^{b\to 2}_3$ \\
$S^{b\to 3}$ & sum     & 2 & $\mathrm{Tgt}=P^{3\to b}$ & $S^{b\to 3}_0,\; S^{b\to 3}_1,\; S^{b\to 3}_2$ \\[3pt]
$P^{1\to a}$ & product & 2 & var $1\to$ fac $a$ & $P^{1\to a}_0,\; P^{1\to a}_1,\; P^{1\to a}_2$ \\
$P^{2\to a}$ & product & 3 & var $2\to$ fac $a$ & $P^{2\to a}_0,\; P^{2\to a}_1,\; P^{2\to a}_2,\; P^{2\to a}_3$ \\
$P^{2\to b}$ & product & 3 & var $2\to$ fac $b$ & $P^{2\to b}_0,\; P^{2\to b}_1,\; P^{2\to b}_2,\; P^{2\to b}_3$ \\
$P^{3\to b}$ & product & 2 & var $3\to$ fac $b$ & $P^{3\to b}_0,\; P^{3\to b}_1,\; P^{3\to b}_2$ \\
\bottomrule
\end{tabular}
\end{center}
 
\noindent
Here subscript $0$ is the zero (unassigned) species $X_{B,0}$, and
subscripts $1,\ldots,|E_B|$ are the coordinate species.
Note that bundles associated with variable~$2$ have
$|E_B|=3$ coordinate species, while those for variables~$1$ or~$3$ have $|E_B|=2$.
 
 
\paragraph{Catalytic neighbor sets.}
For each sum bundle $B$, $\mathrm{Cat}(B)$ consists of the product bundles
in the same factor class minus the target (the missing-one-neighbor
pattern of (W6.1)):
\begin{align*}
\mathrm{Cat}(S^{a\to 1}) &= V_{J_a}\setminus\{P^{1\to a}\} = \{P^{2\to a}\}, \\
\mathrm{Cat}(S^{a\to 2}) &= V_{J_a}\setminus\{P^{2\to a}\} = \{P^{1\to a}\}, \\
\mathrm{Cat}(S^{b\to 2}) &= V_{J_b}\setminus\{P^{2\to b}\} = \{P^{3\to b}\}, \\
\mathrm{Cat}(S^{b\to 3}) &= V_{J_b}\setminus\{P^{3\to b}\} = \{P^{2\to b}\}.
\end{align*}
For each product bundle $P$, $\mathrm{Cat}(P)$ consists of the sum bundles
targeting the same variable from \emph{other} factors:
\begin{align*}
\mathrm{Cat}(P^{1\to a}) &= \varnothing
  &\text{(variable $1$ touches only factor $a$)}, \\
\mathrm{Cat}(P^{2\to a}) &= \{S^{b\to 2}\}
  &\text{(sum bundle from factor $b$ targeting var $2$)}, \\
\mathrm{Cat}(P^{2\to b}) &= \{S^{a\to 2}\}
  &\text{(sum bundle from factor $a$ targeting var $2$)}, \\
\mathrm{Cat}(P^{3\to b}) &= \varnothing
  &\text{(variable $3$ touches only factor $b$)}.
\end{align*}
 
 
\paragraph{Factor classes and variable classes.}
\begin{itemize}
\item $J_a = \{S^{a\to 1},\, S^{a\to 2}\}$ with
      $V_{J_a} = \{P^{1\to a},\, P^{2\to a}\}$;
      factor table $\psi_a : E_1\times E_2\to(0,\infty)$, a $2\times 3$ matrix.
\item $J_b = \{S^{b\to 2},\, S^{b\to 3}\}$ with
      $V_{J_b} = \{P^{2\to b},\, P^{3\to b}\}$;
      factor table $\psi_b : E_2\times E_3\to(0,\infty)$, a $3\times 2$ matrix.
\end{itemize}
 
\noindent
The cross-link relation $\sim$ on product bundles (Definition~5.1) gives:
$P^{2\to a}\sim P^{2\to b}$ via $S^{b\to 2}$ (target $P^{2\to b}$,
catalyzes $P^{2\to a}$) and symmetrically via $S^{a\to 2}$.
No other pairs are cross-linked.
The variable classes are therefore
\[
\{P^{1\to a}\},\quad \{P^{2\to a},\, P^{2\to b}\},\quad \{P^{3\to b}\},
\]
with cardinalities $2,3,2$, matching variables $1,2,3$.
 
 
\paragraph{Napp compilation $\boldsymbol{\mathcal{N}(\mathcal{F})}$: species.}
Each bundle $B$ with $|E_B|$ coordinate species contributes $|E_B|+1$ species
(one zero species + $|E_B|$ coordinate species).
The total species count is
\[
\underbrace{(2{+}1)}_{S^{a\to 1}}
+\underbrace{(3{+}1)}_{S^{a\to 2}}
+\underbrace{(3{+}1)}_{S^{b\to 2}}
+\underbrace{(2{+}1)}_{S^{b\to 3}}
+\underbrace{(2{+}1)}_{P^{1\to a}}
+\underbrace{(3{+}1)}_{P^{2\to a}}
+\underbrace{(3{+}1)}_{P^{2\to b}}
+\underbrace{(2{+}1)}_{P^{3\to b}}
= \mathbf{28 \textbf{ species}}.
\]
 
 
\paragraph{Recycling reactions.}
For each bundle $B$ and each coordinate species $B_k$, $k\ge 1$
(Definition~\ref{sec:SP-psi-rules}, step~2):
\[
B_k \xrightarrow{\;\kappa_r\;} B_0.
\]
The number of recycling reactions per bundle equals $|E_B|$, giving
$2+3+3+2+2+3+3+2 = \mathbf{20}$ recycling reactions.
We list them explicitly:
 
\medskip
\noindent\emph{Sum-bundle recycling (10 reactions):}
\begin{alignat*}{4}
&S^{a\to 1}_1 \xrightarrow{\kappa_r} S^{a\to 1}_0,\qquad
&&S^{a\to 1}_2 \xrightarrow{\kappa_r} S^{a\to 1}_0,
\\[3pt]
&S^{a\to 2}_1 \xrightarrow{\kappa_r} S^{a\to 2}_0,\quad
&&S^{a\to 2}_2 \xrightarrow{\kappa_r} S^{a\to 2}_0,\quad
&&S^{a\to 2}_3 \xrightarrow{\kappa_r} S^{a\to 2}_0,
\\[3pt]
&S^{b\to 2}_1 \xrightarrow{\kappa_r} S^{b\to 2}_0,\quad
&&S^{b\to 2}_2 \xrightarrow{\kappa_r} S^{b\to 2}_0,\quad
&&S^{b\to 2}_3 \xrightarrow{\kappa_r} S^{b\to 2}_0,
\\[3pt]
&S^{b\to 3}_1 \xrightarrow{\kappa_r} S^{b\to 3}_0,\qquad
&&S^{b\to 3}_2 \xrightarrow{\kappa_r} S^{b\to 3}_0.
\end{alignat*}
 
\noindent\emph{Product-bundle recycling (10 reactions):}
\begin{alignat*}{4}
&P^{1\to a}_1 \xrightarrow{\kappa_r} P^{1\to a}_0,\qquad
&&P^{1\to a}_2 \xrightarrow{\kappa_r} P^{1\to a}_0,
\\[3pt]
&P^{2\to a}_1 \xrightarrow{\kappa_r} P^{2\to a}_0,\quad
&&P^{2\to a}_2 \xrightarrow{\kappa_r} P^{2\to a}_0,\quad
&&P^{2\to a}_3 \xrightarrow{\kappa_r} P^{2\to a}_0,
\\[3pt]
&P^{2\to b}_1 \xrightarrow{\kappa_r} P^{2\to b}_0,\quad
&&P^{2\to b}_2 \xrightarrow{\kappa_r} P^{2\to b}_0,\quad
&&P^{2\to b}_3 \xrightarrow{\kappa_r} P^{2\to b}_0,
\\[3pt]
&P^{3\to b}_1 \xrightarrow{\kappa_r} P^{3\to b}_0,\qquad
&&P^{3\to b}_2 \xrightarrow{\kappa_r} P^{3\to b}_0.
\end{alignat*}
 
 
\paragraph{Sum-bundle production reactions.}
For each directed edge $j\to n$ and each joint assignment
$k^j\in\prod_{n'\in\mathrm{ne}(j)} E_{n'}$ with $(k^j)_n=k$
(Definition \ref{sec:SP-psi-rules}, step~3):
\[
S^{j\to n}_0
\;+\!\!\sum_{n'\in\mathrm{ne}(j)\setminus n}\!\! P^{n'\to j}_{(k^j)_{n'}}
\;\xrightarrow{\;\psi_j(k^j)\;}
S^{j\to n}_k
\;+\!\!\sum_{n'\in\mathrm{ne}(j)\setminus n}\!\! P^{n'\to j}_{(k^j)_{n'}}.
\]
 
\smallskip
\noindent\textbf{Sum bundle $S^{a\to 1}$}
(target: variable~$1$; catalyst: $P^{2\to a}$).
The assignment is $k^a = (k_1, k_2) \in E_1\times E_2$
with output state $k = k_1$.
There are $|E_1|\times|E_2| = 2\times 3 = 6$ reactions, one per
entry of $\psi_a$:
 
\begin{center}
\renewcommand{\arraystretch}{1.3}
\begin{tabular}{@{}ccc@{}}
\toprule
$(k_1,k_2)$ & Reaction & Rate \\
\midrule
$(1,1)$ & $S^{a\to 1}_0 + P^{2\to a}_1 \to S^{a\to 1}_1 + P^{2\to a}_1$ & $\psi_a(1,1)$ \\
$(1,2)$ & $S^{a\to 1}_0 + P^{2\to a}_2 \to S^{a\to 1}_1 + P^{2\to a}_2$ & $\psi_a(1,2)$ \\
$(1,3)$ & $S^{a\to 1}_0 + P^{2\to a}_3 \to S^{a\to 1}_1 + P^{2\to a}_3$ & $\psi_a(1,3)$ \\
$(2,1)$ & $S^{a\to 1}_0 + P^{2\to a}_1 \to S^{a\to 1}_2 + P^{2\to a}_1$ & $\psi_a(2,1)$ \\
$(2,2)$ & $S^{a\to 1}_0 + P^{2\to a}_2 \to S^{a\to 1}_2 + P^{2\to a}_2$ & $\psi_a(2,2)$ \\
$(2,3)$ & $S^{a\to 1}_0 + P^{2\to a}_3 \to S^{a\to 1}_2 + P^{2\to a}_3$ & $\psi_a(2,3)$ \\
\bottomrule
\end{tabular}
\end{center}
 
\noindent
Observe that $P^{2\to a}$ species appear on both sides (zero net
stoichiometry)---they are catalysts.  The net stoichiometry of each
reaction is $S^{a\to 1}_0 \to S^{a\to 1}_k$, supported entirely
in the bundle $S^{a\to 1}$, verifying~(W1) and~(W3).
 
\smallskip
\noindent\textbf{Sum bundle $S^{a\to 2}$}
(target: variable~$2$; catalyst: $P^{1\to a}$).
Assignment $k^a=(k_1,k_2)$ with output $k=k_2$.
There are $2\times 3 = 6$ reactions:
 
\begin{center}
\renewcommand{\arraystretch}{1.3}
\begin{tabular}{@{}ccc@{}}
\toprule
$(k_1,k_2)$ & Reaction & Rate \\
\midrule
$(1,1)$ & $S^{a\to 2}_0 + P^{1\to a}_1 \to S^{a\to 2}_1 + P^{1\to a}_1$ & $\psi_a(1,1)$ \\
$(2,1)$ & $S^{a\to 2}_0 + P^{1\to a}_2 \to S^{a\to 2}_1 + P^{1\to a}_2$ & $\psi_a(2,1)$ \\
$(1,2)$ & $S^{a\to 2}_0 + P^{1\to a}_1 \to S^{a\to 2}_2 + P^{1\to a}_1$ & $\psi_a(1,2)$ \\
$(2,2)$ & $S^{a\to 2}_0 + P^{1\to a}_2 \to S^{a\to 2}_2 + P^{1\to a}_2$ & $\psi_a(2,2)$ \\
$(1,3)$ & $S^{a\to 2}_0 + P^{1\to a}_1 \to S^{a\to 2}_3 + P^{1\to a}_1$ & $\psi_a(1,3)$ \\
$(2,3)$ & $S^{a\to 2}_0 + P^{1\to a}_2 \to S^{a\to 2}_3 + P^{1\to a}_2$ & $\psi_a(2,3)$ \\
\bottomrule
\end{tabular}
\end{center}
 
\noindent
Here the catalyst $P^{1\to a}$ has only $|E_1|=2$ coordinate species,
while the target bundle $S^{a\to 2}$ has $|E_{S^{a\to 2}}|=3$
coordinate species.
This is where mixed cardinalities become visible in the reaction
stoichiometry: the number of reactions producing a given coordinate
$S^{a\to 2}_k$ equals $|E_1|=2$ (one per state of the catalyst),
whereas each reaction producing $S^{a\to 1}_k$ (above) had $|E_2|=3$
such variants.
 
\smallskip
\noindent\textbf{Sum bundle $S^{b\to 2}$}
(target: variable~$2$; catalyst: $P^{3\to b}$).
Assignment $k^b=(k_2,k_3)$ with output $k=k_2$.
There are $3\times 2 = 6$ reactions:
 
\begin{center}
\renewcommand{\arraystretch}{1.3}
\begin{tabular}{@{}ccc@{}}
\toprule
$(k_2,k_3)$ & Reaction & Rate \\
\midrule
$(1,1)$ & $S^{b\to 2}_0 + P^{3\to b}_1 \to S^{b\to 2}_1 + P^{3\to b}_1$ & $\psi_b(1,1)$ \\
$(1,2)$ & $S^{b\to 2}_0 + P^{3\to b}_2 \to S^{b\to 2}_1 + P^{3\to b}_2$ & $\psi_b(1,2)$ \\
$(2,1)$ & $S^{b\to 2}_0 + P^{3\to b}_1 \to S^{b\to 2}_2 + P^{3\to b}_1$ & $\psi_b(2,1)$ \\
$(2,2)$ & $S^{b\to 2}_0 + P^{3\to b}_2 \to S^{b\to 2}_2 + P^{3\to b}_2$ & $\psi_b(2,2)$ \\
$(3,1)$ & $S^{b\to 2}_0 + P^{3\to b}_1 \to S^{b\to 2}_3 + P^{3\to b}_1$ & $\psi_b(3,1)$ \\
$(3,2)$ & $S^{b\to 2}_0 + P^{3\to b}_2 \to S^{b\to 2}_3 + P^{3\to b}_2$ & $\psi_b(3,2)$ \\
\bottomrule
\end{tabular}
\end{center}
 
\smallskip
\noindent\textbf{Sum bundle $S^{b\to 3}$}
(target: variable~$3$; catalyst: $P^{2\to b}$).
Assignment $k^b=(k_2,k_3)$ with output $k=k_3$.
There are $3\times 2 = 6$ reactions:
 
\begin{center}
\renewcommand{\arraystretch}{1.3}
\begin{tabular}{@{}ccc@{}}
\toprule
$(k_2,k_3)$ & Reaction & Rate \\
\midrule
$(1,1)$ & $S^{b\to 3}_0 + P^{2\to b}_1 \to S^{b\to 3}_1 + P^{2\to b}_1$ & $\psi_b(1,1)$ \\
$(2,1)$ & $S^{b\to 3}_0 + P^{2\to b}_2 \to S^{b\to 3}_1 + P^{2\to b}_2$ & $\psi_b(2,1)$ \\
$(3,1)$ & $S^{b\to 3}_0 + P^{2\to b}_3 \to S^{b\to 3}_1 + P^{2\to b}_3$ & $\psi_b(3,1)$ \\
$(1,2)$ & $S^{b\to 3}_0 + P^{2\to b}_1 \to S^{b\to 3}_2 + P^{2\to b}_1$ & $\psi_b(1,2)$ \\
$(2,2)$ & $S^{b\to 3}_0 + P^{2\to b}_2 \to S^{b\to 3}_2 + P^{2\to b}_2$ & $\psi_b(2,2)$ \\
$(3,2)$ & $S^{b\to 3}_0 + P^{2\to b}_3 \to S^{b\to 3}_2 + P^{2\to b}_3$ & $\psi_b(3,2)$ \\
\bottomrule
\end{tabular}
\end{center}
 
\noindent
\textbf{Sum-bundle production count:} $6+6+6+6 = \mathbf{24}$ reactions.
 
 
\paragraph{Product-bundle production reactions.}
For each variable $n$, each factor $j\in\mathrm{ne}(n)$,
and each state $k\in E_n$ (Definition~\ref{sec:SP-psi-rules}, step~4):
\[
P^{n\to j}_0
\;+\!\!\sum_{j'\in\mathrm{ne}(n)\setminus j}\!\! S^{j'\to n}_k
\;\xrightarrow{\;\kappa_{\mathrm{prod}}\;}
P^{n\to j}_k
\;+\!\!\sum_{j'\in\mathrm{ne}(n)\setminus j}\!\! S^{j'\to n}_k.
\]
When $|\mathrm{ne}(n)\setminus j|=0$ (i.e.\ the variable touches
only one factor), there are no catalysts and the reaction
is simply $P^{n\to j}_0 \xrightarrow{\kappa_{\mathrm{prod}}} P^{n\to j}_k$.
 
\smallskip
\noindent\textbf{Product bundle $P^{1\to a}$}
($\mathrm{ne}(1)=\{a\}$, so $\mathrm{ne}(1)\setminus a = \varnothing$;
no catalysts).
Two reactions ($|E_1|=2$):
\[
P^{1\to a}_0
\xrightarrow{\;\kappa_{\mathrm{prod}}\;}
P^{1\to a}_1,
\qquad
P^{1\to a}_0
\xrightarrow{\;\kappa_{\mathrm{prod}}\;}
P^{1\to a}_2.
\]
 
\smallskip
\noindent\textbf{Product bundle $P^{2\to a}$}
($\mathrm{ne}(2)=\{a,b\}$, so $\mathrm{ne}(2)\setminus a = \{b\}$;
catalyst: $S^{b\to 2}$).
Three reactions ($|E_2|=3$):
\begin{align*}
P^{2\to a}_0 + S^{b\to 2}_1
&\xrightarrow{\;\kappa_{\mathrm{prod}}\;}
P^{2\to a}_1 + S^{b\to 2}_1, \\
P^{2\to a}_0 + S^{b\to 2}_2
&\xrightarrow{\;\kappa_{\mathrm{prod}}\;}
P^{2\to a}_2 + S^{b\to 2}_2, \\
P^{2\to a}_0 + S^{b\to 2}_3
&\xrightarrow{\;\kappa_{\mathrm{prod}}\;}
P^{2\to a}_3 + S^{b\to 2}_3.
\end{align*}
 
\noindent
Here the bijection $\theta_{P^{2\to a},\, S^{b\to 2}}$ from (W5)
is the identity on $E_2=\{1,2,3\}$: coordinate $k$ of the catalyst
$S^{b\to 2}$ produces coordinate $k$ of $P^{2\to a}$.
Both bundles have cardinality~$3$ because both are associated
with variable~$2$.
 
\smallskip
\noindent\textbf{Product bundle $P^{2\to b}$}
($\mathrm{ne}(2)\setminus b = \{a\}$;
catalyst: $S^{a\to 2}$).
Three reactions:
\begin{align*}
P^{2\to b}_0 + S^{a\to 2}_1
&\xrightarrow{\;\kappa_{\mathrm{prod}}\;}
P^{2\to b}_1 + S^{a\to 2}_1, \\
P^{2\to b}_0 + S^{a\to 2}_2
&\xrightarrow{\;\kappa_{\mathrm{prod}}\;}
P^{2\to b}_2 + S^{a\to 2}_2, \\
P^{2\to b}_0 + S^{a\to 2}_3
&\xrightarrow{\;\kappa_{\mathrm{prod}}\;}
P^{2\to b}_3 + S^{a\to 2}_3.
\end{align*}
 
\smallskip
\noindent\textbf{Product bundle $P^{3\to b}$}
($\mathrm{ne}(3)=\{b\}$, so no catalysts).
Two reactions:
\[
P^{3\to b}_0
\xrightarrow{\;\kappa_{\mathrm{prod}}\;}
P^{3\to b}_1,
\qquad
P^{3\to b}_0
\xrightarrow{\;\kappa_{\mathrm{prod}}\;}
P^{3\to b}_2.
\]
 
\noindent
\textbf{Product-bundle production count:} $2+3+3+2 = \mathbf{10}$ reactions.
 
 
\paragraph{CRN summary.}
\[
\boxed{
\begin{aligned}
&\text{Species:} & &28 \\
&\text{Recycling reactions:} & &20 \\
&\text{Sum-bundle production reactions:} & &24 \\
&\text{Product-bundle production reactions:} & &10 \\
&\text{Total reactions:} & &\mathbf{54}
\end{aligned}
}
\]
 
 
\paragraph{Verification of the recognition conditions.}
We verify that the compiled CRN $\mathcal{N}(\mathcal{F})$
satisfies (W1)--(W6) and~(R1).
 
\begin{itemize}
\item[(W1)] Every reaction has net stoichiometry supported in a single
  bundle.  Recycling reactions change only $B_k\to B_0$ within their
  bundle.  Sum- and product-production reactions convert $B_0\to B_k$
  within a single bundle; catalytic species appear on both sides with
  zero net change.
 
\item[(W2)] Each bundle has a distinguished zero species
  ($S^{a\to 1}_0$, $P^{1\to a}_0$, etc.).
 
\item[(W3)] All internal net stoichiometries are of the form
  $-X_{B,0}+X_{B,k}$ (production) or $X_{B,0}-X_{B,k}$ (recycling).
 
\item[(W4)] Consider $S^{a\to 1}$ with $\mathrm{Cat}(S^{a\to 1})=
  \{P^{2\to a}\}$.  Its positive production is
  \[
  \frac{d}{dt}[S^{a\to 1}_k]^+
  = [S^{a\to 1}_0]\sum_{k_2\in E_2}
    \psi_a(k, k_2)\,[P^{2\to a}_{k_2}],
  \]
  which is multilinear in the catalytic coordinates, with assignment
  table $C_{S^{a\to 1}}(k;\, k_2)=\psi_a(k,k_2)$, confirming the
  separability requirement.
  The other sum bundles are analogous.
 
\item[(W5)] Consider $P^{2\to a}$ with $\mathrm{Cat}(P^{2\to a})=
  \{S^{b\to 2}\}$.  For each $k\in E_2$, the unique
  supporting assignment is $a^{(k)}(S^{b\to 2})=k$ (the identity
  bijection), and the rate is $\kappa_{\mathrm{prod}}$ independent
  of $k$.  This is the diagonal/bijection structure.
  The endpoint bundles $P^{1\to a}$ and $P^{3\to b}$ have
  $\mathrm{Cat}=\varnothing$, so (W5) holds vacuously.
 
\item[(W6)] The partition of sum bundles into factor classes is
  $J_a=\{S^{a\to 1}, S^{a\to 2}\}$,
  $J_b=\{S^{b\to 2}, S^{b\to 3}\}$.
  For $J_a$: $|J_a|=|V_{J_a}|=2$;
  $\mathrm{Cat}(S^{a\to 1})=V_{J_a}\setminus\{P^{1\to a}\}=
  \{P^{2\to a}\}$ (missing-one-neighbor).
  The state-space identification $\iota_{J_a\to P^{1\to a}}:
  E_1\to E_{S^{a\to 1}}$ is the identity, and
  $\iota_{J_a\to P^{2\to a}}: E_2\to E_{S^{a\to 2}}$ is the
  identity, confirming (W6.2) with $|E_1|=2\ne 3=|E_2|$---the
  mixed-cardinality case.
  The clamped-table property (W6.3) holds:
  $C_{S^{a\to 1}}(k;\, k_2) = \psi_a(k, k_2)$ as computed above.
 
\item[(R1)] All recycling reactions use the global rate $\kappa_r$,
  and all product-production reactions use the global rate
  $\kappa_{\mathrm{prod}}$, both independent of the state $k$.
\end{itemize}
 
 
\paragraph{Steady-state equations (Theorem~5.4).}
At a positive steady state of $\mathcal{N}(\mathcal{F})$, the
sum-bundle concentrations satisfy
\begin{align}
\frac{\kappa_r}{[S^{a\to 1}_0]}\,[S^{a\to 1}_k]
&= \sum_{k_2=1}^{3}\psi_a(k,k_2)\,[P^{2\to a}_{k_2}],
  &k&\in\{1,2\},
\label{eq:ex-sum-a1}
\\[4pt]
\frac{\kappa_r}{[S^{a\to 2}_0]}\,[S^{a\to 2}_k]
&= \sum_{k_1=1}^{2}\psi_a(k_1,k)\,[P^{1\to a}_{k_1}],
  &k&\in\{1,2,3\},
\label{eq:ex-sum-a2}
\\[4pt]
\frac{\kappa_r}{[S^{b\to 2}_0]}\,[S^{b\to 2}_k]
&= \sum_{k_3=1}^{2}\psi_b(k,k_3)\,[P^{3\to b}_{k_3}],
  &k&\in\{1,2,3\},
\label{eq:ex-sum-b2}
\\[4pt]
\frac{\kappa_r}{[S^{b\to 3}_0]}\,[S^{b\to 3}_k]
&= \sum_{k_2=1}^{3}\psi_b(k_2,k)\,[P^{2\to b}_{k_2}],
  &k&\in\{1,2\},
\label{eq:ex-sum-b3}
\end{align}
and the product-bundle concentrations satisfy
\begin{align}
\frac{\kappa_r}{\kappa_{\mathrm{prod}}\,[P^{1\to a}_0]}\,[P^{1\to a}_k]
&= 1,
  &k&\in\{1,2\},
\label{eq:ex-prod-1a}
\\[4pt]
\frac{\kappa_r}{\kappa_{\mathrm{prod}}\,[P^{2\to a}_0]}\,[P^{2\to a}_k]
&= [S^{b\to 2}_k],
  &k&\in\{1,2,3\},
\label{eq:ex-prod-2a}
\\[4pt]
\frac{\kappa_r}{\kappa_{\mathrm{prod}}\,[P^{2\to b}_0]}\,[P^{2\to b}_k]
&= [S^{a\to 2}_k],
  &k&\in\{1,2,3\},
\label{eq:ex-prod-2b}
\\[4pt]
\frac{\kappa_r}{\kappa_{\mathrm{prod}}\,[P^{3\to b}_0]}\,[P^{3\to b}_k]
&= 1,
  &k&\in\{1,2\}.
\label{eq:ex-prod-3b}
\end{align}
 
\noindent
Equations~\eqref{eq:ex-prod-1a} and~\eqref{eq:ex-prod-3b}
say that $P^{1\to a}$ and $P^{3\to b}$ are uniform at steady state
(no incoming sum messages, since variables~$1$ and~$3$ each
touch only one factor).  Equations~\eqref{eq:ex-prod-2a}
and~\eqref{eq:ex-prod-2b} express the product-message
update for variable~$2$: the message from variable~$2$ to
factor~$a$ is proportional to the sum message arriving from
factor~$b$, and vice versa, matching the BP product-message
equation~\eqref{eq:ex-sum-b2} with $\mathrm{ne}(2)\setminus a = \{b\}$.
 
Together, equations~\eqref{eq:ex-sum-a1}--\eqref{eq:ex-prod-3b}
are exactly the sum-product BP fixed-point equations on the
chain $1$--$a$--$2$--$b$--$3$ with mixed cardinalities
$(|E_1|,|E_2|,|E_3|)=(2,3,2)$, confirming Theorem~5.4 on
this instance.
 
\paragraph{Remark (effect of mixed cardinalities on CRN size).}
In the uniform-cardinality case $|E_i|=K$ for all~$i$,
a chain on $n$ variables produces
$4(n{-}1)$ bundles each of size $K{+}1$, giving
$4(n{-}1)(K{+}1)$ species and
$4(n{-}1)K + (n{-}1)K^2 + (n{-}1)K^2$ reactions
(recycling + sum-production + product-production).
With mixed cardinalities, these counts depend on the
per-variable $|E_i|$ and the per-factor product
$\prod_{i\in\mathrm{ne}(j)}|E_i|$.
For our instance, the asymmetry $|E_2|=3$ versus
$|E_1|=|E_3|=2$ is visible in the reaction tables above:
factor~$a$ generates $|E_1|\cdot|E_2|=6$ sum-production
reactions per sum bundle, while a hypothetical uniform
binary chain would generate $2\cdot 2=4$.
\end{example}
The example below shows that the reduction deletes entire message bundles associated with retractable tendrils while leaving the core computation unchanged. On this small instance, the reduced CRN therefore has fewer species and reactions but still computes the same BP information on the surviving variables.

\begin{example}[Reduction of the mixed-cardinality chain]\label{ex:chain-mixed-reduction}
 
We continue Example~\ref{ex:chain-mixed} and apply SP--B retractions
to the chain $1$--$a$--$2$--$b$--$3$, tracking the induced CRN
morphism $\mathcal{N}(r)$ at each step.
 
 
\paragraph{Poset structure.}
The associated poset $A$ has five elements
$\{1,2,3,a,b\}$ with cover relations
$1<a$, $2<a$, $2<b$, $3<b$.
We check for linear and colinear points.
 
\smallskip\noindent
\emph{Variable~$1$} has upper covers $\{a\}$.
Candidate $1^{\uparrow}=a$: for every $x\ge 1$ in $A$,
the only such $x$ is $a$, and $a\ge a$.
So the condition $(\forall x\ge 1)\; x\ge a$ holds.
Therefore \textbf{variable~$1$ is linear} with $1^{\uparrow}=a$.
 
\smallskip\noindent
\emph{Variable~$3$} has upper covers $\{b\}$.
By the same argument, \textbf{variable~$3$ is linear} with $3^{\uparrow}=b$.
 
\smallskip\noindent
\emph{Variable~$2$} has upper covers $\{a,b\}$.
For candidate $2^{\uparrow}=a$: we need $b\ge a$, but $b\not\ge a$.
For candidate $2^{\uparrow}=b$: we need $a\ge b$, but $a\not\ge b$.
So variable~$2$ is \textbf{not linear}.
 
\smallskip\noindent
No factor is colinear in the original poset (both $a$ and $b$
are binary factors, and neither scope is contained in the other).
 
We choose to retract variable~$1$.
 
 
\paragraph{Step~1: Linear retraction of variable~$1$.}
 
\subparagraph{Factor-graph level.}
Removing variable~$1$ from the poset gives $A' = \{2,3,a,b\}$ with
relations $2<a$, $2<b$, $3<b$.
Factor $a$ now has $\mathrm{ne}(a)=\{2\}$ (it has become unary).
By the SP--B linear-retraction rule (Appendix \ref{sec:SP-psi-rules}), the Hamiltonians on
survivors are unchanged, so the surviving factor tables are:
\begin{align*}
\psi'_a(k_2)
&= \sum_{k_1\in E_1} \psi_a(k_1, k_2)
 = \psi_a(1,k_2) + \psi_a(2,k_2),
\qquad k_2\in\{1,2,3\},
\\
\psi'_b &= \psi_b \quad\text{(unchanged)}.
\end{align*}
The marginalization over $k_1$ arises because variable~$1$ had no
other incident factors, so $P^{1\to a}$ was uniform at steady state
(cf.\ equation~\eqref{eq:ex-prod-1a}).
 
\subparagraph{CRN morphism $\mathcal{N}(r_1)=(\phi_0,\phi_1,\psi)$.}
 
\begin{itemize}
\item[$\phi_0$:] \textbf{Delete} all bundles involving variable~$1$:
  \[
  S^{a\to 1},\quad P^{1\to a}.
  \]
  This removes $3+3=6$ species:
  $S^{a\to 1}_0, S^{a\to 1}_1, S^{a\to 1}_2$ and
  $P^{1\to a}_0, P^{1\to a}_1, P^{1\to a}_2$.
 
\item[$\phi_1$:] \textbf{Delete} all reactions involving deleted species:
  \begin{itemize}
  \item Recycling of $S^{a\to 1}$: 2 reactions
    ($S^{a\to 1}_1\to S^{a\to 1}_0$,\;
     $S^{a\to 1}_2\to S^{a\to 1}_0$).
  \item Recycling of $P^{1\to a}$: 2 reactions
    ($P^{1\to a}_1\to P^{1\to a}_0$,\;
     $P^{1\to a}_2\to P^{1\to a}_0$).
  \item Sum-production of $S^{a\to 1}$: 6 reactions
    (the full $S^{a\to 1}$ table from Example~\ref{ex:chain-mixed}).
  \item Sum-production of $S^{a\to 2}$: all 6 reactions are deleted,
    because they use $P^{1\to a}_{k_1}$ as catalyst.
  \item Product-production of $P^{1\to a}$: 2 reactions
    ($P^{1\to a}_0\to P^{1\to a}_1$,\;
     $P^{1\to a}_0\to P^{1\to a}_2$).
  \end{itemize}
  Total deleted: $2+2+6+6+2=\mathbf{18}$ reactions.
 
\item[$\psi$:] \textbf{Update} sum-production rates for $S^{a\to 2}$.
  After removing $P^{1\to a}$ from the catalytic set, the factor class
  $J_a$ now contains only the sum bundle $S^{a\to 2}$ (since
  $S^{a\to 1}$ was deleted), and $V_{J_a}=\{P^{2\to a}\}$.
  The sum bundle $S^{a\to 2}$ has $\mathrm{Cat}(S^{a\to 2})=\varnothing$
  (no remaining catalytic neighbors), and the factor table is now the
  unary table $\psi'_a(k_2)=\sum_{k_1}\psi_a(k_1,k_2)$.
  New sum-production reactions for $S^{a\to 2}$
  (no catalysts, one reaction per state):
  \begin{align*}
  S^{a\to 2}_0
    &\xrightarrow{\;\psi'_a(1)\;}
    S^{a\to 2}_1,
  \\
  S^{a\to 2}_0
    &\xrightarrow{\;\psi'_a(2)\;}
    S^{a\to 2}_2,
  \\
  S^{a\to 2}_0
    &\xrightarrow{\;\psi'_a(3)\;}
    S^{a\to 2}_3,
  \end{align*}
  where $\psi'_a(k_2) = \psi_a(1,k_2)+\psi_a(2,k_2)$.
  These replace the 6 deleted $S^{a\to 2}$ reactions with 3 new ones.
  All other surviving reactions (recycling of $S^{a\to 2}$, $S^{b\to 2}$,
  $S^{b\to 3}$, $P^{2\to a}$, $P^{2\to b}$, $P^{3\to b}$;
  sum-production of $S^{b\to 2}$, $S^{b\to 3}$;
  product-production of $P^{2\to a}$, $P^{2\to b}$, $P^{3\to b}$)
  retain their original rate constants.
\end{itemize}
 
\subparagraph{Intermediate CRN $\Gamma'$ after Step~1.}
 
\begin{center}
\renewcommand{\arraystretch}{1.15}
\begin{tabular}{@{}llcl@{}}
\toprule
Bundle & Type & $|E_B|$ & Species \\
\midrule
$S^{a\to 2}$ & sum     & 3 & $S^{a\to 2}_0,\; S^{a\to 2}_1,\; S^{a\to 2}_2,\; S^{a\to 2}_3$ \\
$S^{b\to 2}$ & sum     & 3 & $S^{b\to 2}_0,\; S^{b\to 2}_1,\; S^{b\to 2}_2,\; S^{b\to 2}_3$ \\
$S^{b\to 3}$ & sum     & 2 & $S^{b\to 3}_0,\; S^{b\to 3}_1,\; S^{b\to 3}_2$ \\[3pt]
$P^{2\to a}$ & product & 3 & $P^{2\to a}_0,\; P^{2\to a}_1,\; P^{2\to a}_2,\; P^{2\to a}_3$ \\
$P^{2\to b}$ & product & 3 & $P^{2\to b}_0,\; P^{2\to b}_1,\; P^{2\to b}_2,\; P^{2\to b}_3$ \\
$P^{3\to b}$ & product & 2 & $P^{3\to b}_0,\; P^{3\to b}_1,\; P^{3\to b}_2$ \\
\bottomrule
\end{tabular}
\end{center}
 
\[
\boxed{
\text{After Step~1:}\quad
22\text{ species},\quad
54-18+3 = 39 \text{ reactions\footnotemark}.
}
\]
\footnotetext{We deleted 18 reactions and added 3 new sum-production reactions
for the updated unary factor $\psi'_a$.  Equivalently: 
$20-4=16$ recycling $+$ $24-12+3=15$ sum-production $+$ $10-2=8$
product-production $= 39$.}
 
 
\paragraph{Step~2: Colinear retraction of unary factor~$a$.}
 
\subparagraph{Poset analysis of $A'=\{2,3,a,b\}$.}
After Step~1, factor~$a$ is unary with $\mathrm{ne}(a)=\{2\}$.
Its scope $\{2\}$ is contained in the scope $\{2,3\}$ of factor~$b$.
Therefore \textbf{factor~$a$ is colinear} with $a^{\downarrow}=2$
(the unique lower cover of $a$).
 
We check the conditions of Lemma~B.3.
Let $B = A'\setminus\{a\} = \{2,3,b\}$.
In $B$, variable~$2$ has a single upper cover $b^{\uparrow}=b$.
So variable~$2$ is linear in $B$, and Lemma~B.3 applies.
 
\subparagraph{Factor-table update (Lemma~B.3).}
With $a^{\downarrow}=2$ and $b^{\uparrow}=b$, the updated factor table is
\[
\psi'_b(k_2,k_3)
\;\propto\;
\psi_b(k_2,k_3)\cdot
\frac{\displaystyle\sum_{z\in E_a:\, z_{a^{\downarrow}}=k_2} \psi_a(z)}
     {\psi_{a^{\downarrow}}(k_2)}.
\]
Since factor~$a$ is already unary on variable~$2$
(i.e.\ $E_a=E_2$ and $z_{a^{\downarrow}}=z=k_2$),
the sum has a single term:
\[
\sum_{z\in E_a:\, z_2=k_2} \psi'_a(z) = \psi'_a(k_2).
\]
Furthermore, the ``variable region'' $\psi_{a^{\downarrow}}$
is absent here (variable~$2$ carries no separate unary factor
in the current poset $A'$; equivalently, $\psi_2\equiv 1$).
So the update simplifies to
\begin{equation}\label{eq:updated-psi-b}
\psi'_b(k_2,k_3) = \psi_b(k_2,k_3)\cdot\psi'_a(k_2)
= \psi_b(k_2,k_3)\cdot\bigl[\psi_a(1,k_2)+\psi_a(2,k_2)\bigr].
\end{equation}
This is still a $3\times 2$ table (unchanged shape), but the
entries have been modulated row-wise by the marginalized
unary factor from variable~$1$.
 
\subparagraph{Reduced poset.}
After removing factor~$a$, the poset is $A''=\{2,3,b\}$ with
relations $2<b$, $3<b$.  This is the factor graph consisting of a
single factor~$b$ on variables $\{2,3\}$ with the updated table
$\psi'_b$.
 
\subparagraph{CRN morphism $\mathcal{N}(r_2)=(\phi_0,\phi_1,\psi)$.}
 
\begin{itemize}
\item[$\phi_0$:] \textbf{Delete} all bundles involving factor~$a$
  (as sender or receiver):
  \[
  S^{a\to 2},\quad P^{2\to a}.
  \]
  This removes $4+4=8$ species.
 
\item[$\phi_1$:] \textbf{Delete} all reactions involving deleted species:
  \begin{itemize}
  \item Recycling of $S^{a\to 2}$: 3 reactions.
  \item Recycling of $P^{2\to a}$: 3 reactions.
  \item Sum-production of $S^{a\to 2}$: 3 reactions
    (the uncatalyzed reactions from Step~1).
  \item Product-production of $P^{2\to a}$: 3 reactions
    ($P^{2\to a}_0 + S^{b\to 2}_k \to \cdots$).
  \item Product-production of $P^{2\to b}$:
    These used $S^{a\to 2}_k$ as catalyst, so all 3 are deleted.
  \end{itemize}
  Total deleted: $3+3+3+3+3=\mathbf{15}$ reactions.
 
\item[$\psi$:] \textbf{Update} rates.
 
  \emph{Sum-production of $S^{b\to 2}$}:
  Factor class $J_b$ now has $V_{J_b}=\{P^{2\to b}, P^{3\to b}\}$
  (as before), but the table changes from $\psi_b$ to $\psi'_b$.
  The 6 reactions for $S^{b\to 2}$ retain their structure but their
  rates change from $\psi_b(k_2,k_3)$ to $\psi'_b(k_2,k_3)$:
  \begin{center}
  \renewcommand{\arraystretch}{1.3}
  \begin{tabular}{@{}ccc@{}}
  \toprule
  $(k_2,k_3)$ & Reaction & Updated rate \\
  \midrule
  $(1,1)$ & $S^{b\to 2}_0 + P^{3\to b}_1 \to S^{b\to 2}_1 + P^{3\to b}_1$ & $\psi'_b(1,1)$ \\
  $(1,2)$ & $S^{b\to 2}_0 + P^{3\to b}_2 \to S^{b\to 2}_1 + P^{3\to b}_2$ & $\psi'_b(1,2)$ \\
  $(2,1)$ & $S^{b\to 2}_0 + P^{3\to b}_1 \to S^{b\to 2}_2 + P^{3\to b}_1$ & $\psi'_b(2,1)$ \\
  $(2,2)$ & $S^{b\to 2}_0 + P^{3\to b}_2 \to S^{b\to 2}_2 + P^{3\to b}_2$ & $\psi'_b(2,2)$ \\
  $(3,1)$ & $S^{b\to 2}_0 + P^{3\to b}_1 \to S^{b\to 2}_3 + P^{3\to b}_1$ & $\psi'_b(3,1)$ \\
  $(3,2)$ & $S^{b\to 2}_0 + P^{3\to b}_2 \to S^{b\to 2}_3 + P^{3\to b}_2$ & $\psi'_b(3,2)$ \\
  \bottomrule
  \end{tabular}
  \end{center}
 
  \emph{Sum-production of $S^{b\to 3}$}:
  Same structure, updated rates:
  \begin{center}
  \renewcommand{\arraystretch}{1.3}
  \begin{tabular}{@{}ccc@{}}
  \toprule
  $(k_2,k_3)$ & Reaction & Updated rate \\
  \midrule
  $(1,1)$ & $S^{b\to 3}_0 + P^{2\to b}_1 \to S^{b\to 3}_1 + P^{2\to b}_1$ & $\psi'_b(1,1)$ \\
  $(2,1)$ & $S^{b\to 3}_0 + P^{2\to b}_2 \to S^{b\to 3}_1 + P^{2\to b}_2$ & $\psi'_b(2,1)$ \\
  $(3,1)$ & $S^{b\to 3}_0 + P^{2\to b}_3 \to S^{b\to 3}_1 + P^{2\to b}_3$ & $\psi'_b(3,1)$ \\
  $(1,2)$ & $S^{b\to 3}_0 + P^{2\to b}_1 \to S^{b\to 3}_2 + P^{2\to b}_1$ & $\psi'_b(1,2)$ \\
  $(2,2)$ & $S^{b\to 3}_0 + P^{2\to b}_2 \to S^{b\to 3}_2 + P^{2\to b}_2$ & $\psi'_b(2,2)$ \\
  $(3,2)$ & $S^{b\to 3}_0 + P^{2\to b}_3 \to S^{b\to 3}_2 + P^{2\to b}_3$ & $\psi'_b(3,2)$ \\
  \bottomrule
  \end{tabular}
  \end{center}
 
  \emph{Product-production of $P^{2\to b}$}:
  After deletion of $S^{a\to 2}$, the catalytic set of $P^{2\to b}$
  becomes $\mathrm{Cat}(P^{2\to b})=\varnothing$
  (variable~$2$ now touches only factor~$b$).
  The product-production reactions become uncatalyzed:
  \[
  P^{2\to b}_0 \xrightarrow{\;\kappa_{\mathrm{prod}}\;} P^{2\to b}_1,
  \quad
  P^{2\to b}_0 \xrightarrow{\;\kappa_{\mathrm{prod}}\;} P^{2\to b}_2,
  \quad
  P^{2\to b}_0 \xrightarrow{\;\kappa_{\mathrm{prod}}\;} P^{2\to b}_3.
  \]
  These replace the 3 deleted catalyzed reactions with 3 new uncatalyzed ones.
 
  All other surviving reactions (recycling of $S^{b\to 2}$, $S^{b\to 3}$,
  $P^{2\to b}$, $P^{3\to b}$; product-production of $P^{3\to b}$)
  are unchanged.
\end{itemize}
 
 
\paragraph{Final reduced CRN $\Gamma''=\mathcal{N}(A'',H'')$.}
 
The reduced factor graph has variables $\{2,3\}$ with state spaces
$E_2=\{1,2,3\}$, $E_3=\{1,2\}$, a single factor $b$ with
$\mathrm{ne}(b)=\{2,3\}$, and updated table $\psi'_b$ from
equation~\eqref{eq:updated-psi-b}.
The compiled CRN has four bundles:
 
\begin{center}
\renewcommand{\arraystretch}{1.15}
\begin{tabular}{@{}llcl@{}}
\toprule
Bundle & Type & $|E_B|$ & Species \\
\midrule
$S^{b\to 2}$ & sum     & 3 & $S^{b\to 2}_0,\; S^{b\to 2}_1,\; S^{b\to 2}_2,\; S^{b\to 2}_3$ \\
$S^{b\to 3}$ & sum     & 2 & $S^{b\to 3}_0,\; S^{b\to 3}_1,\; S^{b\to 3}_2$ \\[3pt]
$P^{2\to b}$ & product & 3 & $P^{2\to b}_0,\; P^{2\to b}_1,\; P^{2\to b}_2,\; P^{2\to b}_3$ \\
$P^{3\to b}$ & product & 2 & $P^{3\to b}_0,\; P^{3\to b}_1,\; P^{3\to b}_2$ \\
\bottomrule
\end{tabular}
\end{center}
 
\paragraph{Complete reaction list of $\Gamma''$.}
 
\noindent\emph{Recycling (8 reactions):}
\begin{alignat*}{3}
&S^{b\to 2}_1 \xrightarrow{\kappa_r} S^{b\to 2}_0,\quad
&&S^{b\to 2}_2 \xrightarrow{\kappa_r} S^{b\to 2}_0,\quad
&&S^{b\to 2}_3 \xrightarrow{\kappa_r} S^{b\to 2}_0,
\\[2pt]
&S^{b\to 3}_1 \xrightarrow{\kappa_r} S^{b\to 3}_0,\quad
&&S^{b\to 3}_2 \xrightarrow{\kappa_r} S^{b\to 3}_0,
\\[2pt]
&P^{2\to b}_1 \xrightarrow{\kappa_r} P^{2\to b}_0,\quad
&&P^{2\to b}_2 \xrightarrow{\kappa_r} P^{2\to b}_0,\quad
&&P^{2\to b}_3 \xrightarrow{\kappa_r} P^{2\to b}_0,
\\[2pt]
&P^{3\to b}_1 \xrightarrow{\kappa_r} P^{3\to b}_0,\quad
&&P^{3\to b}_2 \xrightarrow{\kappa_r} P^{3\to b}_0.
\end{alignat*}
 
\noindent\emph{Sum-production of $S^{b\to 2}$ (6 reactions, catalyst $P^{3\to b}$):}
\begin{center}
\renewcommand{\arraystretch}{1.3}
\begin{tabular}{@{}ccc@{}}
\toprule
$(k_2,k_3)$ & Reaction & Rate \\
\midrule
$(1,1)$ & $S^{b\to 2}_0 + P^{3\to b}_1 \to S^{b\to 2}_1 + P^{3\to b}_1$ & $\psi'_b(1,1)$ \\
$(1,2)$ & $S^{b\to 2}_0 + P^{3\to b}_2 \to S^{b\to 2}_1 + P^{3\to b}_2$ & $\psi'_b(1,2)$ \\
$(2,1)$ & $S^{b\to 2}_0 + P^{3\to b}_1 \to S^{b\to 2}_2 + P^{3\to b}_1$ & $\psi'_b(2,1)$ \\
$(2,2)$ & $S^{b\to 2}_0 + P^{3\to b}_2 \to S^{b\to 2}_2 + P^{3\to b}_2$ & $\psi'_b(2,2)$ \\
$(3,1)$ & $S^{b\to 2}_0 + P^{3\to b}_1 \to S^{b\to 2}_3 + P^{3\to b}_1$ & $\psi'_b(3,1)$ \\
$(3,2)$ & $S^{b\to 2}_0 + P^{3\to b}_2 \to S^{b\to 2}_3 + P^{3\to b}_2$ & $\psi'_b(3,2)$ \\
\bottomrule
\end{tabular}
\end{center}
 
\noindent\emph{Sum-production of $S^{b\to 3}$ (6 reactions, catalyst $P^{2\to b}$):}
\begin{center}
\renewcommand{\arraystretch}{1.3}
\begin{tabular}{@{}ccc@{}}
\toprule
$(k_2,k_3)$ & Reaction & Rate \\
\midrule
$(1,1)$ & $S^{b\to 3}_0 + P^{2\to b}_1 \to S^{b\to 3}_1 + P^{2\to b}_1$ & $\psi'_b(1,1)$ \\
$(2,1)$ & $S^{b\to 3}_0 + P^{2\to b}_2 \to S^{b\to 3}_1 + P^{2\to b}_2$ & $\psi'_b(2,1)$ \\
$(3,1)$ & $S^{b\to 3}_0 + P^{2\to b}_3 \to S^{b\to 3}_1 + P^{2\to b}_3$ & $\psi'_b(3,1)$ \\
$(1,2)$ & $S^{b\to 3}_0 + P^{2\to b}_1 \to S^{b\to 3}_2 + P^{2\to b}_1$ & $\psi'_b(1,2)$ \\
$(2,2)$ & $S^{b\to 3}_0 + P^{2\to b}_2 \to S^{b\to 3}_2 + P^{2\to b}_2$ & $\psi'_b(2,2)$ \\
$(3,2)$ & $S^{b\to 3}_0 + P^{2\to b}_3 \to S^{b\to 3}_2 + P^{2\to b}_3$ & $\psi'_b(3,2)$ \\
\bottomrule
\end{tabular}
\end{center}
 
\noindent\emph{Product-production of $P^{2\to b}$ (3 reactions, no catalysts):}
\[
P^{2\to b}_0 \xrightarrow{\;\kappa_{\mathrm{prod}}\;} P^{2\to b}_1,
\qquad
P^{2\to b}_0 \xrightarrow{\;\kappa_{\mathrm{prod}}\;} P^{2\to b}_2,
\qquad
P^{2\to b}_0 \xrightarrow{\;\kappa_{\mathrm{prod}}\;} P^{2\to b}_3.
\]
 
\noindent\emph{Product-production of $P^{3\to b}$ (2 reactions, no catalysts):}
\[
P^{3\to b}_0 \xrightarrow{\;\kappa_{\mathrm{prod}}\;} P^{3\to b}_1,
\qquad
P^{3\to b}_0 \xrightarrow{\;\kappa_{\mathrm{prod}}\;} P^{3\to b}_2.
\]
 
\paragraph{Reduced CRN summary.}
\[
\boxed{
\begin{aligned}
&\text{Species:} & &4+3+4+3 = \mathbf{14}
  &&\quad\text{(vs.\ 28 original, $\mathbf{50\%}$ reduction)} \\
&\text{Recycling reactions:} & &\mathbf{10} \\
&\text{Sum-bundle production:} & &6+6=\mathbf{12} \\
&\text{Product-bundle production:} & &3+2=\mathbf{5} \\
&\text{Total reactions:} & &\mathbf{27}
  &&\quad\text{(vs.\ 54 original, $\mathbf{50\%}$ reduction)}
\end{aligned}
}
\]
 
 
\paragraph{Steady-state equations of the reduced CRN.}
At a positive steady state of $\Gamma''$:
\begin{align}
\frac{\kappa_r}{[S^{b\to 2}_0]}\,[S^{b\to 2}_k]
&= \sum_{k_3=1}^{2}\psi'_b(k,k_3)\,[P^{3\to b}_{k_3}],
  &k&\in\{1,2,3\},
\label{eq:red-sum-b2}
\\[4pt]
\frac{\kappa_r}{[S^{b\to 3}_0]}\,[S^{b\to 3}_k]
&= \sum_{k_2=1}^{3}\psi'_b(k_2,k)\,[P^{2\to b}_{k_2}],
  &k&\in\{1,2\},
\label{eq:red-sum-b3}
\\[4pt]
\frac{\kappa_r}{\kappa_{\mathrm{prod}}\,[P^{2\to b}_0]}\,[P^{2\to b}_k]
&= 1,
  &k&\in\{1,2,3\},
\label{eq:red-prod-2b}
\\[4pt]
\frac{\kappa_r}{\kappa_{\mathrm{prod}}\,[P^{3\to b}_0]}\,[P^{3\to b}_k]
&= 1,
  &k&\in\{1,2\}.
\label{eq:red-prod-3b}
\end{align}
 
\noindent
Equations~\eqref{eq:red-prod-2b} and~\eqref{eq:red-prod-3b} say that
both product bundles are uniform at steady state (both variables now
touch a single factor).  The sum-bundle equations~\eqref{eq:red-sum-b2}
and~\eqref{eq:red-sum-b3} implement BP on the single-factor graph
$\{2,3,b\}$ with the updated table $\psi'_b$, which encodes the
influence of the retracted variable~$1$ through the
factor $\psi'_a(k_2) = \psi_a(1,k_2)+\psi_a(2,k_2)$
absorbed into $\psi'_b$.
 
\paragraph{BP fixed-point preservation.}
By Lemma~6.3, the marginals of variables~$2$ and~$3$ computed from the
reduced CRN $\Gamma''$ coincide with those from the original CRN~$\Gamma$.
Concretely, at any positive steady state:
\[
\frac{[S^{b\to 2}_k]}{\sum_{k'} [S^{b\to 2}_{k'}]}
\bigg|_{\Gamma''}
\;=\;
\frac{[S^{b\to 2}_k]}{\sum_{k'} [S^{b\to 2}_{k'}]}
\bigg|_{\Gamma},
\qquad k\in\{1,2,3\},
\]
and similarly for the messages involving variable~$3$.
The retracted variable~$1$ is no longer represented in the reduced CRN,
but its effect on the surviving marginals is faithfully encoded in the
updated rate constants $\psi'_b$.
 
\paragraph{Remark (reduction on the chain graph is maximal).}
The reduced poset $A''=\{2,3,b\}$ has no linear or colinear points:
variable~$2$ has upper covers $\{b\}$ and variable~$3$ has upper covers
$\{b\}$, but neither is linear because $b$ has two lower covers
($b^{\downarrow}=\{2,3\}$, not a single element).
Factor $b$ is binary, so it is not colinear.
Therefore $A''$ is the \emph{core} of the original poset $A$.
By symmetry, retracting variable~$3$ first (instead of variable~$1$)
would yield the same core $\{1,2,a\}$ up to relabeling, with an
analogous updated factor table $\psi'_a(k_1,k_2)=
\psi_a(k_1,k_2)\cdot[\psi_b(k_2,1)+\psi_b(k_2,2)]$.

\paragraph{Remark (Reconstructing beliefs on retracted variable nodes)}
The retraction of a linear or colinear point induces a mapping that enables the reconstruction of beliefs for the retracted node from the node onto which it was retracted. By recursively applying this mapping, one can reconstruct the belief of any retracted node by following the path from a node in the reduced factor graph back to the node in question. This implies that from the steady-state concentrations of the reduced CRN, one can infer the concentrations of the original CRN. This reconstruction can be implemented using any CRN designed for filtering, such as Napp–Adams Belief Propagation (BP) CRNs \cite{NIPS2013_e5e63da7}, or specific CRNs proposed for the E-step of the Expectation-Maximization (EM) algorithm for learning Hidden Markov Models (HMMs) \cite{10.1098/rsif.2022.0877}. Doing so is arguably less computationally expensive than executing full BP and it's associated CRN ODEs, particularly when only a few retracted belief nodes need to be reconstructed for downstream decision-making. A hypothetical example of this efficiency could arise when designing a synthetic chemotactic or phototactic cell. Such a cell might use a message-passing CRN to perform inference on sensory inputs, which then drive decision-making through competition between chemical concentrations, similar to the mechanisms described in \cite{tang:hal-05576927}. In this setting, retracted factor nodes that are not directly associated with observations possess factors that remain invariant to new sensory data. Consequently, the reconstruction mapping outlined above can be structurally enforced by design. These specific concentrations, which may play a critical role in the cell's decision-making, can thus be computed highly efficiently: first by performing the ODE associated to BP exclusively on the reduced CRN, and then by executing a targeted reconstruction only for the necessary nodes.
\end{example}

\end{document}